\newcommand{\anonpublic}[2]{%
  \ificlrfinal
    #2%
  \else
    #1%
  \fi
}
\def\eqref#1{equation~\ref{#1}}
\def\1{\bm{1}}
\DeclareMathAlphabet{\mathsfit}{\encodingdefault}{\sfdefault}{m}{sl}
\SetMathAlphabet{\mathsfit}{bold}{\encodingdefault}{\sfdefault}{bx}{n}
\title{Stabilizing Policy Gradients for \\ Sample-Efficient Reinforcement Learning \\ in LLM Reasoning}
\author{Luckeciano C. Melo$^{1,2}$\enskip\enskip
Alessandro Abate$^{2}$\enskip\enskip
Yarin Gal$^{1}$ \\
$^1$OATML, University of Oxford\enskip\enskip$^2$OXCAV, University of Oxford\\
\texttt{luckeciano.carvalho.melo@cs.ox.ac.uk}
}
\newcommand{\anonurl}{\url{https://anonymous.4open.science/r/capo-stable-gradients}}
\newcommand{\publicurl}{\url{https://github.com/luckeciano/stable-pg-llm}}
\newcommand{\codeurl}{\anonpublic{\anonurl}{\publicurl}}
\theoremstyle{plain}
\newtheorem{theorem}{Theorem}[section]
\newtheorem{lemma}{Lemma}[section]
\newtheorem{proposition}{Proposition}[section]
\theoremstyle{definition}
\newtheorem{assumption}{Assumption}[section]
\newcommand{\norm}[1]{\left\lVert #1 \right\rVert}
\DeclareMathOperator{\op}{op}
\begin{document}

\maketitle
\begin{abstract}
Reinforcement Learning, particularly through policy gradient methods, has played a central role in enabling reasoning capabilities of Large Language Models. However, the optimization stability of policy gradients in this setting remains understudied. As a result, existing implementations often resort to conservative hyperparameter choices to ensure stability, which requires more training samples and increases computational costs. Hence, developing models for reliably tracking the underlying optimization dynamics and leveraging them into training enables more sample-efficient regimes and further unleashes scalable post-training. We address this gap by formalizing the stochastic optimization problem of policy gradients with explicit consideration of second-order geometry. We propose a tractable computational framework that tracks and leverages curvature information during policy updates. We further employ this framework to design interventions in the optimization process through data selection. The resultant algorithm, Curvature-Aware Policy Optimization (CAPO), identifies samples that contribute to unstable updates and masks them out. Theoretically, we establish monotonic improvement guarantees under realistic assumptions. On standard math reasoning benchmarks, we empirically show that CAPO ensures stable updates under aggressive learning regimes where baselines catastrophically fail. With minimal intervention (rejecting fewer than $8\%$ of tokens), CAPO achieves up to 30$\times$ improvement in sample efficiency over standard GRPO for LLM reasoning.
\end{abstract}

\begin{wrapfigure}{r}{0.5\textwidth}
\vspace{-65pt}
\begin{center}
\includegraphics[width=0.49\textwidth]{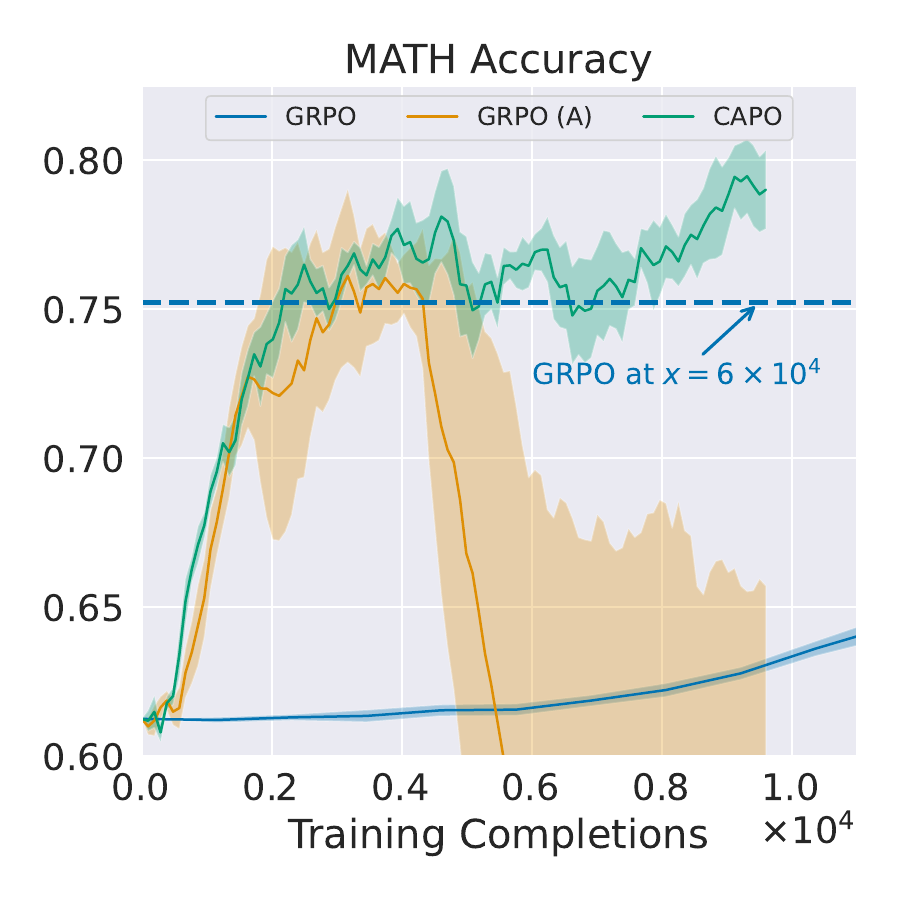}
\captionsetup{skip=-1pt}
\caption{\textbf{Accuracy on MATH dataset from different RL methods.} 
CAPO (ours) achieves $30\times$ greater sample efficiency under an aggressive (A) update regime 
(higher learning rate, smaller batch size), whereas GRPO suffers policy collapse.}
\label{fig:math_accuracy}
\end{center}
\end{wrapfigure}

\section{Introduction}

The emergence of reasoning capabilities in Large Language Models (LLMs) represents a major shift in AI research. Beyond language understanding, reasoning has become a core ingredient of widely deployed systems \citep{openai2024openaio1card, comanici2025gemini25pushingfrontier}, enabling applications such as mathematical problem solving \citep{deepseek-math}, code generation \citep{shojaee2023executionbased}, and agentic workflows \citep{yao2023react}. This progress is primarily attributed to advances in scaling Reinforcement Learning (RL) techniques for LLM post-training, particularly policy gradient methods such as PPO \citep{schulman2017proximalpolicyoptimizationalgorithms}, GRPO \citep{deepseek-math}, and variants \citep{yu2025dapoopensourcellmreinforcement, liu2025understandingr1zeroliketrainingcritical}. These methods enabled LLMs to develop behaviors for autonomous chain-of-thought reasoning \citep{gandhi2025cognitivebehaviorsenableselfimproving} and to effectively scale test-time compute \citep{setlur2025scalingtesttimecomputeverification}.

Despite its success in LLM fine-tuning and other decision-making tasks \citep{baloonnature, mnih2015humanlevel}, RL still faces fundamental challenges that limit its broader practicality and scalability. In particular, policy gradients suffer from optimization instabilities  driven by the non-stationary nature of the RL objective and the high variance of estimates \citep{castanyer2025stablegradientsstablelearning}. These problems are further compounded by the known pathologies of training deep networks \citep{pmlr-v28-pascanu13, NIPS2017_d9fc0cdb}. These factors lead to several undesired consequences, such as catastrophic updates and policy collapse \citep{dohare2023overcoming}, plasticity loss \citep{NEURIPS2024_ce7984e3}, sample inefficiency \citep{Kaiser2020Model}, and hyperparameter sensitivity \citep{Henderson_Islam_Bachman_Pineau_Precup_Meger_2018}. As a result, the optimization dynamics of RL remain an active area of research from both theoretical and empirical standpoints \citep{NEURIPS2022_718d02a7, pmlr-v162-lyle22a, pmlr-v151-vaswani22a} .

Perhaps due to the recency of the topic, the optimization dynamics of RL \textit{in the context of LLMs} remains underexplored. These challenges persist in the LLM setting and may be even more pronounced, since training involves billion-parameter models with very deep architectures and sampling horizons that can extend arbitrarily. In practice, current implementations of RL for LLMs typically rely on conservative hyperparameters to ensure stability, such as low learning rates (e.g., $3\times10^{-6}$ or less) and large batch sizes (e.g. thousands of generations per policy update) \citep{sheng2024hybridflow, openr1, Guo2025}. These choices substantially increase the number of LLM generations required for learning, raising computational costs. Therefore, stabilizing these algorithms in sample-efficient regimes is crucial to further scale RL for LLM reasoning.

One promising direction is to design algorithms that explicitly model second-order geometry in the optimization landscape and incorporate this information into policy updates. In this work, we formalize the RL optimization problem accounting for curvature terms, namely the Hessian of the objective and the Fisher Information Matrix of the policy distribution. Building on this formulation, we introduce a computationally and numerically tractable model of optimization dynamics that approximates this curvature information. This model enables continuous monitoring of gradient and curvature estimates during policy updates, scales to billion-parameter models and provides analytical expressions for these quantities, which facilitate a systematic analysis of the learning dynamics.

We further leverage this optimization model to \textit{plan} the next policy gradient step\footnote{In this work, “model” refers to the proposed computational model of curvatures and “policy” to the LLM. “Model gradients” are computed under the former, while “policy gradients” denote the true LLM gradients.}. It allows \textit{anticipating} policy updates that potentially induce sudden shifts in the objective or policy distribution --  often associated with unstable optimization behavior -- and intervening before taking the actual step in the LLM. We propose a simple data selection mechanism as intervention: we identify particular samples that heavily contribute to these abrupt shifts and mask them out of the policy gradient estimation. We refer to this method as \textit{Curvature-Aware Policy Optimization} (CAPO). 

We theoretically establish monotonic policy improvement guarantees under CAPO with practical assumptions. We then empirically validate CAPO on standard math reasoning benchmarks, showing that it yields stable optimization even in regimes with aggressive updates, where standard RL algorithms suffer catastrophic updates and policy collapse. As a result, CAPO achieves up to $30\times$ improvement in sample efficiency compared to GRPO in the standard regime, as presented in Figure \ref{fig:math_accuracy}. Lastly, we show that its interventions are minimal, typically rejecting fewer than $8\%$ of the tokens, with negligible computational overhead. 
\section{Related Work}
\textbf{RL \& LLMs.} The use of RL techniques to optimize LLMs has been an active area of research in recent years. Early work focused on RL from Human Feedback (RLHF), which optimizes policies toward modeled human preferences \citep{ziegler2019finetuning, NEURIPS2020_1f89885d, 10.5555/3600270.3602281}. More recently, RL for LLM reasoning has gained significant attention for its effectiveness in enabling autonomous chain-of-thought reasoning \citep{gandhi2025cognitivebehaviorsenableselfimproving} and in scaling test-time compute \citep{setlur2025scalingtesttimecomputeverification}. This breakthrough was initially driven by the seminal works of the OpenAI o-series \citep{openai2024openaio1card} and DeepSeek-R1 \citep{Guo2025}, which popularized GRPO \citep{deepseek-math}. Since then, the research community has studied several aspects of the training pipeline \citep{zhang2025surveyreinforcementlearninglarge}, including alternative objectives \citep{roux2025taperedoffpolicyreinforcestable, hu2025reinforceefficientrlhfalgorithm}, sampling mechanisms \citep{yu2025dapoopensourcellmreinforcement}, reward shapings \citep{yang2024qwen25mathtechnicalreportmathematical}, and different training configurations \citep{liu2025understandingr1zeroliketrainingcritical, kimiteam2025kimik15scalingreinforcement}. Our work fits within this line of research by investigating RL for LLMs from an optimization dynamics perspective, proposing a model of the optimization landscape and using it to design stable policy gradient updates.

\textbf{Optimization Dynamics in RL.} The non-convex and non-stationary nature of RL training has motivated a large body of work on understanding and stabilizing optimization dynamics in RL agents. In the context of policy gradients, prior research has investigated the role of baselines \citep{NEURIPS2022_718d02a7}, variance reduction techniques \citep{10.5555/2980539.2980735}, and emergent pathologies such as plasticity or capacity loss \citep{pmlr-v202-sokar23a, klein2024plasticitylossdeepreinforcement} and policy collapse \citep{dohare2023overcoming}. Beyond these analyses, past literature has also developed conservative policy optimization methods \citep{pmlr-v37-schulman15, schulman2017proximalpolicyoptimizationalgorithms,pmlr-v70-achiam17a}. While this line of work is extensive and evolving, we primarily highlight the recent contribution of \citeauthor{castanyer2025stablegradientsstablelearning}, which, like ours, examines the stabilization of policy gradients through curvature-informed interventions. Their methodology, however, differs: they apply natural gradients with K-FAC \citep{NEURIPS2023_6a6679e3} in general deep RL environments, whereas our work develops a new approximation of curvature that is tractable at the scale of LLMs and is incorporates it into optimization through data selection.

\textbf{Improving  RL for LLM Reasoning.}
In the context of LLM research, a nascent but growing literature explores improvements to RL training for reasoning. These works typically propose heuristics that target specific problems observed during training—for example, noisy gradient estimates, limited output diversity, or large policy updates. Common approaches include rethinking advantage estimation \citep{liu2025understanding, ahmadian-etal-2024-back}, controlling policy entropy \citep{yu2025dapoopensourcellmreinforcement,cui2025entropymechanismreinforcementlearning}, and bounding advantage estimates or log-likelihoods \cite{yang2025dcpodynamicclippingpolicy, yang2025AR_Lopti}. In contrast, our work takes a more principled approach. Rather than introducing heuristics to address isolated issues, we develop a framework based on second-order stochastic optimization that fundamentally explains these instabilities and addresses them in a unified manner.

\section{Preliminaries}

\textbf{Problem Statement.} We formulate the problem of next-token generation as a Markov Decision Process (MDP), defined by the tuple $\mathcal{M} = (\mathcal{S}, \mathcal{A}, \mathcal{P}, R, \rho_0, \gamma, T)$, in which $\mathcal{S}$ is a state space, $\mathcal{A}$ is an action space, $\mathcal{P} : \mathcal{S} \times \mathcal{A} \to \Delta(\mathcal{S})$ a transition function, $R : \mathcal{S} \times \mathcal{A} \to [-r_{\text{bound}}, +r_{\text{bound}}]$ a bounded reward function, $\rho_0 : \mathcal{S} \to \Delta(\mathcal{S})$ an initial state distribution, $\gamma \in [0, 1]$ a discount factor, and $T$ the length of the horizon. In the LLM setting, let $\mathcal{V}$ be a token vocabulary and $L \in \mathbb{N}$ a maximum sequence length, including both prompt and generated tokens. $\mathcal{S} = \bigcup_{n=0}^{L} \mathcal{V}^n$ is the set of all finite sequences, with each state $s_t \in \mathcal{S}$  representing the concatenation of the prompt and the tokens generated up to time $t$, with total length at most $L$. $\mathcal{A}$ is the space spanned by $\mathcal{V}$: at each step, the policy selects a token $a_t \in \mathcal{V}$. $\mathcal{P}$ is governed by autoregressive sampling and takes the form of a trivial deterministic function $s_{t+1} = s_t \circ a_t$, where $\circ$ denotes concatenation. The initial state distribution $\rho_0$ specifies a distribution over prompts. During policy optimization, one typically optimizes a parameterized LLM $\pi_{\boldsymbol{\theta}} : \mathcal{S} \times \mathcal{A} \to \Delta(\mathcal{A})$, with the objective of maximizing the expected cumulative reward over the generated sequence:
\begin{equation}\label{eq:obj}
    J(\boldsymbol{\theta}) = \mathbb{E}_{\tau \sim \pi_{\boldsymbol{\theta}}} \Big[ \sum_{t=0}^T \gamma^t R(s_t, a_t) \Big],
\end{equation} 
where $\tau$ denotes a trajectory, $s_0 \sim \rho_0(s_0)$, $a_t \sim \pi_{\boldsymbol{\theta}}(a_t \mid s_t)$, and $s_{t+1} = s_t \circ a_t$.

\textbf{Policy Gradient} (PG) methods optimize a stochastic policy by differentiating $J(\boldsymbol{\theta})$ with respect to the policy parameters \citep{Williams:92} and can be written as \citep{10.5555/3009657.3009806}:
\begin{equation}
    \nabla_{\boldsymbol{\theta}} J(\boldsymbol{\theta}) 
= \mathbb{E}_{\tau \sim \pi_{\boldsymbol{\theta}}} \left[ \sum_{t=0}^T \gamma^{t} \nabla_{\boldsymbol{\theta}} \log \pi_{\boldsymbol{\theta}}(a_t \mid s_t)\, R(s_t, a_t) \right].
\end{equation}
This expectation can be estimated via Monte Carlo sampling under the current policy $\pi_{\boldsymbol{\theta}}$. However, such estimates often have high variance. A standard remedy is to subtract a baseline $b(s_t)$ which leaves the gradient unbiased while reducing variance. In practice, this is typically done by replacing the reward with an estimate of the advantage function $A(s_t, a_t)$. For the rest of this work, we will assume the advantage version of this objective.

\textbf{Group Relative Policy Optimization}  \citep{deepseek-math} is a widely used method for RL in LLMs. Akin to PPO \citep{schulman2017proximalpolicyoptimizationalgorithms}, it optimizes a surrogate objective that employs off-policy correction \cite{Kakade2002ApproximatelyOA} with a clipping strategy to prevent large deviations:
\begin{align}\label{eq:grpo}
    J_{\text{GRPO}}(\boldsymbol{\theta}) = \mathbb{E}_{\tau \sim \pi_{\beta}} \Big[ \frac{1}{|\tau_{i}|}\sum_{t=0}^{|\tau_{i}|}&\min\Big(r_{\theta}(s_t, a_t), \text{clip}(r_{\boldsymbol{\theta}}(s_t, a_t), 1 - \epsilon, 1 + \epsilon)\Big) A^{\text{GRPO}}(s_{t}, a_{t})  \nonumber \\
    -& \beta \mathcal{D}_{\text{KL}}\!\left(\pi_{\boldsymbol{\theta}}(\cdot \mid s_t) \,\|\, \pi_{\text{base}}(\cdot \mid s_t)\right) \Big],
\end{align}
where $r_{\boldsymbol{\theta}}(s_t, a_t) = \frac{\pi_{\boldsymbol{\theta}}(a_t \mid s_t)}{\pi_{\beta}(a_t \mid s_t)}$ and $\pi_{\beta}$ is the sampling policy. The KL divergence term acts as a regularizer that penalizes deviation from $\pi_{\text{base}}$, the initial LLM. In contrast to standard PG methods, GRPO draws samples in groups: for each prompt $s_0 \sim \rho_0(s_{0})$, it generates a group of trajectories $\{\tau_i\}_{i=1}^G \sim \pi_\beta$. Contributions from all state-action pairs of a trajectory are averaged (rather than discounted), which effectively assume $\gamma = 1$ with per-trajectory normalization. Finally, the advantage estimator is defined as:
\begin{equation}
\hat{A}^{\text{GRPO}}(s_t, a_t)
\;=\;
\frac{ \hat{R}(\tau) - \bar{R} }{ \hat{\sigma}_R + \varepsilon },
\quad \bar{R} \;=\; \frac{1}{G} \sum_{i=1}^G \hat{R}(\tau_i),
\quad
\hat{\sigma}_R \;=\; \sqrt{ \frac{1}{G} \sum_{i=1}^G \big(\hat{R}(\tau_i) - \bar{R}\big)^2 }, 
\end{equation}
where $\hat{R}(\tau)$ is the return for trajectory $\tau$ and $\varepsilon$ is a small constant for numerical stability.

\section{Modeling the Optimization Landscape with Second-Order Geometry}

In this section, we develop a model of the optimization landscape. We formulate the reinforcement learning (RL) optimization problem with policy gradients by explicitly incorporating second-order geometric information. Building on this formulation, we introduce a tractable computational model that approximates the role of curvature during learning. Our hypothesis is that by providing a simple but effective approximation of second-order gradients, one could track sudden shifts in the objective or policy and anticipate potentially unstable updates.

\textbf{The Higher-Order Objective}. Consider the objective function $J(\boldsymbol{\theta})$ as in Equation \ref{eq:obj}. After an update step $\Delta\boldsymbol{\theta}$, the new objective $J(\boldsymbol{\theta} + \Delta\boldsymbol{\theta})$ is given by the following Taylor expansion:
\begin{equation}\label{eq:taylor}
    J(\boldsymbol{\theta} + \Delta\boldsymbol{\theta}) =
J(\boldsymbol{\theta}) +
\underbrace{ \nabla_{\boldsymbol{\theta}}J(\boldsymbol{\theta})^\top \Delta\boldsymbol{\theta} + \tfrac12\,\Delta\boldsymbol{\theta}^\top H(\boldsymbol{\theta})\,\Delta\boldsymbol{\theta} }_{\displaystyle m_{H}(\Delta\boldsymbol{\theta})}
+ \mathcal{O}(\norm{\Delta\boldsymbol{\theta}}^{3}),
\end{equation}
where $H(\boldsymbol{\theta})$ denotes the Hessian of the objective. Equation \ref{eq:taylor} holds under a Lipschitz continuous Hessian (see Assumption \ref{assump:LipH}), with a detailed proof in Appendix \ref{app:hessian}. As the cubic term may be negative, we can establish a guaranteed lower bound $J(\boldsymbol{\theta} + \Delta\boldsymbol{\theta})  \geq J(\boldsymbol{\theta}) + m_{H}(\Delta\boldsymbol{\theta}) - \mathcal{O}(\norm{\Delta\boldsymbol{\theta}}^{3})$. In practice, the cubic term is often negligible, and we approximate the objective change by $m_{H}(\Delta\boldsymbol{\theta})$. Crucially, standard gradient ascent ignores the Hessian contribution, which can lead to a decrease in the objective for non-convex problems (such as RL) when this contribution is sufficiently negative.

\textbf{The Fisher Information Matrix}. The Hessian captures the local curvature of the objective function. In RL, however, the objective is non-stationary, and what ultimately matters is how updates change the policy distribution. For instance, an update may produce only a small change in the objective while inducing a large shift in the policy. This alters how future trajectories are sampled and may destabilize learning. Therefore, it is necessary to track the geometry of the policy distribution directly, which is what the Fisher Information Matrix (FIM) enables. One can show that the directional curvature under the Fisher geometry approximates the average KL divergence between a policy and before and after a small step $\Delta\boldsymbol{\theta}$:
\begin{equation}
    \bar D_{\mathrm{KL}}(\pi_{\boldsymbol{\theta}} \,\|\, \pi_{\boldsymbol{\theta}+\Delta\boldsymbol{\theta}})
= \underbrace{\tfrac12 \Delta\boldsymbol{\theta}^\top F(\boldsymbol{\theta}) \Delta\boldsymbol{\theta}}_{m_{F}(\Delta\boldsymbol{\theta})}
+ \mathcal{O}(\norm{\Delta\boldsymbol{\theta}}^3),
\end{equation}
where $\bar D_{\mathrm{KL}}(\pi_{\boldsymbol{\theta}} \,\|\, \pi_{\boldsymbol{\theta}+\Delta\boldsymbol{\theta}})
:= \mathbb{E}_{s\sim d_\pi}\!\left[
\mathrm{KL}\big(\pi_{\boldsymbol{\theta}}(\cdot\mid s)\,\|\,\pi_{\boldsymbol{\theta}+\Delta\boldsymbol{\theta}}(\cdot\mid s)\big)
\right]$, and $F(\boldsymbol{\theta})
:= \mathbb{E}_{s\sim d_\pi,\ a\sim \pi_{\boldsymbol{\theta}}(\cdot\mid s)}
\Big[ \nabla_{\boldsymbol{\theta}} \log \pi_{\boldsymbol{\theta}}(a\mid s) \,
\nabla_{\boldsymbol{\theta}} \log \pi_{\boldsymbol{\theta}}(a\mid s)^\top \Big]$ is the FIM. The proof is in Appendix \ref{app:fisher}. Similarly to the Hessian case, the cubic term is often negligible and we focus on $m_{F}(\Delta\boldsymbol{\theta})$. One can further show that enforcing a trust region $\bar D_{\mathrm{KL}}(\pi_{\boldsymbol{\theta}} \,\|\, \pi_{\boldsymbol{\theta}+\Delta\boldsymbol{\theta}}) \leq \delta$ during policy updates leads to monotonic improvement of the true objective, given sufficiently small $\delta$ \citep{pmlr-v37-schulman15}.

Ultimately, we aim to design a model that approximates $m_{H}(\Delta\boldsymbol{\theta})$ and $m_{F}(\Delta\boldsymbol{\theta})$ without explicitly computing gradients or curvature terms in the high-dimensional parameter space of the LLM. This approach can be viewed as a form of model-based RL, but from a different perspective: whereas prior work typically models components of the MDP, such as the dynamics or reward function, we instead model the optimization process itself, which allows us to plan gradient estimates.

\subsection{Computational Model}\label{sec:compmodel}
For an LLM with $d$ parameters, both Hessian and FIM are $d \times d$ matrices, which is intractable for billion-size parameter spaces. Even approximations such as K-FAC \citep{NEURIPS2023_6a6679e3} would incur unfeasible memory cost. Therefore, we need to devise a computational model that is scalable and effectively provides curvature information to stabilize policy gradients. Next, we describe our methodology.

\textbf{Last-Layer Model}. Since modeling the full Hessian or Fisher Information Matrix (FIM) is infeasible, we restrict attention to curvature in a parameter subspace. To this end, we adopt a simple last-layer approach. An LLM is a softmax policy over the token vocabulary $ \pi_{\boldsymbol{\theta}}(a \mid s) = \tfrac{\exp(f_\theta(s,a))}{\sum_{a'} \exp(f_\theta(s,a'))}$, where $f_\theta(s, a) \in \mathbb{R}$ are the logits produced by the network. Letting $f_{\boldsymbol{\theta}}(s_{t})$ denote the full logits vector, with $\boldsymbol{\theta} = (\bar{\boldsymbol{\theta}}, \boldsymbol{\psi})$, we represent the pre-softmax layer as $f_{\boldsymbol{\theta}}(s_{t}) = Wh_{\bar{\boldsymbol{\theta}}}(s_{t})$, where $W \in \mathbb{R}^{K\times d_{i}}$ is the last-layer weight matrix, $K = \text{dim}(\mathcal{V})$, and $h_{\bar{\boldsymbol{\theta}}}(s_{t}) \in \mathbb{R}^{d_{i}}$. We then define $\boldsymbol{\psi} = \text{vec}(W) \in \mathbb{R}^{K \cdot d_{i}}$. In Appendix \ref{app:model}, we show that the last-layer model gradient $\Tilde{g}(\boldsymbol{\psi})$ of the objective in Equation \ref{eq:obj} is:
\begin{equation}\label{eq:gradient}
\tilde{g}(\boldsymbol{\psi}) = \mathbb{E}_{\tau \sim \pi_{\boldsymbol{\theta}}} \left[ \sum_{t=0}^T \gamma^{t} A(s_t, a_t) (e_a - \pi_{\boldsymbol{\theta}}(s_{t})) \otimes h_{\bar{\boldsymbol{\theta}}}(s_{t})\right],
\end{equation}
where $\otimes$ denotes a Kronecker product, $e_{a_{t}} \in \mathbb{\mathcal{V}}$ is the one-hot action vector $e_{a_{t}} = \mathbf{1}\{a = a_{t}\}$, and $\pi_{\boldsymbol{\theta}}(s_{t}))$ the policy distribution vector. We use the vectorization operation $\text{vec}(\cdot)$ only for convenience and it does not introduce new assumptions. In this work, we use a tilde superscript to denote \textit{model-based} gradients and curvatures, in contrast to the actual \textit{policy} gradient $g(\boldsymbol{\theta}): = \nabla_{\boldsymbol{\theta}}J(\boldsymbol{\theta})$.

Under the last-layer model, the Hessian of the objective takes the following form: 
\begin{equation}\label{eq:hessian}
   \tilde{H}(\boldsymbol{\psi}) 
= \mathbb{E}_{\tau \sim \pi_{\boldsymbol{\theta}}} \!\left[ \sum_{t=0}^{T}\gamma^{t}
A(s,a) \Big( (e_a - \pi_{\boldsymbol{\theta}} (s_{t}))(e_a - \pi_{\boldsymbol{\theta}} (s_{t}))^\top - F(s_{t}) \Big) \otimes h_{\bar{\boldsymbol{\theta}}}(s_{t}) h_{\bar{\boldsymbol{\theta}}}(s_{t})^\top
\right],
\end{equation}
where $F(s_{t})$ is the FIM for state $s_{t}$. In Lemma \ref{lemma:hessianpg}, we show that this expression can be estimated via samples. Similarly, the last-layer approximation of the FIM is:
\begin{equation}\label{eq:fisher}
   \tilde{F}(\boldsymbol{\psi})
   = \mathbb{E}_{\tau \sim \pi_{\boldsymbol{\theta}}} \!\left[ 
   \big( (e_{a_t} - \pi_{\boldsymbol{\theta}}(s_{t})) (e_{a_t} - \pi_{\boldsymbol{\theta}}(s_{t}))^\top \big) 
   \otimes h_{\bar{\boldsymbol{\theta}}}(s_{t}) h_{\bar{\boldsymbol{\theta}}}(s_{t})^\top
   \right].
\end{equation}
\textbf{Computing \textit{Directional} Curvatures.} Even with the approximated model, the curvature matrices have dimension $Kd_i \times Kd_i$. For current LLMs, where $K > 10^5$ and $d_i > 10^3$, fully materializing these matrices is computationally infeasible. Fortunately, our goal is to approximate the shifts in the objective and policy, $m_H(\Delta\boldsymbol{\theta})$ and $m_F(\Delta\boldsymbol{\theta})$. Thus, we only need to approximate the \textit{directional} curvatures $\Delta\boldsymbol{\theta}^\top C(\boldsymbol{\theta})\,\Delta\boldsymbol{\theta}$, without explicitly materializing the full Hessian or FIM. In Appendix \ref{app:compdir}, we present a mechanism that enables this computation without constructing large tensors. Our method requires storing only $\mathcal{O}(Kd_i)$ tensors per state--action sample, instead of the $\mathcal{O}((Kd_i)^2)$ entries of the full curvature matrices.

\textbf{Exploiting Gradient Sparsity}. We further reduce complexity by exploiting the structure of gradients arising from LLM generation. Standard LLM decoding relies on selective sampling methods (e.g., top-k, nucleus sampling) \cite{wolf2020huggingfacestransformersstateoftheartnatural} to improve generation quality, as most of the probability mass is concentrated on a small subset $k$ of the vocabulary \citep{fan-etal-2018-hierarchical, Holtzman2020The}, typically with $k < 100$. Consequently, only $k$ tokens have non-zero probability at each generation step, which implies that only the $k * d_{i}$ parameters of the last-layer weight matrix $W$ associated with these logits yield non-zero gradients. We therefore store and operate these gradients in sparse form. This sparsity also applies to the computation of directional curvatures in Equations \ref{eq:fishersamples} and \ref{eq:hessiansamples}, as these reduce to dot products involving sparse vectors (e.g., $(e_{a_t} - \pi_{\boldsymbol{\theta}}(s_{t})$ and the model-based update step $\Delta{\boldsymbol{\theta}}$). Naturally, as we estimate gradients with more samples, the representation expands to cover all $\tilde{k}$ tokens generated, but typically $\tilde{k} << K$. For instance, our experiments presented $\tilde{k} < 10^4$. Overall, the memory and dot product complexity reduce to $\mathcal{O}(\tilde{k} \cdot d_{i})$.

\textbf{Modeling the Step $\Delta\boldsymbol{\theta}$.} A final design choice concerns how to model the planned update steps, $\Delta\boldsymbol{\theta}$. Under the last-layer model, these steps take the form $\Delta\boldsymbol{\psi}$. This choice essentially determines how we represent the optimizer. A simple option is to model the update as a stochastic gradient descent (SGD) step, $\Delta \boldsymbol{\psi} = \alpha \tilde{g}$, where $\alpha$ is the learning rate. Alternatively, we can match the LLM optimizer, which in our case is Adam \citep{kingma2015adam},  i.e., $\Delta\boldsymbol{\psi} = \alpha \tfrac{\hat{p}_{t}}{\sqrt{\hat{q}_{t}} + \epsilon}$, where $\hat{p}_{t}$ and $\hat{q}_{t}$ are the bias-corrected first and second moment estimates of the gradient.

\section{Curvature-Aware Policy Optimization}\label{sec:capo}

We may now compute the objective and policy shifts under our model as:
\begin{equation}\label{eq:shifts}
    m_{H}(\boldsymbol{\psi}) = \tilde{g}(\boldsymbol{\psi})^\top \Delta \boldsymbol{\psi} + \tfrac{1}{2}\Delta \boldsymbol{\psi}^\top \tilde{H}(\boldsymbol{\psi})\Delta \boldsymbol{\psi}, \quad m_{F}(\boldsymbol{\psi}) = \tfrac{1}{2}\Delta \boldsymbol{\psi}^\top \tilde{F}(\boldsymbol{\psi})\Delta \boldsymbol{\psi}, 
\end{equation}
and estimate $m_{H}$ and $m_{F}$ via samples following the methodology described in the subsection \ref{sec:compmodel}. We now design an algorithm that intervenes in the optimization of the underlying LLM policy using the model-based updates. Since our objective is to stabilize policy gradients in sample-efficient regimes, a natural choice is to construct an algorithm that follows the principles of trust-region methods \citep{pml1Book}. We implement this idea through a rejection sampling mechanism. 

Given a batch $\mathcal{B}$ of collected trajectories, 
we partition it into disjoint subsets $b_{i} \subset \mathcal{B}$. For each subset, we compute a proposed step $\Delta\boldsymbol{\psi}_{i}$ and evaluate the shifts defined in Equation \ref{eq:shifts}. We then accept a subset if it satisfies the (local) trust-region constraints $\delta_{F}$, $\delta_{H}$, and $\delta_{H}^{high}$:
\begin{equation}
    \delta_{H} \le m_H(\Delta\psi_i)\;\le\; \delta_{H}^{high},\qquad 
m_F(\Delta\psi_i)\;\le\;\delta_F.
\end{equation}
The accepted subsets are subsequently used to compute the gradient update of the LLM policy. Conceptually, this mechanism is analogous to token masking. Overall, this data selection mechanism is simple, computationally inexpensive, and flexible, as it can be applied at different granularities, including tokens, sentences, groups, or full batches. The formal pseudocode is provided in Algorithm \ref{alg:capo}. Next, we establish theoretical results for monotonic policy improvement under CAPO.

\begin{theorem}[Monotonic improvement under CAPO]\label{thm:capo-certified-main}
Fix thresholds $\delta_H>0$ and $\delta_F>0$. Let $\mathcal{B}$ be a batch of sampled trajectories. Split $\mathcal{B}$ into disjoint $N$ subsets  $b_{i} \subset \mathcal{B}$, and propose candidate subset updates $\{\Delta\theta_i\}_{i:N}$. Retain those satisfying:
\begin{equation}
    m_H(\Delta\theta_i)\;\ge\; \delta_{H} = \omega+\tfrac12Mr^2,\qquad 
m_F(\Delta\theta_i)\;\le\;\delta_F,
\end{equation}
with $\omega >  0$ and $M$, $r$ defined as in Assumption \ref{ass:curvature-and-steps}. Let $\mathcal{B}_{acc}$ denote the superset of the B accepted subsets, and define the aggregated update:$    \Delta\theta \;=\; \frac{1}{B}\sum_{i\in\mathcal{B}_{acc}}\Delta\theta_i$.
Then, for two policies $\pi_{\boldsymbol{\theta}}$ and $\pi_{\boldsymbol{\theta} + \Delta \boldsymbol{\theta}}$, with $|A^\pi(s,a)|\le \epsilon$, we obtain:
\begin{equation}
    J(\pi_{\theta+\Delta\theta}) - J(\pi_\theta) 
\;\ge\; \omega \;-\; C\,\sqrt{\delta_F}, \quad C=\frac{2\gamma}{(1-\gamma)^2}\,\epsilon\,\sqrt{2}. 
\end{equation} 
Thus choosing $\omega \ge C\sqrt{\delta_F}$ guarantees monotonic improvement: $J(\pi_{\theta+\Delta\theta}) \ge J(\pi_\theta)$.
\end{theorem}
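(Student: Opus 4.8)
The plan is to bound the true objective change by splitting it into an \emph{on-policy improvement} term, which the accepted-subset constraint $m_H(\Delta\theta_i)\ge\delta_H$ is designed to make positive, and an \emph{off-policy mismatch penalty} arising from the shift in the state-visitation distribution $d_\pi$, which the constraint $m_F(\Delta\theta_i)\le\delta_F$ controls. Concretely, I would establish $J(\pi_{\theta+\Delta\theta})-J(\pi_\theta)\ge[\text{surrogate improvement}]-[\text{KL-controlled penalty}]$, lower-bound the first term by $\omega$ via the Taylor expansion of Equation~\ref{eq:taylor}, and upper-bound the second by $C\sqrt{\delta_F}$ via the Fisher geometry. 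The external tools I would invoke are the performance-difference lemma \citep{Kakade2002ApproximatelyOA} together with the trust-region surrogate bound \citep{pmlr-v37-schulman15,pmlr-v70-achiam17a}, and Pinsker's inequality to convert a KL bound into a total-variation bound.

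\textbf{The penalty term $-C\sqrt{\delta_F}$.} Starting from the performance-difference identity and replacing the new state distribution by the old one $d_{\pi_\theta}$, the trust-region bound yields $J(\pi_{\theta+\Delta\theta})-J(\pi_\theta)\ge \tfrac{1}{1-\gamma}\mathbb{E}_{s\sim d_{\pi_\theta},\,a\sim\pi_{\theta+\Delta\theta}}[A^\pi(s,a)]-\tfrac{2\gamma\epsilon}{(1-\gamma)^2}\bar{D}_{\mathrm{TV}}$, where I use $|A^\pi(s,a)|\le\epsilon$ to bound the per-state advantage and $\bar{D}_{\mathrm{TV}}$ is the expected total variation between the two policies under $d_{\pi_\theta}$. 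Pinsker's inequality then gives $\bar{D}_{\mathrm{TV}}\le\sqrt{2\,\bar{D}_{\mathrm{KL}}}$ (up to the TV/KL normalization), and since $\bar{D}_{\mathrm{KL}}(\pi_\theta\,\|\,\pi_{\theta+\Delta\theta})=m_F(\Delta\theta)+\mathcal{O}(\norm{\Delta\theta}^3)$, it remains to bound the \emph{aggregate} $m_F(\Delta\theta)$. Here the key observation is that $m_F(\cdot)=\tfrac12(\cdot)^\top\tilde{F}(\cdot)$ is a convex quadratic form (the Fisher matrix is positive semidefinite), so by Jensen's inequality $m_F\!\big(\tfrac1B\sum_i\Delta\theta_i\big)\le\tfrac1B\sum_i m_F(\Delta\theta_i)\le\delta_F$. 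Combining, the penalty is at most $\tfrac{2\gamma\epsilon}{(1-\gamma)^2}\sqrt{2\delta_F}=C\sqrt{\delta_F}$.

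\textbf{The improvement term $+\omega$.} For the surrogate improvement, the Taylor lower bound of Equation~\ref{eq:taylor}, valid under the Lipschitz-Hessian Assumption~\ref{assump:LipH}, gives that the on-policy objective change dominates $m_H(\Delta\theta)$ minus a cubic remainder; under Assumption~\ref{ass:curvature-and-steps}, which bounds the step norm by $r$ and the curvature by $M$, this remainder is at most $\tfrac12Mr^2$. Since each accepted subset satisfies $m_H(\Delta\theta_i)\ge\delta_H=\omega+\tfrac12Mr^2$, each individually certifies improvement of at least $\omega$. To transfer this to the aggregated step, I would exploit that the first-order part $\tilde{g}^\top\Delta\theta=\tfrac1B\sum_i\tilde{g}^\top\Delta\theta_i$ aggregates linearly, while the quadratic part is controlled by the same $M,r$ bookkeeping, so that $m_H(\Delta\theta)\ge\omega$ survives for the averaged update.

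\textbf{Main obstacle.} The crux is the aggregation step and the reconciliation of the two curvature models. The Fisher side aggregates cleanly because $m_F$ is convex, but $m_H$ is built from a generally \emph{indefinite} Hessian, so no Jensen-type inequality applies directly; controlling the cross terms $\Delta\theta_i^\top\tilde{H}\Delta\theta_j$ in the expansion of $m_H\!\big(\tfrac1B\sum_i\Delta\theta_i\big)$ is where the operator-norm bound $M$ and the step bound $r$ must do real work. A second, subtler point is that $m_H$ is the quadratic model of the \emph{true} objective $J$, whereas the surrogate-improvement term carries the on-policy advantage under the \emph{old} state distribution; the gap between these is itself a state-distribution-shift term that must be shown to be absorbed into the $C\sqrt{\delta_F}$ penalty rather than double-counted. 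Once this interaction is handled, adding the two bounds yields $J(\pi_{\theta+\Delta\theta})-J(\pi_\theta)\ge\omega-C\sqrt{\delta_F}$, and the choice $\omega\ge C\sqrt{\delta_F}$ makes the right-hand side nonnegative, giving monotonic improvement.
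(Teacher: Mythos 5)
Your overall decomposition is the same as the paper's: the CPO/TRPO surrogate bound $J(\pi')\ge L_\pi(\pi')-C\sqrt{\bar D_{\mathrm{KL}}}$ with $C=\tfrac{2\gamma}{(1-\gamma)^2}\epsilon\sqrt{2}$ (the paper cites \citet{pmlr-v70-achiam17a} for this; your Pinsker step is where the $\sqrt2$ comes from), and on the Fisher side the convexity/Jensen argument $m_F\big(\tfrac1B\sum_i\Delta\theta_i\big)\le\tfrac1B\sum_i m_F(\Delta\theta_i)\le\delta_F$, which is exactly the paper's ``Fisher global bound.'' That half of your argument is correct and complete.

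The genuine gap is on the Hessian side, and it is precisely the step you flag as the ``main obstacle'' and then leave undone. First, you misattribute the role of the $\tfrac12 Mr^2$ slack: you claim the cubic Taylor remainder of Equation~\ref{eq:taylor} is ``at most $\tfrac12Mr^2$'' under Assumption~\ref{ass:curvature-and-steps}, but $M$ bounds the Hessian operator norm, not the third derivative; the cubic remainder is governed by the Lipschitz constant of Assumption~\ref{assump:LipH}, and the paper simply treats it as negligible ($o(\norm{\Delta\boldsymbol{\theta}}^2)$). The slack in $\delta_H$ exists instead to pay for aggregation: the paper expands $m_H\big(\tfrac1B\sum_i\Delta\theta_i\big)$, splits the quadratic form into diagonal and cross terms, and applies Cauchy--Schwarz with $\|H\|_{\op}\le M$ and $\|\Delta\theta_i\|\le r$ to obtain $m_H(\Delta\theta)\ge \tfrac1B\sum_i m_H(\Delta\theta_i)-Mr^2\big(1-\tfrac1B\big)$, which combined with the acceptance threshold gives $m_H(\Delta\theta)\ge\omega$. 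This computation is the actual content of the theorem --- it is what dictates the specific form of $\delta_H$ --- and your proposal contains only the observation that the cross terms ``must do real work,'' not the work itself. (Carrying it out also reveals that the slack needed is $Mr^2$, not $\tfrac12Mr^2$; the paper's statement and its own proof disagree by this factor of two.) Second, the double-counting worry you raise --- that $m_H$ models the true objective $J$ while the surrogate improvement is taken under the old state distribution --- is real, and the paper resolves it in Lemma~\ref{lem:trpo-gap}: one shows that the map $\boldsymbol{\theta}'\mapsto L_{\pi_{\boldsymbol{\theta}}}(\pi_{\boldsymbol{\theta}'})$ has gradient $g(\boldsymbol{\theta})$ and Hessian $H(\boldsymbol{\theta})$ at $\boldsymbol{\theta}'=\boldsymbol{\theta}$, so $m_H(\Delta\boldsymbol{\theta})$ is exactly the second-order expansion of the surrogate itself (not of $J$ via Equation~\ref{eq:taylor}), and no additional distribution-shift term arises to be absorbed. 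You correctly identify both issues but resolve neither, so the proof is incomplete at exactly the points where the threshold $\delta_H=\omega+\tfrac12Mr^2$ and the no-double-counting claim must be justified.
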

The proof is provided in Appendix \ref{app:capo}. Observe that $\delta_{H}^{high}$ is not required to establish monotonic improvement. Nonetheless, it serves as a safeguard against overly aggressive steps. In practice, introducing this upper cap reduces the observed $M$ and $r$, which allows the use of smaller $\delta_{H}$. Finally, we note that Theorem \ref{thm:capo-certified-main} relies on the true objective and policy shifts, whereas in practice these quantities are approximated using our model.
\section{Experiments and Discussion}\label{sec:results}
In this section, we evaluate (i) how the proposed computational model captures the optimization landscape, and (ii) how this information can be used to stabilize RL optimization dynamics through CAPO. Our central hypothesis is that an inexpensive yet effective approximation of second-order geometry can track unstable shifts in the objective and policy, and that this information can in turn be used to stabilize aggressive update regimes, leading to more sample-efficient RL in LLMs. 
\setlength{\textfloatsep}{0pt}
\setlength{\intextsep}{0pt}%

\textbf{Experimental Setup.} We consider a standard RL setup for finetuning LLMs on reasoning tasks. Our implementation builds on the Open-R1 open-source project \citep{openr1}, and we maximize an accuracy-based reward. Following prior work, we fine-tune a Qwen2.5-Math-7B LLM \citep{qwen2025qwen25technicalreport} on mathematical reasoning questions. Our primary evaluation metric is accuracy, but we also track optimization-related quantities such as gradient and curvature statistics and token rejection rates. Since our goal is to evaluate sample efficiency, we report all metrics as a function of the number of training completions (i.e., LLM \textit{trajectories} generated). Appendix \ref{app:implementation} provides additional details regarding implementation, hyperparameters, and compute resources\footnote{We release our code at \codeurl.}.

\textbf{Datasets \& Benchmarks.}  We train our policies on the MATH dataset \citep{hendrycksmath2021}. For evaluation, we consider eight benchmarks: GSM8K \citep{cobbe2021gsm8k}, MATH500 \citep{lightman2023lets}, OlympiadBench \citep{he2024olympiadbench}, MinervaMath \citep{lewkowycz2022solvingquantitativereasoningproblems}, GPQA:Diamond \citep{rein2023gpqagraduatelevelgoogleproofqa}, AMC23, AIME24, and AIME25. Most of these benchmarks contain mathematical questions at varying levels (high school, graduate, and olympiad), while GPQA focuses on general STEM-related problems. For simplicity, we report the average performance across all eight benchmarks, which we refer to as “TEST” in the results.

\textbf{Comparison Methods}. We evaluate our approach against two GRPO variants. The first corresponds to the standard “conservative” update regime implemented in the Open-R1 codebase. The second, which we denote “GRPO (A),” adopts a more aggressive regime intended to improve sample efficiency, with a learning rate $5\times$ higher and a batch size $12\times$ smaller. This matches the configuration used by CAPO. We also evaluate Dr.GRPO \citep{liu2025understanding} and REINFORCE \citep{Williams:92}, both under the same aggressive regime.

\textbf{CAPO operationalization.} CAPO optimizes the same objective as GRPO, but leverages the data selection mechanism introduced in Section~\ref{sec:capo}. For a fair comparison, we use the same hyperparameters as GRPO (A). We implement CAPO with token-level selection, i.e., proposing steps $\Delta \boldsymbol{\psi}_{i}$ and rejecting samples on a per-token basis. Finally, we model optimization steps using Adam.

 \begin{figure}[t]
 \setlength{\abovecaptionskip}{0pt} 
\begin{center}
\includegraphics[width=1.0\textwidth]{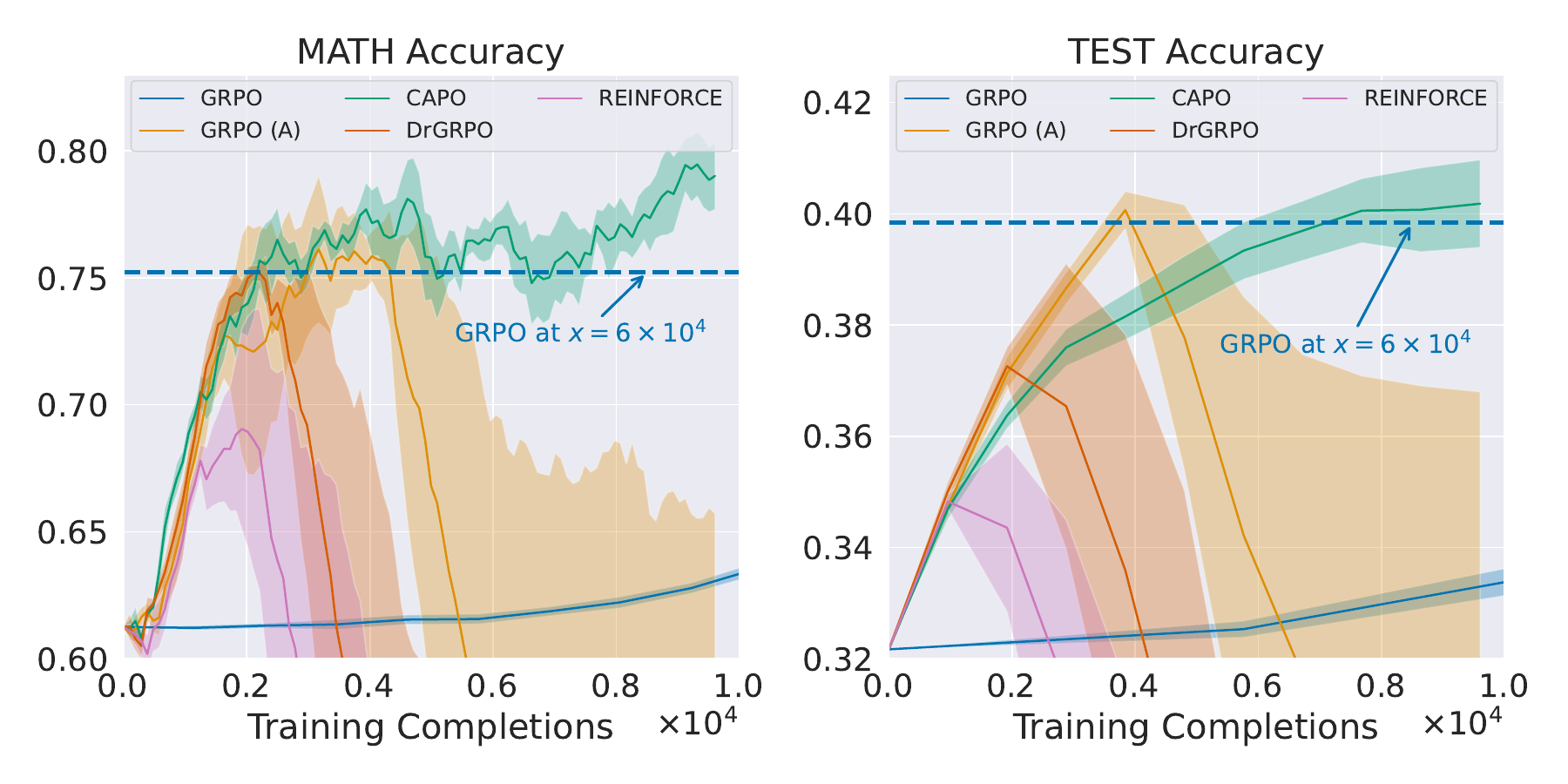}
\caption{\textbf{Comparison with baseline methods on policy gradient stability.} 
While the setup with more aggressive updates makes all methods more sample-efficient, it also leads the baselines to policy collapse. In contrast, CAPO prevents collapse and achieves up to $30\times$ greater sample efficiency 
than GRPO under aggressive updates.}
\label{fig:eval_baselines}
\end{center}
\end{figure}

\subsection{Experiments}

 We highlight and analyze the following questions to evaluate our hypothesis and proposed method:

 \textbf{Does CAPO prevent instability in LLM policy gradients? Does it lead to better sample efficiency?} Figure \ref{fig:eval_baselines} reports accuracy for all methods on MATH and on the TEST benchmark set. First, we observe that the more aggressive setup does lead to more sample-efficient learning than the conservative one across all methods. However, for the baselines, this improvement comes at the cost of stability. Under the aggressive regime, all baseline methods suffer from policy collapse, with performance dropping well below that of the base model and therefore losing the ability to learn further. In contrast, CAPO maintains stable performance throughout training, remaining effective long after all other methods have collapsed. This demonstrates that CAPO effectively prevents instability under aggressive updates. As a result, CAPO requires $30\times$ fewer completions on MATH and $9\times$ fewer completions on TEST compared to standard conservative GRPO.

\begin{figure}[t]
\begin{center}
\includegraphics[width=1.0\textwidth]{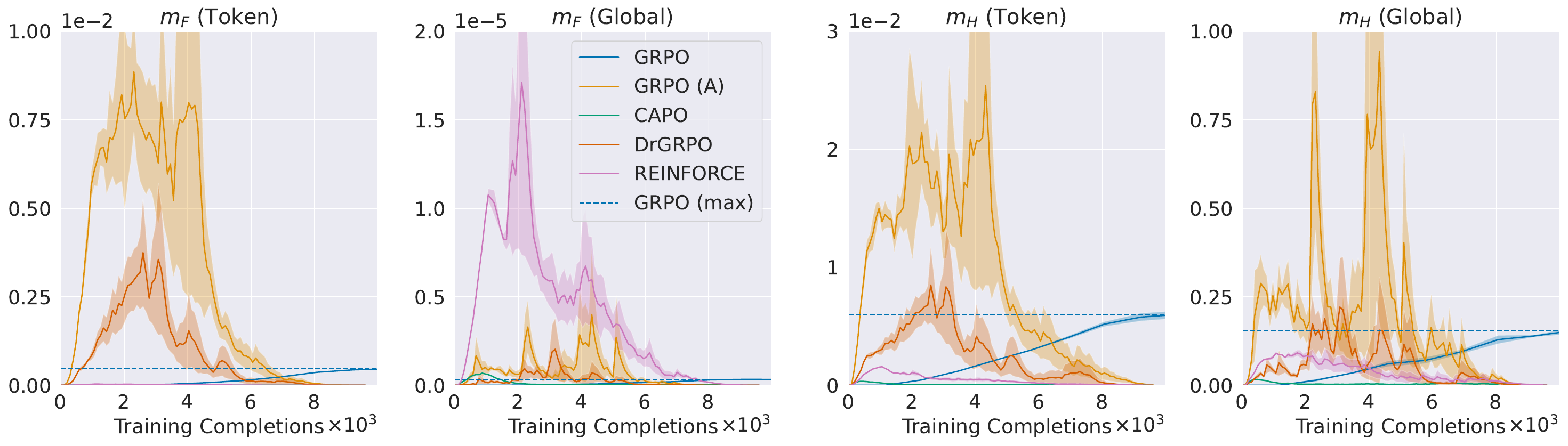}
\caption{\textbf{Evaluation of policy and objective shifts estimates from the proposed computational model during training.} 
Unstable methods exhibit large and abrupt directional curvatures, while stable ones maintain much smaller and smoother shifts. CAPO, by applying token-level bounds, also ensures well-behaved shifts at the global (batch) level, supporting the rationale of Theorem~\ref{thm:capo-certified-main}.}
\label{fig:shifts}
\end{center}
\end{figure}

\textbf{What does the proposed computational model reveal about the optimization landscape?} 
To analyze this question, we examine the policy shift $m_F$ and the objective shift $m_H$ at both the token level and the global (batch) level over the course of training, presented in Figure \ref{fig:shifts}. For $m_F$, we find that unstable methods (GRPO (A), DrGRPO, REINFORCE) exhibit very high global directional curvatures during training, whereas stable methods (CAPO, standard GRPO) maintain much smaller shifts. In particular, the global $m_F$ correlates closely with the instability observed in Figure~\ref{fig:math_accuracy}, showing that the model, despite its simplicity, remains informative about optimization dynamics.  

For $m_H$, we observe similar trends: unstable methods show abrupt shifts, while stable ones produce smoother, better-behaved curves. Note that, while a higher $m_F$ directly signals instability since it tracks policy shifts, a higher $m_H$ does not necessarily directly imply instability. This is because $m_H$ depends on the adopted advantage function (Equation~\ref{eq:hessianadv}) and the normalization strategy of each method. Still, sharp peaks in the $m_H$ curves also correlate with training instabilities.  Lastly, we highlight that CAPO, by applying a local bound per token, also ensures well-behaved shifts at the global level, which supports the rationale of Theorem \ref{thm:capo-certified-main}. Overall, these results highlight that the computational model provides meaningful information about the optimization landscape, and that CAPO effectively leverages this information to stabilize training.

\begin{figure}[t]
\begin{center}
\includegraphics[width=1.0\textwidth]{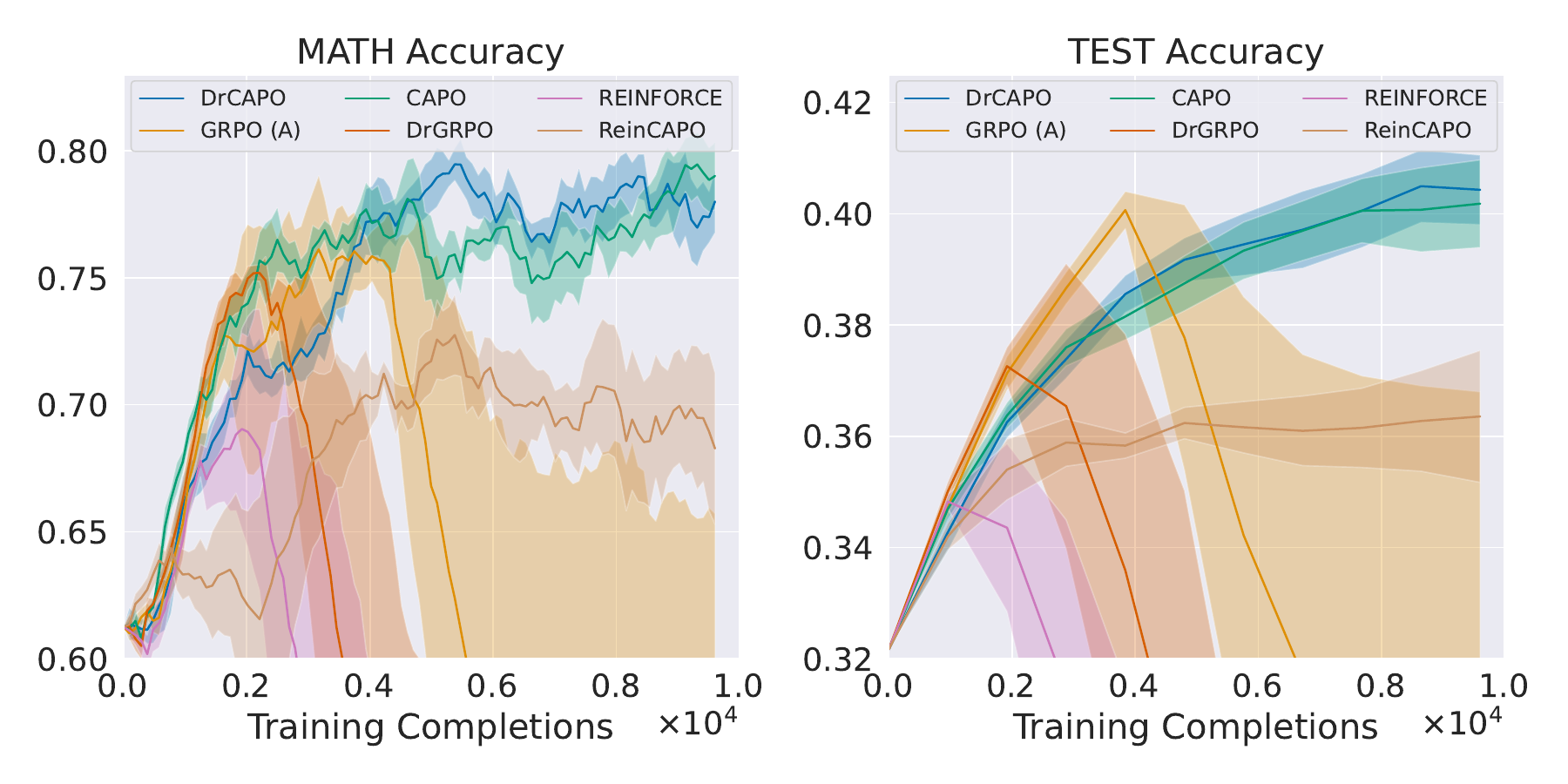}
\caption{\textbf{Evaluation of extended versions of RL methods with curvature-aware selection.} Incorporating curvature-aware selection consistently improves the base methods, preventing policy collapse and demonstrating the broader applicability of our approach across different policy optimization objectives.}
\label{fig:capo_extensions}
\end{center}
\end{figure}

 \begin{wrapfigure}{r}{0.4\textwidth}
\begin{center}
\includegraphics[width=0.4\textwidth]{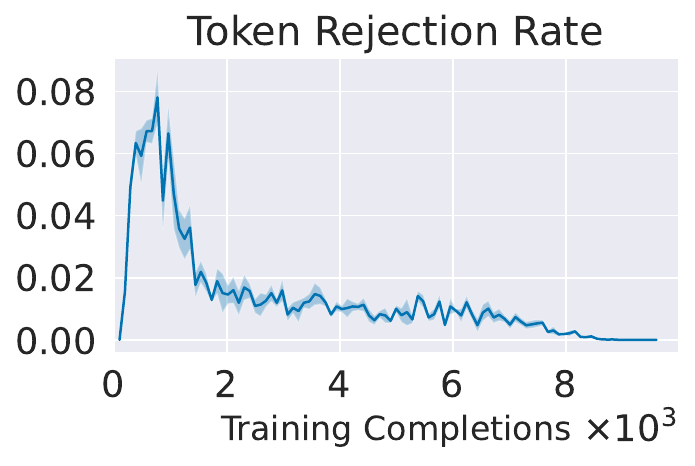}
\captionsetup{skip=-1pt}
\caption{\textbf{Token rejection rate under CAPO.} It maintains a low rejection rate over training, stabilizing learning with minimal intervention.}
\label{fig:clip_ratio}
\end{center}
\end{wrapfigure}

\textbf{Can we extend curvature-aware selection to other RL methods?} To test this, we extend Dr.GRPO and REINFORCE by incorporating our proposed curvature-aware selection, resulting in Dr.CAPO and ReinCAPO, respectively. Figure \ref{fig:capo_extensions} reports the evaluation results for these methods. In all cases, incorporating the selection strategy improves upon the base method and prevents policy collapse. These findings suggest that the proposed computational model and intervention mechanism are broadly applicable across different policy optimization objectives.

\textbf{How aggressive is CAPO’s intervention to ensure stability?} 
We analyze the extent of token rejection required by CAPO to maintain stable gradients, measured by the token rejection rate during training (Figure \ref{fig:clip_ratio}). 
The rejection rate peaks at about $8\%$ in the early stages of optimization, when higher learning rates produce more aggressive updates, but quickly decreases and remains below $2\%$ for the remainder of training. Overall, this shows that CAPO guides optimization toward stable curvature regions while keeping its intervention minimal, allowing the LLM to continue leveraging the vast majority of samples.

\textbf{Additional Experiments}. We provide a computational cost analysis of CAPO in Appendix \ref{app:compcost}, where we show that the additional components incur minor overhead. Additionally, we present further experiments in Appendix \ref{app:ablations}, including an ablation study on the optimizer model and a detailed evaluation of other heuristics traditionally used to ensure stability (e.g., PPO clipping and KL regularization), highlighting their limitations in the LLM setup.

\section{Final Remarks}
In this work, we propose a computational framework that models curvature information and integrates it into policy updates through CAPO. We provide theoretical guarantees for CAPO and show that it is effective at identifying samples that contribute to unstable updates, preventing policy collapse in aggressive training regimes where standard RL methods for LLM reasoning fail. As a result, CAPO achieves up to a $30\times$ improvement in sample efficiency compared to widely used training setups, while requiring only minimal intervention and computational overhead. Overall, it enables more sample-efficient learning regimes, supporting further scalability post-training scalability.

\textbf{Limitations.} Despite the encouraging results, we acknowledge some limitations of our work. First, due to compute budget constraints, we focused on experiments at a smaller, academic scale. While we demonstrated the effectiveness of CAPO against commonly used RL methods, future work could extend these results to distinct problem settings and longer training schedules. Second, the choice of CAPO thresholds depends on the problem setting (MDP, objective function, base policy) and may require tuning across different scenarios. Nonetheless, this is not a major concern, as the thresholds can be tuned solely on the training distribution.

\textbf{Future Work.} Beyond scalability, future research may explore different parametrizations of the computational model (for instance, by extending it to deeper layers) and investigate their impact on computational tractability and curvature estimates. In addition, future work may evaluate CAPO extensions to other intervention mechanisms, such as soft masking or regularization methods.

\section*{Acknowledgments}
We thank Shreshth Malik, Clare Lyle, and Yoav Gelberg for the insightful discussions in the early stages of this project. Luckeciano C. Melo acknowledges funding from the Air Force Office of Scientific Research (AFOSR) European Office of Aerospace Research \& Development (EOARD) under grant number FA8655-21-1-7017. Yarin Gal is supported by a Turing AI Fellowship financed by the UK government’s Office for Artificial Intelligence, through UK Research and Innovation (grant reference EP/V030302/1) and delivered by the Alan Turing Institute. We gratefully acknowledge compute resources provided by the Alan Turing Institute. This work was also funded under the Horizon Europe grant 101213369 DVPS.

\bibliography{iclr2026_conference}
\bibliographystyle{iclr2026_conference}

\appendix
\clearpage

\section{Derivation of the Second-Order Optimization Objective}\label{app:hessian}

In this section, we formally derive the higher-order expansion of the objective function around a given parameter vector, and present conditions for monotonic improvement. We start by highlighting a smoothness assumption required for our analysis.

\begin{assumption}[Lipschitz continuity of the Hessian]\label{assump:LipH}
There exists a constant \(L_2 \ge 0\) such that, for all \(\tau \in [0,1]\) and all \(\Delta\boldsymbol{\theta} \in \mathbb{R}^d\),
\begin{equation}
    \big\| \nabla^2 J(\boldsymbol{\theta}+\tau \Delta\boldsymbol{\theta}) - \nabla^2 J(\boldsymbol{\theta}) \big\|_{\op}
\;\le\; L_2 \,\tau \,\norm{\Delta\boldsymbol{\theta}}.
\end{equation}
\end{assumption}

Assumption~\ref{assump:LipH} is standard in the analysis of trust-region and cubic-regularized methods, and holds locally for smooth policy parameterizations. 

\begin{proposition}[Second-order expansion with integral remainder]\label{prop:deterministic}
Let \(J:\mathbb{R}^d \to \mathbb{R}\) be three times differentiable, and denote $g \;\triangleq\; \nabla J(\boldsymbol{\theta})$ and $H \;\triangleq\; \nabla^2 J(\boldsymbol{\theta})$.
For any update direction \(\Delta\boldsymbol{\theta} \in \mathbb{R}^d\), the objective value at the perturbed parameter \(\boldsymbol{\theta}+\Delta\boldsymbol{\theta}\) admits the expansion
\begin{equation}
J(\boldsymbol{\theta}+\Delta\boldsymbol{\theta})
=
J(\boldsymbol{\theta})
+ g^\top \Delta\boldsymbol{\theta} 
+ \tfrac12\,\Delta\boldsymbol{\theta}^\top H\,\Delta\boldsymbol{\theta}
+ \int_0^1 (1-\tau)\,\Delta\boldsymbol{\theta}^\top\!\big(\nabla^2 J(\boldsymbol{\theta}+\tau\Delta\boldsymbol{\theta})-H\big)\Delta\boldsymbol{\theta} \, d\tau.
\label{eq:integral-remainder}
\end{equation}
Under Assumption \ref{assump:LipH}, the following lower-bound holds

\begin{equation}\label{eq:hessianbound}
    J(\boldsymbol{\theta}+\Delta\boldsymbol{\theta}) \;\ge\; J(\boldsymbol{\theta}) + g^\top \Delta\boldsymbol{\theta} + \tfrac12\,\Delta\boldsymbol{\theta}^\top H\,\Delta\boldsymbol{\theta} - \tfrac{L_2}{6}\,\norm{\Delta\boldsymbol{\theta}}^3.
\end{equation}
\end{proposition}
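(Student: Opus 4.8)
The plan is to reduce the multivariate expansion to a one-dimensional problem along the segment from $\boldsymbol{\theta}$ to $\boldsymbol{\theta}+\Delta\boldsymbol{\theta}$, and then invoke the scalar Taylor theorem with integral remainder. First I would define $\phi(\tau) \triangleq J(\boldsymbol{\theta}+\tau\Delta\boldsymbol{\theta})$ for $\tau \in [0,1]$. Since $J$ is three times differentiable, $\nabla^2 J$ is continuous and $\phi$ is twice continuously differentiable on $[0,1]$, with $\phi'(\tau) = \nabla J(\boldsymbol{\theta}+\tau\Delta\boldsymbol{\theta})^\top \Delta\boldsymbol{\theta}$ and $\phi''(\tau) = \Delta\boldsymbol{\theta}^\top \nabla^2 J(\boldsymbol{\theta}+\tau\Delta\boldsymbol{\theta})\,\Delta\boldsymbol{\theta}$ by the chain rule. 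Evaluating at the endpoints gives $\phi(0)=J(\boldsymbol{\theta})$, $\phi(1)=J(\boldsymbol{\theta}+\Delta\boldsymbol{\theta})$, $\phi'(0)=g^\top\Delta\boldsymbol{\theta}$, and $\phi''(0)=\Delta\boldsymbol{\theta}^\top H\Delta\boldsymbol{\theta}$.

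Next I would apply the exact identity $\phi(1) = \phi(0) + \phi'(0) + \int_0^1 (1-\tau)\,\phi''(\tau)\,d\tau$, which follows from a single integration by parts on $\int_0^1 (1-\tau)\phi''(\tau)\,d\tau$ (with $u=1-\tau$, $dv=\phi''\,d\tau$), using $(1-\tau)\phi'(\tau)$ vanishing at $\tau=1$. Substituting the endpoint values yields $J(\boldsymbol{\theta}+\Delta\boldsymbol{\theta}) = J(\boldsymbol{\theta}) + g^\top\Delta\boldsymbol{\theta} + \int_0^1(1-\tau)\,\Delta\boldsymbol{\theta}^\top\nabla^2 J(\boldsymbol{\theta}+\tau\Delta\boldsymbol{\theta})\,\Delta\boldsymbol{\theta}\,d\tau$. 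To reach the form in \eqref{eq:integral-remainder}, I would add and subtract the constant-Hessian contribution, using $\int_0^1(1-\tau)\,d\tau = \tfrac12$ so that $\int_0^1(1-\tau)\,\Delta\boldsymbol{\theta}^\top H\Delta\boldsymbol{\theta}\,d\tau = \tfrac12\Delta\boldsymbol{\theta}^\top H\Delta\boldsymbol{\theta}$; the leftover integral is exactly the stated remainder involving $\nabla^2 J(\boldsymbol{\theta}+\tau\Delta\boldsymbol{\theta})-H$. This establishes the exact identity.

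For the lower bound \eqref{eq:hessianbound}, I would estimate the remainder integral in absolute value. Pointwise, the definition of the operator norm gives the two-sided Cauchy--Schwarz bound $|\Delta\boldsymbol{\theta}^\top(\nabla^2 J(\boldsymbol{\theta}+\tau\Delta\boldsymbol{\theta})-H)\Delta\boldsymbol{\theta}| \le \|\nabla^2 J(\boldsymbol{\theta}+\tau\Delta\boldsymbol{\theta})-H\|_{\op}\,\norm{\Delta\boldsymbol{\theta}}^2$. Applying Assumption~\ref{assump:LipH} yields $\|\nabla^2 J(\boldsymbol{\theta}+\tau\Delta\boldsymbol{\theta})-H\|_{\op}\le L_2\,\tau\,\norm{\Delta\boldsymbol{\theta}}$, so the magnitude of the remainder is at most $L_2\,\norm{\Delta\boldsymbol{\theta}}^3\int_0^1\tau(1-\tau)\,d\tau$. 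The elementary integral evaluates to $\tfrac16$, bounding the remainder by $\tfrac{L_2}{6}\norm{\Delta\boldsymbol{\theta}}^3$; replacing the remainder by its negative lower bound in the exact identity produces \eqref{eq:hessianbound}.

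Since each step is a routine reduction, the proof carries no genuine obstacle; the only care required is the bookkeeping in the integral remainder, specifically keeping the $(1-\tau)$ weighting that simultaneously produces the exact coefficient $\tfrac12$ on the Hessian term and, via $\int_0^1\tau(1-\tau)\,d\tau=\tfrac16$, the sharp constant $\tfrac{L_2}{6}$, rather than a looser bound obtained from a cruder mean-value form of the remainder.
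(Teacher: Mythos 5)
Your proposal is correct and follows essentially the same route as the paper's proof: reduce to the scalar function $\phi(\tau)=J(\boldsymbol{\theta}+\tau\Delta\boldsymbol{\theta})$, apply the Taylor formula with integral remainder, rewrite the remainder in terms of $\nabla^2 J(\boldsymbol{\theta}+\tau\Delta\boldsymbol{\theta})-H$, and bound it via the operator norm and Assumption~\ref{assump:LipH} using $\int_0^1 \tau(1-\tau)\,d\tau = \tfrac16$. The only cosmetic difference is that you derive the weighted-remainder identity by integration by parts and then split off the $\tfrac12\phi''(0)$ term, whereas the paper cites that form of Taylor's theorem directly.
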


\begin{proof}
Let \(\phi(\tau)=J(\boldsymbol{\theta}+\tau\Delta\boldsymbol{\theta})\) for \(\tau\in[0,1]\).
Then \(\phi'(0)=g^\top\Delta\boldsymbol{\theta}\) and \(\phi''(0)=\Delta\boldsymbol{\theta}^\top H \Delta\boldsymbol{\theta}\).
The (one-dimensional) Taylor formula with integral remainder gives
\begin{equation}
    \phi(1)=\phi(0)+\phi'(0)+\tfrac12\phi''(0)
+ \int_0^1 (1-\tau)\left(\phi''(\tau)-\phi''(0)\right)\,d\tau .
\end{equation}

Since \(\phi''(\tau)-\phi''(0) = \Delta\boldsymbol{\theta}^\top(\nabla^2J(\boldsymbol{\theta}+\tau\Delta\boldsymbol{\theta})-H)\Delta\boldsymbol{\theta}\), we obtain \eqref{eq:integral-remainder}.
Assumption~\ref{assump:LipH} implies
\(\big|\Delta\boldsymbol{\theta}^\top(\nabla^2J(\boldsymbol{\theta}+\tau\Delta\boldsymbol{\theta})-H)\Delta\boldsymbol{\theta}\big|
\le \norm{\Delta\boldsymbol{\theta}}^2 \,\big\|\nabla^2J(\boldsymbol{\theta}+\tau\Delta\boldsymbol{\theta})-H\big\|_{\op}
\le L_2 \tau \norm{\Delta\boldsymbol{\theta}}^3\).
Solving the integral gives \(\int_0^1 (1-\tau)\,L_2\tau \,d\tau = L_2/6\). Since this term can be negative, a worst-case bound yields the inequality \ref{eq:hessianbound}.
\end{proof}

\clearpage

\section{Derivation of the Policy Divergence Quadratic Approximation}\label{app:fisher}

In this section, we formally derive the higher-order expansion of the KL term around a small step 
$\Delta\boldsymbol{\theta}$. Throughout this derivation, we assume standard regularity assumptions 
hold (e.g., parameter-independent support, differentiability of $\log \pi_{\boldsymbol{\theta}}$, 
and dominated convergence so that differentiation may pass under the expectation). The state averaging distribution $d_\pi$ is fixed.

\begin{assumption}[Lipschitz continuity of the Fisher curvature]\label{assump:fisher_lip}
Let $F(\boldsymbol{\theta})
:= \mathbb{E}_{s\sim d_\pi,\ a\sim \pi_{\boldsymbol{\theta}}(\cdot\mid s)}
\Big[ \nabla_{\boldsymbol{\theta}} \log \pi_{\boldsymbol{\theta}}(a\mid s) \,
\nabla_{\boldsymbol{\theta}} \log \pi_{\boldsymbol{\theta}}(a\mid s)^\top \Big].$

There exists a constant $L_F \ge 0$ such that, for all $\tau \in [0,1]$ and all 
$\Delta\boldsymbol{\theta}\in\mathbb{R}^d$,
\begin{equation}
\big\|F(\boldsymbol{\theta}+\tau\Delta\boldsymbol{\theta}) - F(\boldsymbol{\theta})\big\|_{\mathrm{op}}
\;\le\; L_F \,\tau \,\|\Delta\boldsymbol{\theta}\|.
\end{equation}

\end{assumption}

Assumption \ref{assump:fisher_lip} is analogous to the Assumption \ref{assump:LipH} applied to the Fisher geometry.

\begin{lemma}[The grad-log-prob identity]\label{lem:score_zero} Under regularity assumptions, the following identity holds:
\begin{equation}
    \mathbb{E}_{s\sim d_\pi,\ a\sim \pi_{\boldsymbol{\theta}}(\cdot\mid s)}
\big[\nabla_{\boldsymbol{\theta}} \log \pi_{\boldsymbol{\theta}}(a\mid s)\big] = 0.
\end{equation}
\end{lemma}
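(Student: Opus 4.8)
The plan is to prove the identity via the standard log-derivative (score function) argument, exploiting the single structural fact that $\pi_{\boldsymbol{\theta}}(\cdot \mid s)$ is a normalized probability distribution for every state $s$. First I would condition on a fixed state $s$ and isolate the inner conditional expectation over actions. Applying the elementary identity $\nabla_{\boldsymbol{\theta}} \log \pi_{\boldsymbol{\theta}}(a \mid s) = \nabla_{\boldsymbol{\theta}} \pi_{\boldsymbol{\theta}}(a \mid s) / \pi_{\boldsymbol{\theta}}(a \mid s)$, the sampling weight $\pi_{\boldsymbol{\theta}}(a \mid s)$ inside the expectation cancels the denominator, so that
\begin{equation}
\mathbb{E}_{a\sim \pi_{\boldsymbol{\theta}}(\cdot\mid s)}\big[\nabla_{\boldsymbol{\theta}} \log \pi_{\boldsymbol{\theta}}(a\mid s)\big]
= \sum_{a \in \mathcal{A}} \pi_{\boldsymbol{\theta}}(a\mid s)\,\frac{\nabla_{\boldsymbol{\theta}} \pi_{\boldsymbol{\theta}}(a\mid s)}{\pi_{\boldsymbol{\theta}}(a\mid s)}
= \sum_{a \in \mathcal{A}} \nabla_{\boldsymbol{\theta}} \pi_{\boldsymbol{\theta}}(a\mid s).
\end{equation}

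Next I would interchange the gradient with the summation over the action space $\mathcal{A} = \mathcal{V}$, which is justified by the regularity assumptions stated at the top of this section (differentiability of $\log \pi_{\boldsymbol{\theta}}$, parameter-independent support, and dominated convergence permitting differentiation under the expectation). This produces $\nabla_{\boldsymbol{\theta}} \sum_{a \in \mathcal{A}} \pi_{\boldsymbol{\theta}}(a\mid s) = \nabla_{\boldsymbol{\theta}} 1 = 0$, since the conditional probabilities sum to one for every value of $\boldsymbol{\theta}$. Thus the inner conditional expectation vanishes identically for each fixed $s$.

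Finally I would take the outer expectation over $s \sim d_\pi$. Because the conditional expectation is zero for every $s$, and $d_\pi$ is held fixed throughout this derivation, the tower property of conditional expectation immediately yields $\mathbb{E}_{s\sim d_\pi,\ a\sim \pi_{\boldsymbol{\theta}}(\cdot\mid s)}[\nabla_{\boldsymbol{\theta}} \log \pi_{\boldsymbol{\theta}}(a\mid s)] = 0$. The only nontrivial point—and the place where care is needed—is the exchange of differentiation and summation; in the LLM setting this is entirely benign, since $\mathcal{A}$ is a finite token vocabulary, so the sum is finite and the interchange requires merely the differentiability of each $\pi_{\boldsymbol{\theta}}(a \mid s)$, while the parameter-independent support guarantees no boundary contributions arise. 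I therefore expect no genuine obstacle beyond invoking these already-assumed regularity conditions.
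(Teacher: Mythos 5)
Your proof is correct and is essentially the same argument as the paper's: both rest on the identity $\nabla_{\boldsymbol{\theta}} \pi_{\boldsymbol{\theta}} = \pi_{\boldsymbol{\theta}} \nabla_{\boldsymbol{\theta}} \log \pi_{\boldsymbol{\theta}}$, the normalization $\sum_a \pi_{\boldsymbol{\theta}}(a\mid s)=1$, and interchanging differentiation with the (finite) sum over actions, followed by averaging over $s \sim d_\pi$. The only difference is cosmetic — the paper differentiates the normalization constraint first and then rewrites it as an expectation, while you expand the expectation first and then invoke normalization — and your explicit remark that finiteness of the vocabulary makes the interchange benign is a fair point the paper leaves implicit.
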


\begin{proof}
Fix $s$. By normalization, $\sum_a \pi_{\boldsymbol{\theta}}(a\mid s)=1$. Differentiating, $\sum_a \nabla_{\boldsymbol{\theta}} \pi_{\boldsymbol{\theta}}(a\mid s) = 0$. Since $\nabla_{\boldsymbol{\theta}} \pi_{\boldsymbol{\theta}} = 
\pi_{\boldsymbol{\theta}} \nabla_{\boldsymbol{\theta}} \log \pi_{\boldsymbol{\theta}}$, we obtain
\begin{equation}
    \sum_a \pi_{\boldsymbol{\theta}}(a\mid s) \, \nabla_{\boldsymbol{\theta}} \log \pi_{\boldsymbol{\theta}}(a\mid s) = 0,
\end{equation}
i.e.
\(\mathbb{E}_{a\sim \pi_{\boldsymbol{\theta}}(\cdot\mid s)}[\nabla_{\boldsymbol{\theta}} \log \pi_{\boldsymbol{\theta}}(a\mid s)] = 0\).
Averaging over $s \sim d_\pi$ preserves zero.
\end{proof}

\begin{lemma}[Fisher identity]\label{lem:fisher_identity} Under regularity assumptions, the following identity holds:
\begin{equation}
    -\mathbb{E}\big[\nabla_{\boldsymbol{\theta}}^2 \log \pi_{\boldsymbol{\theta}}(a\mid s)\big]
= \mathbb{E}\big[\nabla_{\boldsymbol{\theta}} \log \pi_{\boldsymbol{\theta}}(a\mid s)\,
\nabla_{\boldsymbol{\theta}} \log \pi_{\boldsymbol{\theta}}(a\mid s)^\top\big]
=:\,F(\boldsymbol{\theta}).
\end{equation}

\end{lemma}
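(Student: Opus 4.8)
The plan is to prove the nontrivial (left-hand) equality directly, since the right-hand equality is merely the definition of $F(\boldsymbol{\theta})$. The strategy is to establish an \emph{elementary pointwise identity} relating the second derivative of $\log\pi_{\boldsymbol{\theta}}$ to the score outer product, then take expectations and show that the extra term vanishes by the same normalization argument that powers Lemma~\ref{lem:score_zero}, now applied at second order. Finally I would average over $s\sim d_\pi$ (which is held fixed throughout, as stated), so it suffices to argue state-by-state.

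First I would write the score as $\nabla_{\boldsymbol{\theta}}\log\pi_{\boldsymbol{\theta}}(a\mid s) = \nabla_{\boldsymbol{\theta}}\pi_{\boldsymbol{\theta}}(a\mid s)/\pi_{\boldsymbol{\theta}}(a\mid s)$ and differentiate once more via the quotient rule to obtain, for each $(s,a)$,
\[
\nabla_{\boldsymbol{\theta}}^2\log\pi_{\boldsymbol{\theta}}(a\mid s) = \frac{\nabla_{\boldsymbol{\theta}}^2\pi_{\boldsymbol{\theta}}(a\mid s)}{\pi_{\boldsymbol{\theta}}(a\mid s)} - \nabla_{\boldsymbol{\theta}}\log\pi_{\boldsymbol{\theta}}(a\mid s)\,\nabla_{\boldsymbol{\theta}}\log\pi_{\boldsymbol{\theta}}(a\mid s)^\top.
\]
Taking the expectation over $a\sim\pi_{\boldsymbol{\theta}}(\cdot\mid s)$, the second term reproduces exactly the per-state Fisher contribution, so all that remains is to show the first term has zero expectation. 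Here the probability weights cancel the denominator, giving $\mathbb{E}_{a\sim\pi_{\boldsymbol{\theta}}}[\nabla_{\boldsymbol{\theta}}^2\pi_{\boldsymbol{\theta}}/\pi_{\boldsymbol{\theta}}] = \sum_a \nabla_{\boldsymbol{\theta}}^2\pi_{\boldsymbol{\theta}}(a\mid s)$; under the regularity assumptions I may pull $\nabla_{\boldsymbol{\theta}}^2$ outside the finite sum to get $\nabla_{\boldsymbol{\theta}}^2\sum_a\pi_{\boldsymbol{\theta}}(a\mid s) = \nabla_{\boldsymbol{\theta}}^2(1) = 0$. Rearranging yields $-\mathbb{E}_{a\sim\pi_{\boldsymbol{\theta}}}[\nabla_{\boldsymbol{\theta}}^2\log\pi_{\boldsymbol{\theta}}(a\mid s)]$ equal to the per-state score outer product; applying the tower property over $s\sim d_\pi$ then delivers the claimed identity.

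The main obstacle is not the algebra but the analytic justification for interchanging $\nabla_{\boldsymbol{\theta}}^2$ with the summation over actions — equivalently, differentiating twice under the expectation. I expect this to be routine here precisely because the action space is the finite vocabulary $\mathcal{V}$: the sum over $a$ is finite, so the interchange is valid as soon as each $\pi_{\boldsymbol{\theta}}(a\mid s)$ is twice continuously differentiable in $\boldsymbol{\theta}$ with locally bounded derivatives, which the smooth softmax parameterization guarantees and which the stated regularity assumptions (differentiability of $\log\pi_{\boldsymbol{\theta}}$ and dominated convergence) supply. I would note explicitly that this finiteness is what lets the second-order normalization argument mirror the first-order one in Lemma~\ref{lem:score_zero} without the measure-theoretic subtleties of continuous action spaces.
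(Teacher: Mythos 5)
Your proposal is correct and is essentially the paper's own proof: both rest on the same pointwise identity relating $\nabla_{\boldsymbol{\theta}}^2 \pi_{\boldsymbol{\theta}}$, $\nabla_{\boldsymbol{\theta}}^2 \log \pi_{\boldsymbol{\theta}}$, and the score outer product (you state it via the quotient rule, the paper states the equivalent rearranged form), combined with twice-differentiating the normalization $\sum_a \pi_{\boldsymbol{\theta}}(a\mid s)=1$ to kill the $\nabla_{\boldsymbol{\theta}}^2\pi_{\boldsymbol{\theta}}/\pi_{\boldsymbol{\theta}}$ term, followed by averaging over $s\sim d_\pi$. Your explicit justification of swapping $\nabla_{\boldsymbol{\theta}}^2$ with the finite sum over the vocabulary is a small added rigor the paper leaves implicit, but it does not change the route.
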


\begin{proof}
Fix $s$. Twice differentiating normalization gives $\nabla_{\boldsymbol{\theta}}^2 \sum_a \pi_{\boldsymbol{\theta}}(a\mid s)
  = \sum_a \nabla_{\boldsymbol{\theta}}^2 \pi_{\boldsymbol{\theta}}(a\mid s) = 0$. Using 
$\nabla_{\boldsymbol{\theta}}^2 \pi_{\boldsymbol{\theta}}
= \pi_{\boldsymbol{\theta}}\big(\nabla_{\boldsymbol{\theta}}^2 \log \pi_{\boldsymbol{\theta}}
 + \nabla_{\boldsymbol{\theta}} \log \pi_{\boldsymbol{\theta}}\, 
   \nabla_{\boldsymbol{\theta}} \log \pi_{\boldsymbol{\theta}}^\top\big)$,
we obtain
\begin{equation}
    0 = \sum_a \pi_{\boldsymbol{\theta}}(a\mid s)\,\nabla_{\boldsymbol{\theta}}^2 \log \pi_{\boldsymbol{\theta}}(a\mid s)
  + \sum_a \pi_{\boldsymbol{\theta}}(a\mid s)\,
    \nabla_{\boldsymbol{\theta}} \log \pi_{\boldsymbol{\theta}}(a\mid s)\,
    \nabla_{\boldsymbol{\theta}} \log \pi_{\boldsymbol{\theta}}(a\mid s)^\top.
\end{equation}

Recognizing expectations over $a\sim \pi_{\boldsymbol{\theta}}(\cdot\mid s)$ and multiplying by $-1$ yields
\begin{equation}
    -\mathbb{E}_{a\sim\pi_{\boldsymbol{\theta}}(\cdot\mid s)}[\nabla_{\boldsymbol{\theta}}^2 \log \pi_{\boldsymbol{\theta}}(a\mid s)]
= \mathbb{E}_{a\sim\pi_{\boldsymbol{\theta}}(\cdot\mid s)}[\nabla_{\boldsymbol{\theta}} \log \pi_{\boldsymbol{\theta}}(a\mid s)\,
   \nabla_{\boldsymbol{\theta}} \log \pi_{\boldsymbol{\theta}}(a\mid s)^\top].
\end{equation}

Averaging over $s\sim d_\pi$ gives the result.
\end{proof}

\begin{proposition}[Second-order expansion with integral remainder]\label{prop:kl_expansion}
Define the average forward KL as
\begin{equation}
    \bar D_{\mathrm{KL}}(\pi_{\boldsymbol{\theta}} \,\|\, \pi_{\boldsymbol{\theta}+\Delta\boldsymbol{\theta}})
:= \mathbb{E}_{s\sim d_\pi}\!\left[
\mathrm{KL}\big(\pi_{\boldsymbol{\theta}}(\cdot\mid s)\,\|\,\pi_{\boldsymbol{\theta}+\Delta\boldsymbol{\theta}}(\cdot\mid s)\big)
\right].
\end{equation}

Then, for any update $\Delta\boldsymbol{\theta}$,
\begin{equation}
    \bar D_{\mathrm{KL}}(\pi_{\boldsymbol{\theta}} \,\|\, \pi_{\boldsymbol{\theta}+\Delta\boldsymbol{\theta}})
= \tfrac12 \Delta\boldsymbol{\theta}^\top F(\boldsymbol{\theta}) \Delta\boldsymbol{\theta}
+ \int_0^1 (1-\tau)\,\Delta\boldsymbol{\theta}^\top\!\big(F(\boldsymbol{\theta}+\tau\Delta\boldsymbol{\theta})
- F(\boldsymbol{\theta})\big)\Delta\boldsymbol{\theta}\,d\tau.
\end{equation}

And, under Assumption \ref{assump:fisher_lip}, the following holds:
\begin{equation}
    \bar D_{\mathrm{KL}}(\pi_{\boldsymbol{\theta}} \,\|\, \pi_{\boldsymbol{\theta}+\Delta\boldsymbol{\theta}})
= \tfrac12 \Delta\boldsymbol{\theta}^\top F(\boldsymbol{\theta}) \Delta\boldsymbol{\theta}
+ \mathcal{O}(\norm{\Delta\boldsymbol{\theta}}^3).
\end{equation}

\end{proposition}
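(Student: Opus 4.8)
The plan is to reduce the multivariate expansion to a one-dimensional Taylor argument, exactly mirroring the proof of Proposition~\ref{prop:deterministic}. I would define the scalar function $\phi(\tau) := \bar D_{\mathrm{KL}}(\pi_{\boldsymbol{\theta}} \,\|\, \pi_{\boldsymbol{\theta}+\tau\Delta\boldsymbol{\theta}})$ for $\tau \in [0,1]$, so that the quantity of interest is $\phi(1)$. Because $d_\pi$ is held fixed and only the second KL argument is perturbed, differentiation in $\tau$ passes through the state expectation (under the stated regularity assumptions) and acts only on $\log\pi_{\boldsymbol{\theta}+\tau\Delta\boldsymbol{\theta}}$. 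I would then compute $\phi(0)$, $\phi'(0)$, and $\phi''(\tau)$ and feed them into the one-dimensional Taylor formula with integral remainder $\phi(1) = \phi(0) + \phi'(0) + \int_0^1 (1-\tau)\,\phi''(\tau)\,d\tau$, precisely as in the deterministic case.

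First, $\phi(0)=0$ since the KL of a distribution with itself vanishes. Writing $\mathrm{KL}(\pi_{\boldsymbol{\theta}}\,\|\,\pi_{\boldsymbol{\theta}'}) = \mathbb{E}_{a\sim\pi_{\boldsymbol{\theta}}}[\log\pi_{\boldsymbol{\theta}}(a\mid s) - \log\pi_{\boldsymbol{\theta}'}(a\mid s)]$ with $\boldsymbol{\theta}' = \boldsymbol{\theta}+\tau\Delta\boldsymbol{\theta}$, only the second term depends on $\tau$, giving $\phi'(\tau) = -\mathbb{E}_{s\sim d_\pi}\mathbb{E}_{a\sim\pi_{\boldsymbol{\theta}}}[\nabla_{\boldsymbol{\theta}'}\log\pi_{\boldsymbol{\theta}'}(a\mid s)^\top\Delta\boldsymbol{\theta}]$. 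Evaluating at $\tau=0$ collapses the weighting and the gradient onto the same parameter, so $\phi'(0) = -\mathbb{E}_{s\sim d_\pi}\mathbb{E}_{a\sim\pi_{\boldsymbol{\theta}}}[\nabla_{\boldsymbol{\theta}}\log\pi_{\boldsymbol{\theta}}(a\mid s)]^\top\Delta\boldsymbol{\theta} = 0$ by the grad-log-prob identity (Lemma~\ref{lem:score_zero}). Differentiating once more yields $\phi''(\tau) = -\mathbb{E}_{s\sim d_\pi}\mathbb{E}_{a\sim\pi_{\boldsymbol{\theta}}}[\Delta\boldsymbol{\theta}^\top\nabla^2_{\boldsymbol{\theta}'}\log\pi_{\boldsymbol{\theta}'}(a\mid s)\Delta\boldsymbol{\theta}]$.

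The crux is to identify $\phi''$ with the Fisher curvature. Using the elementary identity $-\nabla^2\log\pi = \nabla\log\pi\,\nabla\log\pi^\top - \nabla^2\pi/\pi$, at $\tau=0$ the weighting $\pi_{\boldsymbol{\theta}}$ matches the differentiated distribution, the term $\sum_a \nabla^2_{\boldsymbol{\theta}}\pi_{\boldsymbol{\theta}}(a\mid s) = \nabla^2_{\boldsymbol{\theta}}\sum_a\pi_{\boldsymbol{\theta}}(a\mid s) = 0$ drops out, and Lemma~\ref{lem:fisher_identity} gives $\phi''(0) = \Delta\boldsymbol{\theta}^\top F(\boldsymbol{\theta})\Delta\boldsymbol{\theta}$. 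Substituting into the Taylor formula and splitting $\phi''(\tau) = \phi''(0) + (\phi''(\tau) - \phi''(0))$ via $\int_0^1(1-\tau)\,d\tau = \tfrac12$ then produces the stated leading term $\tfrac12\Delta\boldsymbol{\theta}^\top F(\boldsymbol{\theta})\Delta\boldsymbol{\theta}$ plus the integral remainder.

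The main obstacle I anticipate is the identification of $\phi''(\tau)$ with $\Delta\boldsymbol{\theta}^\top F(\boldsymbol{\theta}+\tau\Delta\boldsymbol{\theta})\Delta\boldsymbol{\theta}$ at intermediate $\tau$. The clean collapse above relies on the action weighting $\pi_{\boldsymbol{\theta}}$ coinciding with the distribution being differentiated, which only happens at $\tau=0$; for $\tau>0$ the term involving $\nabla^2_{\boldsymbol{\theta}'}\pi_{\boldsymbol{\theta}'}/\pi_{\boldsymbol{\theta}'}$ no longer vanishes, so $\phi''(\tau)$ agrees with the Fisher form only up to a correction of order $\norm{\Delta\boldsymbol{\theta}}^3$. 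I would therefore either present the integral remainder directly in terms of $\phi''(\tau)$, or absorb this higher-order correction into the remainder and use the Fisher form as the stated expansion. Either way, the asymptotic bound follows from Assumption~\ref{assump:fisher_lip}: by Cauchy--Schwarz and operator-norm bounding, $|\Delta\boldsymbol{\theta}^\top(F(\boldsymbol{\theta}+\tau\Delta\boldsymbol{\theta}) - F(\boldsymbol{\theta}))\Delta\boldsymbol{\theta}| \le \norm{\Delta\boldsymbol{\theta}}^2\,\norm{F(\boldsymbol{\theta}+\tau\Delta\boldsymbol{\theta}) - F(\boldsymbol{\theta})}_{\op} \le L_F\,\tau\,\norm{\Delta\boldsymbol{\theta}}^3$; integrating against $(1-\tau)$ with $\int_0^1(1-\tau)\tau\,d\tau = \tfrac16$ bounds the remainder by $\tfrac{L_F}{6}\norm{\Delta\boldsymbol{\theta}}^3 = \mathcal{O}(\norm{\Delta\boldsymbol{\theta}}^3)$, exactly paralleling the $L_2/6$ constant of Proposition~\ref{prop:deterministic}.
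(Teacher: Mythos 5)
Your proposal is correct and follows essentially the same route as the paper's proof: reduce to the scalar function $\phi(\tau)=\bar D_{\mathrm{KL}}(\pi_{\boldsymbol{\theta}}\,\|\,\pi_{\boldsymbol{\theta}+\tau\Delta\boldsymbol{\theta}})$, apply the one-dimensional Taylor formula with integral remainder, kill $\phi(0)$ and $\phi'(0)$ via Lemma~\ref{lem:score_zero}, identify $\phi''(0)$ with the Fisher quadratic form via Lemma~\ref{lem:fisher_identity}, and bound the remainder with Assumption~\ref{assump:fisher_lip}, with the constant $L_F/6$ paralleling Proposition~\ref{prop:deterministic}.

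One point deserves emphasis: the ``obstacle'' you flag is real, and on this point your write-up is more careful than the paper's own proof. The paper asserts $\phi''(\tau)=\Delta\boldsymbol{\theta}^\top F(\boldsymbol{\theta}+\tau\Delta\boldsymbol{\theta})\Delta\boldsymbol{\theta}$ for all $\tau\in[0,1]$ by citing Lemma~\ref{lem:fisher_identity}, but that lemma takes the action expectation under $\pi_{\boldsymbol{\theta}+\tau\Delta\boldsymbol{\theta}}$, whereas in $\phi''(\tau)$ the expectation remains under $\pi_{\boldsymbol{\theta}}$ (the first KL argument is frozen in $\tau$); the two coincide only at $\tau=0$, exactly as you observe, since for $\tau>0$ the term $\mathbb{E}_{a\sim\pi_{\boldsymbol{\theta}}}\!\left[\nabla^2_{\boldsymbol{\theta}'}\pi_{\boldsymbol{\theta}'}(a\mid s)/\pi_{\boldsymbol{\theta}'}(a\mid s)\right]$ no longer telescopes to zero. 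Consequently, the exact integral-remainder identity in the proposition holds as stated only if $F(\boldsymbol{\theta}+\tau\Delta\boldsymbol{\theta})$ is read as the mismatched-expectation curvature $-\mathbb{E}_{s\sim d_\pi,\,a\sim\pi_{\boldsymbol{\theta}}}\!\left[\nabla^2_{\boldsymbol{\theta}'}\log\pi_{\boldsymbol{\theta}'}(a\mid s)\right]$ at $\boldsymbol{\theta}'=\boldsymbol{\theta}+\tau\Delta\boldsymbol{\theta}$, with the Lipschitz assumption imposed on that object --- your first repair; your second repair, absorbing the $\mathcal{O}(\norm{\Delta\boldsymbol{\theta}}^3)$ discrepancy into the remainder at the cost of mildly stronger regularity than Assumption~\ref{assump:fisher_lip} alone, likewise recovers the final asymptotic statement. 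Either way, your argument is sound and reaches the stated conclusion.
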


\begin{proof}
Let $\phi(\tau) := \bar D_{\mathrm{KL}}(\pi_{\boldsymbol{\theta}} \,\|\, \pi_{\boldsymbol{\theta}+\tau\Delta\boldsymbol{\theta}})$. By the Taylor expansion with integral remainder,
\begin{equation}
    \phi(1) = \phi(0) + \phi'(0) + \tfrac12 \phi''(0) 
+ \int_0^1 (1-\tau)\big(\phi''(\tau)-\phi''(0)\big)\,d\tau.
\end{equation}

Then $\phi(0)=0$, and $\phi'(\tau) = -\,\mathbb{E}\!\left[\nabla_{\boldsymbol{\theta}} \log \pi_{\boldsymbol{\theta}+\tau\Delta\boldsymbol{\theta}}(a\mid s)\right]^\top \Delta\boldsymbol{\theta}$,
so by Lemma~\ref{lem:score_zero}, $\phi'(0)=0$. Differentiating again and applying Lemma \ref{lem:fisher_identity},

\begin{equation}
    \phi''(\tau) = \Delta\boldsymbol{\theta}^\top F(\boldsymbol{\theta}+\tau\Delta\boldsymbol{\theta}) \Delta\boldsymbol{\theta},
\qquad
\phi''(0) = \Delta\boldsymbol{\theta}^\top F(\boldsymbol{\theta}) \Delta\boldsymbol{\theta}.
\end{equation}

Substituting the evaluated terms yields the expansion.

Finally, Assumption~\ref{assump:fisher_lip} implies
\begin{equation}
    \big|\Delta\boldsymbol{\theta}^\top\!\big(F(\boldsymbol{\theta}+\tau\Delta\boldsymbol{\theta}) - F(\boldsymbol{\theta})\big)\Delta\boldsymbol{\theta}\big|
\le L_F \tau \|\Delta\boldsymbol{\theta}\|^3.
\end{equation}

Integrating $\int_0^1 (1-\tau)\tau\,d\tau = 1/6$, so the remainder term is $\mathcal{O}(\norm{\Delta\boldsymbol{\theta}}^3)$.
\end{proof}

\clearpage

\section{Derivation of gradients and curvatures under last-layer model}\label{app:model}

In this section, we formally derive the gradient and curvature expressions assuming the last-layer model.

\begin{proposition}[Gradient w.r.t.last-layer model of a softmax policy]\label{prop:grad}
Let us consider a softmax policy $
\pi_{\boldsymbol{\theta}}(a \mid s)
= \frac{\exp\!\big(f_{\boldsymbol{\theta}}(s,a)\big)}{\sum_{a'} \exp\!\big(f_{\boldsymbol{\theta}}(s,a')\big)}$. Let us also denote the pre-softmax layer by $f_{\boldsymbol{\theta}}(s_{t})=Wh_{\bar{\boldsymbol{\theta}}}(s_{t}), W\in\mathbb{R}^{K\times d_{i}},\; h_{\bar{\boldsymbol{\theta}}}(s_{t})\in\mathbb{R}^{d_{i}}$. Define $\boldsymbol{\psi} \coloneqq \mathrm{vec}(W)\in\mathbb{R}^{Kd}$, with $\boldsymbol{\theta} = (\bar{\boldsymbol{\theta}}, \boldsymbol{\psi})$, $K = \text{dim}(\mathcal{V})
$. Then the policy gradient with respect to $\boldsymbol{\psi}$ of the PG objective:
\begin{equation}
    J(\theta) = \mathbb{E}_{\tau \sim \pi_\theta} \!\left[ \sum_{t=0}^T \gamma^t A(s_t,a_t) \log \pi_{\boldsymbol{\theta}}(a_t \mid s_t) \right]
\end{equation}
is given by:
\begin{equation}
    \tilde{g}(\boldsymbol{\psi})
= \mathbb{E}_{\tau \sim \pi_{\boldsymbol{\theta}}}
\left[
\sum_{t=0}^T \gamma^{t}\, A(s_t,a_t)\, \big(e_a-\pi_{\boldsymbol{\theta}}(s_t)\big)\otimes h(s_t)
\right],
\end{equation}
where \(e_a\in\mathbb{R}^{K}\), $K = \text{dim}(\mathcal{V})$, denotes the one-hot vector of the realized action \(a_t\) at time \(t\) (i.e., \(e_a=e_{a_t}\)), \(\pi_{\boldsymbol{\theta}}(s_t)\in\mathbb{R}^{K}\) is the vector of action probabilities at \(s_t\), and \(\otimes\) denotes the Kronecker product.
\end{proposition}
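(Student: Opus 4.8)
The plan is to reduce the statement to the computation of a single per-sample quantity, the score $\nabla_{\boldsymbol{\psi}}\log\pi_{\boldsymbol{\theta}}(a_t\mid s_t)$, and then invoke the (advantage form of the) policy gradient theorem already stated in the Preliminaries. Writing $\boldsymbol{\theta}=(\bar{\boldsymbol{\theta}},\boldsymbol{\psi})$, that theorem gives $\nabla_{\boldsymbol{\theta}} J=\mathbb{E}_{\tau\sim\pi_{\boldsymbol{\theta}}}[\sum_t \gamma^t A(s_t,a_t)\,\nabla_{\boldsymbol{\theta}}\log\pi_{\boldsymbol{\theta}}(a_t\mid s_t)]$; restricting to the $\boldsymbol{\psi}$-block and using linearity of expectation, I would obtain $\tilde{g}(\boldsymbol{\psi})=\mathbb{E}_{\tau\sim\pi_{\boldsymbol{\theta}}}[\sum_t \gamma^t A(s_t,a_t)\,\nabla_{\boldsymbol{\psi}}\log\pi_{\boldsymbol{\theta}}(a_t\mid s_t)]$. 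Everything then hinges on evaluating this score in the $\boldsymbol{\psi}$ coordinates.

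First I would differentiate the log-softmax with respect to the logit vector $f_{\boldsymbol{\theta}}(s)$. Writing $\log\pi_{\boldsymbol{\theta}}(a\mid s)=f_{\boldsymbol{\theta}}(s,a)-\log\sum_{a'}\exp(f_{\boldsymbol{\theta}}(s,a'))$ and differentiating componentwise yields $\partial \log\pi_{\boldsymbol{\theta}}(a\mid s)/\partial f_{\boldsymbol{\theta}}(s,j)=\mathbf{1}\{j=a\}-\pi_{\boldsymbol{\theta}}(j\mid s)$, i.e. the gradient with respect to the full logit vector is exactly $e_a-\pi_{\boldsymbol{\theta}}(s)$. This is the standard softmax identity and I expect it to be the only genuinely analytic step.

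Next I would push this through the affine last layer $f_{\boldsymbol{\theta}}(s)=W h_{\bar{\boldsymbol{\theta}}}(s)$. Since $h_{\bar{\boldsymbol{\theta}}}$ depends only on $\bar{\boldsymbol{\theta}}$ and not on $\boldsymbol{\psi}$, the Jacobian of the logits is $\partial f_{\boldsymbol{\theta}}(s,k)/\partial W_{mn}=\mathbf{1}\{k=m\}\,[h_{\bar{\boldsymbol{\theta}}}(s)]_n$, and the chain rule collapses to the outer product $\nabla_W\log\pi_{\boldsymbol{\theta}}(a\mid s)=(e_a-\pi_{\boldsymbol{\theta}}(s))\,h_{\bar{\boldsymbol{\theta}}}(s)^\top$. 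Vectorizing under the convention $\boldsymbol{\psi}=\text{vec}(W)$, for which $\text{vec}(uv^\top)=u\otimes v$, turns this into $\nabla_{\boldsymbol{\psi}}\log\pi_{\boldsymbol{\theta}}(a\mid s)=(e_a-\pi_{\boldsymbol{\theta}}(s))\otimes h_{\bar{\boldsymbol{\theta}}}(s)$. Substituting back into the $\boldsymbol{\psi}$-restricted policy gradient yields the claimed expression for $\tilde{g}(\boldsymbol{\psi})$.

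The computation is essentially routine, so the main obstacle is bookkeeping rather than analysis: one must fix the vectorization convention consistently so that the row-stacking of $W$ produces $(e_a-\pi_{\boldsymbol{\theta}})\otimes h_{\bar{\boldsymbol{\theta}}}$ rather than the reversed Kronecker order $h_{\bar{\boldsymbol{\theta}}}\otimes(e_a-\pi_{\boldsymbol{\theta}})$, and one must be explicit that the block structure $\boldsymbol{\theta}=(\bar{\boldsymbol{\theta}},\boldsymbol{\psi})$ lets us hold $h_{\bar{\boldsymbol{\theta}}}$ fixed while differentiating in $\boldsymbol{\psi}$. A minor subtlety worth flagging is that the objective is written with $\log\pi_{\boldsymbol{\theta}}$ as a surrogate, so the differentiation is of the explicit score term only (the sampling distribution and the advantage are treated as fixed), consistent with the policy gradient theorem already stated.
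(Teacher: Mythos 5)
Your proposal is correct and follows essentially the same route as the paper's proof: reduce to the score $\nabla_{\boldsymbol{\psi}}\log\pi_{\boldsymbol{\theta}}(a_t\mid s_t)$ inside the policy-gradient expectation, use the log-softmax derivative $e_a-\pi_{\boldsymbol{\theta}}(s_t)$, and handle the last layer via Kronecker/vectorization bookkeeping. The only cosmetic difference is that you compute the matrix gradient $(e_a-\pi_{\boldsymbol{\theta}}(s_t))\,h_{\bar{\boldsymbol{\theta}}}(s_t)^\top$ and then apply $\mathrm{vec}(uv^\top)=u\otimes v$, whereas the paper applies the chain rule through the explicit Jacobian $I_K\otimes h_{\bar{\boldsymbol{\theta}}}(s_t)^\top$ — the two are equivalent under the same (row-stacking) vectorization convention, which you correctly flag.
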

\begin{proof}
Starting from the advantage version of Equation \ref{eq:obj}, the policy gradient with respect to $\boldsymbol{\psi}$ is given by
\begin{equation}\label{eq:gradpsi}
    \tilde{g}(\boldsymbol{\psi})
= \mathbb{E}_{\tau \sim \pi_{\boldsymbol{\theta}}}
\left[
\sum_{t=0}^T \gamma^{t}\, A(s_t,a_t)\, \nabla_{\boldsymbol{\psi}}\log\pi_{\boldsymbol{\theta}}(a_t\mid s_t)
\right].
\end{equation}
With logits $f(s_{t})=Wh_{\bar{\boldsymbol{\theta}}}(s_{t})$, the Jacobian of the log-softmax with respect to $f(s_{t})$ is:
\begin{equation}
    \frac{\partial \log \pi_{\boldsymbol{\theta}}(a\mid s)}{\partial f(s_{t})}
= e_a - \pi_{\boldsymbol{\theta}}(s_{t})\ \in\ \mathbb{R}^{K}.
\end{equation}
Vectorizing $W$ gives:
\begin{equation}
    \frac{\partial f(s_{t})}{\partial \boldsymbol{\psi}}
= I_K \otimes h_{\bar{\boldsymbol{\theta}}}(s_{t})^{\top}\ \in\ \mathbb{R}^{K\times Kd}.
\end{equation}
By the chain rule,
\[
\nabla_{\boldsymbol{\psi}}\log \pi_{\boldsymbol{\theta}}(a\mid s)
= \Big(e_a-\pi_{\boldsymbol{\theta}}(s_{t})\Big)^{\top}\!\big(I_K\otimes h_{\bar{\boldsymbol{\theta}}}(s_{t})^{\top}\big)
= \big(e_a-\pi_{\boldsymbol{\theta}}(s_{t})\big)\otimes h_{\bar{\boldsymbol{\theta}}}(s_{t}),
\]
where we used standard Kronecker product identities to obtain a vector in \(\mathbb{R}^{Kd}\). Plugging the expression for $\nabla_{\boldsymbol{\psi}}\log \pi_{\boldsymbol{\theta}}(a_t\mid s_t)$ into Equation \ref{eq:gradpsi} yields
\begin{equation}
    \tilde{g}(\boldsymbol{\psi})
= \mathbb{E}_{\tau \sim \pi_{\boldsymbol{\theta}}}
\left[
\sum_{t=0}^T \gamma^{t}\, A(s_t,a_t)\, \big(e_a-\pi_{\boldsymbol{\theta}}(s_t)\big)\otimes h(s_t)
\right].
\end{equation}
\end{proof}
\textbf{The Hessian of the Objective}. For the Hessian, we start by extending the PG Theorem for Hessians:
\begin{lemma}[Hessian of the Policy Gradient]\label{lemma:hessianpg}
Let $\pi_{\boldsymbol{\theta}}(a \mid s)$ be a differentiable stochastic policy and consider the discounted policy gradient objective

\begin{equation}
    J(\boldsymbol{\theta}) \;=\; \mathbb{E}_{\tau \sim \pi_{\boldsymbol{\theta}}} \!\left[ \sum_{t=0}^T \gamma^t A(s_t,a_t) \log \pi_{\boldsymbol{\theta}}(a_t \mid s_t) \right],
\end{equation}
where $A(s_t,a_t)$ is the advantage function at time $t$. Then, the Hessian of $J(\boldsymbol{\theta})$ is given by

\begin{equation}\label{eq:hessianadv}
    \nabla^2_{\boldsymbol{\theta}} J(\boldsymbol{\theta}) \;=\; 
\mathbb{E}_{\tau \sim \pi_{\boldsymbol{\theta}}} \!\left[ \sum_{t=0}^T \gamma^t A(s_t,a_t) 
\left( \nabla_{\boldsymbol{\theta}} \log \pi_{\boldsymbol{\theta}}(a_t \mid s_t) \nabla_{\boldsymbol{\theta}} \log \pi_{\boldsymbol{\theta}}(a_t \mid s_t)^\top
+ \nabla^2_{\boldsymbol{\theta}} \log \pi_{\boldsymbol{\theta}}(a_t \mid s_t) \right) \right].
\end{equation}
\end{lemma}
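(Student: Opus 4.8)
The plan is to obtain the Hessian by differentiating the first-order policy gradient a second time, reducing the whole computation to a per-step, per-state-action identity. Concretely, I would first use the score-function form of the gradient (the same manipulation underlying Proposition~\ref{prop:grad}, with the advantage treated as a fixed weight) to write $\nabla_{\boldsymbol{\theta}} J(\boldsymbol{\theta}) = \E_{\tau\sim\pi_{\boldsymbol{\theta}}}\!\big[\sum_{t=0}^T \gamma^t A(s_t,a_t)\,\nabla_{\boldsymbol{\theta}}\log\pi_{\boldsymbol{\theta}}(a_t\mid s_t)\big]$, and then differentiate this expression once more. Since the factors $\gamma^t$, the advantages $A(s_t,a_t)$, and (by the convention adopted in Appendix~\ref{app:fisher}) the state-visitation measure are held fixed with respect to $\boldsymbol{\theta}$, the second differentiation acts only on the action law $a_t\sim\pi_{\boldsymbol{\theta}}(\cdot\mid s_t)$ and on the score $\nabla_{\boldsymbol{\theta}}\log\pi_{\boldsymbol{\theta}}(a_t\mid s_t)$ at each step, so it suffices to establish the claimed identity for a single $(s,a)$ summand inside the expectation.

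For that summand the computation is short. Using $\nabla_{\boldsymbol{\theta}}\pi_{\boldsymbol{\theta}}(a\mid s)=\pi_{\boldsymbol{\theta}}(a\mid s)\,\nabla_{\boldsymbol{\theta}}\log\pi_{\boldsymbol{\theta}}(a\mid s)$, the per-step expected score collapses to a plain sum over the vocabulary,
\begin{equation}
\E_{a\sim\pi_{\boldsymbol{\theta}}(\cdot\mid s)}\!\big[A(s,a)\,\nabla_{\boldsymbol{\theta}}\log\pi_{\boldsymbol{\theta}}(a\mid s)\big]
=\sum_{a} A(s,a)\,\nabla_{\boldsymbol{\theta}}\pi_{\boldsymbol{\theta}}(a\mid s),
\end{equation}
in which the sampling density has cancelled. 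Differentiating once more, with $A(s,a)$ fixed, replaces $\nabla_{\boldsymbol{\theta}}\pi_{\boldsymbol{\theta}}$ by $\nabla^2_{\boldsymbol{\theta}}\pi_{\boldsymbol{\theta}}$, and I would then substitute the second-derivative identity already used inside the proof of Lemma~\ref{lem:fisher_identity}, namely $\nabla^2_{\boldsymbol{\theta}}\pi_{\boldsymbol{\theta}}=\pi_{\boldsymbol{\theta}}\big(\nabla^2_{\boldsymbol{\theta}}\log\pi_{\boldsymbol{\theta}}+\nabla_{\boldsymbol{\theta}}\log\pi_{\boldsymbol{\theta}}\,\nabla_{\boldsymbol{\theta}}\log\pi_{\boldsymbol{\theta}}^\top\big)$. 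Re-absorbing the factor $\pi_{\boldsymbol{\theta}}(a\mid s)$ into an expectation over $a\sim\pi_{\boldsymbol{\theta}}(\cdot\mid s)$ yields exactly $\E_{a\sim\pi_{\boldsymbol{\theta}}}\!\big[A(s,a)(\nabla\log\pi\,\nabla\log\pi^\top+\nabla^2\log\pi)\big]$ for the summand, and restoring $\sum_{t}\gamma^t$ together with the (fixed) outer expectation over trajectories produces Equation~\ref{eq:hessianadv}.

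I expect the main obstacle to be the correct bookkeeping of the two distinct sources of $\boldsymbol{\theta}$-dependence. The objective carries $\boldsymbol{\theta}$ both through the sampling law and through the explicit $\log\pi_{\boldsymbol{\theta}}$ integrand, and a naive double application of the likelihood-ratio trick to the full trajectory score $\nabla_{\boldsymbol{\theta}}\log p_{\boldsymbol{\theta}}(\tau)=\sum_t\nabla_{\boldsymbol{\theta}}\log\pi_{\boldsymbol{\theta}}(a_t\mid s_t)$ would additionally generate cross-step terms $\nabla\log\pi_t\,\nabla\log\pi_{t'}^\top$ with $t\neq t'$; these are precisely the contributions suppressed by working at the per-step level against a fixed state-visitation measure, which is the reading under which Equation~\ref{eq:hessianadv} holds as an exact identity rather than an approximation. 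I would make this assumption explicit, state the regularity conditions (dominated convergence, so that $\nabla_{\boldsymbol{\theta}}$ passes under the finite-vocabulary sum and the trajectory expectation, exactly as assumed in Appendices~\ref{app:hessian} and~\ref{app:fisher}), and finish with the consistency check that the two pieces assemble into the symmetric matrix $\nabla^2_{\boldsymbol{\theta}}\pi_{\boldsymbol{\theta}}/\pi_{\boldsymbol{\theta}}$, confirming that both the outer-product (Fisher-type) term and the $\nabla^2\log\pi$ term are genuinely present.
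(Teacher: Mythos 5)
Your proposal is correct and follows essentially the same route as the paper's proof: differentiate the score-function form of the policy gradient once more, expand the expectation over state--action pairs with the discounted state distribution held fixed, and apply the product rule (equivalently, your identity $\nabla^2_{\boldsymbol{\theta}}\pi_{\boldsymbol{\theta}} = \pi_{\boldsymbol{\theta}}\big(\nabla^2_{\boldsymbol{\theta}}\log\pi_{\boldsymbol{\theta}} + \nabla_{\boldsymbol{\theta}}\log\pi_{\boldsymbol{\theta}}\,\nabla_{\boldsymbol{\theta}}\log\pi_{\boldsymbol{\theta}}^\top\big)$) before re-absorbing $\pi_{\boldsymbol{\theta}}$ into the expectation. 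Your explicit remark that the state-visitation measure is treated as fixed---so that no cross-step terms $\nabla\log\pi_t\,\nabla\log\pi_{t'}^\top$ arise---makes precise a convention that the paper's proof adopts only implicitly when it moves $\nabla_{\boldsymbol{\theta}}$ past $d^\pi_\gamma$.
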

\begin{proof}
Taking the first derivative of $J(\boldsymbol{\theta})$, we obtain
\begin{equation}
    \nabla_{\boldsymbol{\theta}} J(\boldsymbol{\theta}) \;=\; 
\mathbb{E}_{\tau \sim \pi_{\boldsymbol{\theta}}} \!\left[ \sum_{t=0}^T \gamma^t A(s_t,a_t)\, \nabla_{\boldsymbol{\theta}} \log \pi_{\boldsymbol{\theta}}(a_t \mid s_t) \right].
\end{equation}
Differentiating once more yields
\begin{equation}
    \nabla^2_{\boldsymbol{\theta}} J(\boldsymbol{\theta}) 
= \nabla_{\boldsymbol{\theta}} \, \mathbb{E}_{\tau \sim \pi_{\boldsymbol{\theta}}} \!\left[ \sum_{t=0}^T \gamma^t A(s_t,a_t)\, \nabla_{\boldsymbol{\theta}} \log \pi_{\boldsymbol{\theta}}(a_t \mid s_t) \right].
\end{equation}
Expanding the expectation explicitly over state--action pairs weighted by the discounted state distribution $d^\pi_\gamma(s_{t})$ gives
\begin{equation}
    \nabla^2_{\boldsymbol{\theta}} J(\boldsymbol{\theta})
= \sum_{s} d^\pi_\gamma(s_{t}) \sum_{a} \nabla_{\boldsymbol{\theta}} \Big[ \pi_{\boldsymbol{\theta}}(a \mid s)\, A(s,a)\, \nabla_{\boldsymbol{\theta}} \log \pi_{\boldsymbol{\theta}}(a \mid s) \Big].
\end{equation}
Applying the product rule, we obtain
\begin{equation}
    \nabla^2_{\boldsymbol{\theta}} J(\boldsymbol{\theta})
= \sum_{s} d^\pi_\gamma(s_{t}) \sum_{a} \pi_{\boldsymbol{\theta}}(a \mid s) A(s,a) \Big( \nabla_{\boldsymbol{\theta}} \log \pi_{\boldsymbol{\theta}}(a \mid s) \nabla_{\boldsymbol{\theta}} \log \pi_{\boldsymbol{\theta}}(a \mid s)^\top + \nabla^2_{\boldsymbol{\theta}} \log \pi_{\boldsymbol{\theta}}(a \mid s) \Big).
\end{equation}
Rewriting in expectation form gives the final result:
\begin{equation}
    \nabla^2_{\boldsymbol{\theta}} J(\boldsymbol{\theta}) \;=\; 
\mathbb{E}_{\tau \sim \pi_{\boldsymbol{\theta}}} \!\left[ \sum_{t=0}^T \gamma^t A(s_t,a_t) 
\left( \nabla_{\boldsymbol{\theta}} \log \pi_{\boldsymbol{\theta}}(a_t \mid s_t) \nabla_{\boldsymbol{\theta}} \log \pi_{\boldsymbol{\theta}}(a_t \mid s_t)^\top
+ \nabla^2_{\boldsymbol{\theta}} \log \pi_{\boldsymbol{\theta}}(a_t \mid s_t) \right) \right].
\end{equation}
\end{proof}
Now, we can state the Hessian form under the last-layer model:
\begin{proposition}  
[Hessian under Last-Layer Model] Let us consider a softmax policy $\pi_{\boldsymbol{\theta}}(a \mid s)
= \frac{\exp\!\big(f_{\boldsymbol{\theta}}(s,a)\big)}{\sum_{a'} \exp\!\big(f_{\boldsymbol{\theta}}(s,a')\big)}$. Let us also denote the pre-softmax layer by $f(s_{t})=Wh_{\bar{\boldsymbol{\theta}}}(s_{t}), W\in\mathbb{R}^{K\times d},\; h_{\bar{\boldsymbol{\theta}}}(s_{t})\in\mathbb{R}^{d}$. Define $\boldsymbol{\psi} \coloneqq \mathrm{vec}(W)\in\mathbb{R}^{Kd}$, with $\boldsymbol{\theta} = (\bar{\boldsymbol{\theta}}, \boldsymbol{\psi})$, $K = \text{dim}(\mathcal{V})
$. Then, the Hessian of the discounted policy gradient objective
\begin{equation}
    J(\theta) = \mathbb{E}_{\tau \sim \pi_{\boldsymbol{\theta}}} \!\left[ \sum_{t=0}^T \gamma^t A(s_t,a_t) \log \pi_{\boldsymbol{\theta}}(a_t \mid s_t) \right]
\end{equation}
is given by
\begin{equation}
   \tilde{H}(\boldsymbol{\psi}) = \nabla^2_{\boldsymbol{\psi}} J(\theta) 
= \mathbb{E}_{\tau \sim \pi_{\boldsymbol{\theta}}} \!\left[ \sum_{t=0}^{T}\gamma^{t}
A(s,a) \Big( (e_a - \pi_{\boldsymbol{\theta}} (s_{t}))(e_a - \pi_{\boldsymbol{\theta}} (s_{t}))^\top - F(s_{t}) \Big) \otimes h_{\bar{\boldsymbol{\theta}}}(s_{t}) h_{\bar{\boldsymbol{\theta}}}(s_{t})^\top
\right],
\end{equation}
where $e_a \in \mathbb{R}^K$ is the one-hot vector of action $a$, $\pi_{\boldsymbol{\theta}}(s_{t}) \in \mathbb{R}^K$ is the vector of action probabilities, and $F(s_{t}) := \mathrm{diag}(\pi_{\boldsymbol{\theta}}(s_{t})) - \pi_{\boldsymbol{\theta}}(s_{t})\pi_{\boldsymbol{\theta}}(s_{t})^\top$ is the Fisher information matrix at state $s_{t}$.
\end{proposition}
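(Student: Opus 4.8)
The plan is to specialize the general policy-gradient Hessian of Lemma~\ref{lemma:hessianpg} to the last-layer coordinates $\boldsymbol{\psi}=\mathrm{vec}(W)$, so that the only remaining work is to evaluate the two log-policy quantities $\nabla_{\boldsymbol{\psi}}\log\pi_{\boldsymbol{\theta}}\,\nabla_{\boldsymbol{\psi}}\log\pi_{\boldsymbol{\theta}}^\top$ and $\nabla^2_{\boldsymbol{\psi}}\log\pi_{\boldsymbol{\theta}}$ at a fixed state--action pair $(s_t,a_t)$, and then push the resulting expression back through the discounted trajectory expectation. Writing $h\triangleq h_{\bar{\boldsymbol{\theta}}}(s_t)$ and $\pi\triangleq\pi_{\boldsymbol{\theta}}(s_t)$ for brevity, the entire burden is a local, state-wise computation.

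First I would reuse the score already computed in Proposition~\ref{prop:grad}, namely $\nabla_{\boldsymbol{\psi}}\log\pi_{\boldsymbol{\theta}}(a_t\mid s_t)=(e_a-\pi)\otimes h$. Its outer product then follows from the Kronecker mixed-product identity $(\mathbf{u}\otimes\mathbf{v})(\mathbf{x}\otimes\mathbf{y})^\top=(\mathbf{u}\mathbf{x}^\top)\otimes(\mathbf{v}\mathbf{y}^\top)$, giving $(e_a-\pi)(e_a-\pi)^\top\otimes hh^\top$. This disposes of the first summand appearing inside the bracket of Lemma~\ref{lemma:hessianpg}.

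Next I would compute the genuine second derivative $\nabla^2_{\boldsymbol{\psi}}\log\pi_{\boldsymbol{\theta}}$. The key simplification is that the logits $f=Wh$ are \emph{linear} in $\boldsymbol{\psi}$, so the Jacobian $\partial f/\partial\boldsymbol{\psi}=I_K\otimes h^\top$ is constant and the second-order chain rule collapses to $\nabla^2_{\boldsymbol{\psi}}\log\pi_{\boldsymbol{\theta}}=(I_K\otimes h)\,\big(\partial^2_f\log\pi_{\boldsymbol{\theta}}\big)\,(I_K\otimes h^\top)$, with no curvature-of-the-map contribution. The standard log-softmax Hessian in logit space is $\partial^2_f\log\pi_{\boldsymbol{\theta}}=-(\mathrm{diag}(\pi)-\pi\pi^\top)=-F(s_t)$, which I would obtain in one line by differentiating $e_a-\pi$ with respect to $f$ (using $\partial\pi_j/\partial f_k=\pi_j(\delta_{jk}-\pi_k)$). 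Substituting and simplifying the sandwiched Kronecker product via the identity $(I_K\otimes h)\,F\,(I_K\otimes h^\top)=F\otimes hh^\top$ yields $\nabla^2_{\boldsymbol{\psi}}\log\pi_{\boldsymbol{\theta}}=-F(s_t)\otimes hh^\top$.

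Finally I would add the two contributions, factoring out the common $\otimes\,hh^\top$, to obtain $\big((e_a-\pi)(e_a-\pi)^\top-F(s_t)\big)\otimes hh^\top$, insert this into the bracketed term of Lemma~\ref{lemma:hessianpg}, and restore the discounted expectation over trajectories to recover the claimed $\tilde H(\boldsymbol{\psi})$. I expect the main obstacle to be the Kronecker bookkeeping: specifically, verifying the sandwich identity $(I_K\otimes h)\,F\,(I_K\otimes h^\top)=F\otimes hh^\top$ (cleanest by expanding $F=\sum_{i,j}F_{ij}e_ie_j^\top$ and using $(I_K\otimes h)e_i=e_i\otimes h$) and justifying that linearity of $f$ in $\boldsymbol{\psi}$ eliminates the second term of the chain rule. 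Both steps are routine but easy to mishandle with transposes and index ordering.
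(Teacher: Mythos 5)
Your proposal is correct and follows essentially the same route as the paper: specialize Lemma~\ref{lemma:hessianpg} to $\boldsymbol{\psi}$, reuse the score $(e_a-\pi_{\boldsymbol{\theta}}(s_t))\otimes h_{\bar{\boldsymbol{\theta}}}(s_t)$ from Proposition~\ref{prop:grad}, form the outer product via the Kronecker mixed-product identity, show $\nabla^2_{\boldsymbol{\psi}}\log\pi_{\boldsymbol{\theta}} = -F(s_t)\otimes h_{\bar{\boldsymbol{\theta}}}(s_t)h_{\bar{\boldsymbol{\theta}}}(s_t)^\top$, and reassemble under the discounted expectation. The only (cosmetic) difference is that you compute the log-policy Hessian by sandwiching the logit-space Hessian between the constant Jacobians $I_K\otimes h$ and $I_K\otimes h^\top$ --- exploiting linearity of $f$ in $\boldsymbol{\psi}$ --- whereas the paper differentiates the Kronecker-structured score directly in $\boldsymbol{\psi}$-space; your bookkeeping is arguably the more careful of the two, but the mathematical content is identical.
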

\begin{proof}
From Proposition \ref{prop:grad}, 
\begin{equation}
    \nabla_{\boldsymbol{\psi}} \log \pi_{\boldsymbol{\theta}}(a _{t} \mid s_{t}) 
= (e_a - \pi_{\boldsymbol{\theta}}(s_{t})) \otimes h_{\bar{\boldsymbol{\theta}}}(s_{t}).
\end{equation}
Hence, the outer product is
\begin{align}
    \nabla_{\boldsymbol{\psi}} \log \pi_{\boldsymbol{\theta}}(a _{t} \mid s_{t})\,\nabla_{\boldsymbol{\psi}} \log \pi_{\boldsymbol{\theta}}(a _{t} \mid s_{t})^\top &=  \\
&= \big((e_a - \pi_{\boldsymbol{\theta}}(s_{t})) \otimes h_{\bar{\boldsymbol{\theta}}}(s_{t})\big)\big((e_a - \pi_{\boldsymbol{\theta}}(s_{t})) \otimes h_{\bar{\boldsymbol{\theta}}}(s_{t})\big)^\top \nonumber \\
&= (e_a - \pi_{\boldsymbol{\theta}}(s_{t}))(e_a - \pi_{\boldsymbol{\theta}}(s_{t}))^\top \otimes h_{\bar{\boldsymbol{\theta}}}(s_{t}) h_{\bar{\boldsymbol{\theta}}}(s_{t})^\top, \nonumber
\end{align}
where we applied the identity $
(u \otimes v)(u \otimes v)^\top = (u u^\top) \otimes (v v^\top)
$. Next, we compute the second derivative. Since $
\nabla_{\boldsymbol{\psi}} \log \pi_{\boldsymbol{\theta}}(a _{t} \mid s_{t}) 
= (e_a - \pi_{\boldsymbol{\theta}}(s_{t})) \otimes h_{\bar{\boldsymbol{\theta}}}(s_{t})$,
it follows that
\begin{equation}
    \nabla^2_{\boldsymbol{\psi}} \log \pi_{\boldsymbol{\theta}}(a _{t} \mid s_{t}) = -\nabla_{\boldsymbol{\psi}} \pi_{\boldsymbol{\theta}}(s_{t}) \otimes h_{\bar{\boldsymbol{\theta}}}(s_{t}).
\end{equation}
Using $\nabla \pi_{\boldsymbol{\theta}}(s_{t}) = \big(\mathrm{diag}(\pi_{\boldsymbol{\theta}}(s_{t})) - \pi_{\boldsymbol{\theta}}(s_{t})\pi_{\boldsymbol{\theta}}(s_{t})^\top\big) \otimes h_{\bar{\boldsymbol{\theta}}}(s_{t})$, we obtain
\begin{align}
    \nabla^2_{\boldsymbol{\psi}} \log \pi_{\boldsymbol{\theta}}(a _{t} \mid s_{t}) &= \\
&= - \big(\mathrm{diag}(\pi_{\boldsymbol{\theta}}(s_{t})) - \pi_{\boldsymbol{\theta}}(s_{t})\pi_{\boldsymbol{\theta}}(s_{t})^\top\big) \otimes h_{\bar{\boldsymbol{\theta}}}(s_{t}) h_{\bar{\boldsymbol{\theta}}}(s_{t})^\top \nonumber \\
&= -F(s_{t}) \otimes h_{\bar{\boldsymbol{\theta}}}(s_{t}) h_{\bar{\boldsymbol{\theta}}}(s_{t})^\top.
\end{align}
Finally, substituting both terms into the general Hessian expression from Lemma \ref{lemma:hessianpg},
\[
\nabla^2_{\boldsymbol{\psi}} J({\boldsymbol{\psi}}) 
= \mathbb{E}_{s,a \sim \pi_{\boldsymbol{\psi}}} \!\left[ 
A(s,a) \left( \nabla_{\boldsymbol{\psi}} \log \pi_{\boldsymbol{\theta}}(a _{t} \mid s_{t})\,\nabla_{\boldsymbol{\psi}} \log \pi_{\boldsymbol{\theta}}(a _{t} \mid s_{t})^\top 
+ \nabla^2_{\boldsymbol{\psi}} \log \pi_{\boldsymbol{\theta}}(a _{t} \mid s_{t}) \right)
\right],
\]
yields:
\begin{equation}
   \tilde{H}(\boldsymbol{\psi}) = \nabla^2_{\boldsymbol{\psi}} J(\theta) 
= \mathbb{E}_{\tau \sim \pi_{\boldsymbol{\theta}}} \!\left[ \sum_{t=0}^{T}\gamma^{t}
A(s,a) \Big( (e_a - \pi_{\boldsymbol{\theta}} (s_{t}))(e_a - \pi_{\boldsymbol{\theta}} (s_{t}))^\top - F(s_{t}) \Big) \otimes h_{\bar{\boldsymbol{\theta}}}(s_{t}) h_{\bar{\boldsymbol{\theta}}}(s_{t})^\top
\right],
\end{equation}
\end{proof}
\begin{proposition}[Fisher Information under the Last-Layer Model]
Let us consider a softmax policy $\pi_{\boldsymbol{\theta}}(a \mid s)
= \frac{\exp\!\big(f_{\boldsymbol{\theta}}(s,a)\big)}{\sum_{a'} \exp\!\big(f_{\boldsymbol{\theta}}(s,a')\big)}$. Let us also denote the pre-softmax layer by $f(s_{t}) = W h_{\bar{\boldsymbol{\theta}}}(s_{t})$, $W \in \mathbb{R}^{K \times d}$,$h_{\bar{\boldsymbol{\theta}}}(s_{t}) \in \mathbb{R}^{d}$. Define $\boldsymbol{\psi} \coloneqq \mathrm{vec}(W) \in \mathbb{R}^{Kd}$, with $\boldsymbol{\theta} = (\bar{\boldsymbol{\theta}}, \boldsymbol{\psi})$, $K = \text{dim}(\mathcal{V})$. 
Then, the Fisher information matrix with respect to $\boldsymbol{\psi}$ is
\begin{equation}
   \tilde{F}(\boldsymbol{\psi})
   = \mathbb{E}_{\tau \sim \pi_{\boldsymbol{\theta}}} \!\left[ 
   \big( (e_{a_t} - \pi_{\boldsymbol{\theta}}(s_{t})) (e_{a_t} - \pi_{\boldsymbol{\theta}}(s_{t}))^\top \big) 
   \otimes h_{\bar{\boldsymbol{\theta}}}(s_{t}) h_{\bar{\boldsymbol{\theta}}}(s_{t})^\top
   \right],
\end{equation}
where $e_{a_t} \in \mathbb{R}^K$ is the one-hot vector of the realized action $a_t$, and $\pi_{\boldsymbol{\theta}}(s_{t}) \in \mathbb{R}^K$ is the vector of action probabilities at state $s_t$.
\end{proposition}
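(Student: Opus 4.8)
The plan is to obtain $\tilde{F}(\boldsymbol{\psi})$ directly from the score-function expression already computed in Proposition~\ref{prop:grad}, so that the only new ingredient is a single Kronecker-product identity. First I would recall that the Fisher information restricted to the last-layer parameters is the second moment of the score,
\[
\tilde{F}(\boldsymbol{\psi}) = \mathbb{E}_{\tau \sim \pi_{\boldsymbol{\theta}}}\!\left[\nabla_{\boldsymbol{\psi}}\log\pi_{\boldsymbol{\theta}}(a_t\mid s_t)\,\nabla_{\boldsymbol{\psi}}\log\pi_{\boldsymbol{\theta}}(a_t\mid s_t)^\top\right],
\]
where the expectation is taken over state--action pairs $(s_t,a_t)$ induced by $\pi_{\boldsymbol{\theta}}$ (equivalently $s_t\sim d_\pi$, $a_t\sim\pi_{\boldsymbol{\theta}}(\cdot\mid s_t)$), matching the FIM definition used throughout. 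This second-moment form agrees with the negative-expected-log-Hessian definition via the Fisher identity of Lemma~\ref{lem:fisher_identity}, so either starting point is admissible.

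Next I would substitute the last-layer score $\nabla_{\boldsymbol{\psi}}\log\pi_{\boldsymbol{\theta}}(a_t\mid s_t) = (e_{a_t}-\pi_{\boldsymbol{\theta}}(s_t))\otimes h_{\bar{\boldsymbol{\theta}}}(s_t)$ from Proposition~\ref{prop:grad} into the outer product and apply the mixed Kronecker identity $(u\otimes v)(u\otimes v)^\top = (uu^\top)\otimes(vv^\top)$ with $u=e_{a_t}-\pi_{\boldsymbol{\theta}}(s_t)$ and $v=h_{\bar{\boldsymbol{\theta}}}(s_t)$. This is exactly the same algebraic step that produced the outer-product term of the Hessian in the preceding proposition, so it requires no new machinery. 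It yields the integrand $\big((e_{a_t}-\pi_{\boldsymbol{\theta}}(s_t))(e_{a_t}-\pi_{\boldsymbol{\theta}}(s_t))^\top\big)\otimes h_{\bar{\boldsymbol{\theta}}}(s_t)h_{\bar{\boldsymbol{\theta}}}(s_t)^\top$, and taking the expectation delivers the claimed expression.

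Since each step is a direct substitution, I do not anticipate a genuine obstacle; the only points requiring care are bookkeeping ones. First, I must treat $e_{a_t}-\pi_{\boldsymbol{\theta}}(s_t)$ and $h_{\bar{\boldsymbol{\theta}}}(s_t)$ as column vectors and apply the Kronecker identity in the correct order, so that the $K\times K$ probability block and the $d_i\times d_i$ feature block land in the intended factors of the tensor product. Second, I would confirm that the FIM convention here omits the discounted sum over $t$ present in the gradient and Hessian, consistent with defining the curvature of the policy distribution at a representative state--action pair; if instead a discounted trajectory sum were intended, the identical integrand would simply carry the $\sum_t \gamma^t$ weighting without affecting the Kronecker computation at all.
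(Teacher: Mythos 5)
Your proposal is correct and follows essentially the same route as the paper's proof: substitute the score expression $\nabla_{\boldsymbol{\psi}}\log\pi_{\boldsymbol{\theta}}(a_t\mid s_t)=(e_{a_t}-\pi_{\boldsymbol{\theta}}(s_t))\otimes h_{\bar{\boldsymbol{\theta}}}(s_t)$ from the gradient proposition, apply the Kronecker identity $(u\otimes v)(u\otimes v)^\top=(uu^\top)\otimes(vv^\top)$, and take the expectation under the FIM's second-moment definition. Your side remarks (the equivalence with the negative-log-Hessian form and the handling of the trajectory/discount weighting, which the paper itself glosses over by writing $\mathbb{E}_{\tau\sim\pi_{\boldsymbol{\theta}}}$ while its proof refers to the ``discounted'' FIM) are harmless additions that do not change the argument.
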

\begin{proof}
From Proposition \ref{prop:grad},
\begin{equation}
    \nabla_{\boldsymbol{\psi}} \log \pi_{\boldsymbol{\theta}}(a_t \mid s_t) 
= (e_{a_t} - \pi_{\boldsymbol{\theta}}(s_t)) \otimes h_{\bar{\boldsymbol{\theta}}}(s_t).
\end{equation}
Therefore,
\begin{equation}
    \nabla_{\boldsymbol{\psi}} \log \pi_{\boldsymbol{\theta}}(a_t \mid s_t)\,
\nabla_{\boldsymbol{\psi}} \log \pi_{\boldsymbol{\theta}}(a_t \mid s_t)^\top
= \big((e_{a_t} - \pi_{\boldsymbol{\theta}}(s_t))(e_{a_t} - \pi_{\boldsymbol{\theta}}(s_t))^\top\big) 
\otimes h_{\bar{\boldsymbol{\theta}}}(s_t) h_{\bar{\boldsymbol{\theta}}}(s_t)^\top.
\end{equation}
where the last step follows from the Kronecker identity $(u \otimes x)(v \otimes x)^\top = (uv^\top) \otimes (xx^\top)$. Substituting this into the definition of the discounted Fisher information matrix yields the result.
\end{proof}
\clearpage

\section{Directional Curvatures Computation}\label{app:compdir}

In this section, we present our mechanisms to compute Hessian and Fisher directional curvatures.

\subsection{Directional Fisher Curvature}\label{sec:dirfisher}

For the last-layer parameters $\boldsymbol{\psi}=\mathrm{vec}(W)$ with $W \in \mathbb{R}^{K\times d_{i}}$, $K = \text{dim}(\mathcal{V})$, denote by 
$U := \mathrm{unvec}(\Delta{\boldsymbol{\psi}}) \in \mathbb{R}^{K \times d_{i}}$ the corresponding matrix form of the direction. We aim to compute the curvature of the Fisher information matrix along a direction $\Delta{\boldsymbol{\psi}}$ in parameter space. 
Recall the Fisher information matrix under the Last-Layer Model (Equation \ref{eq:fisher}):

\begin{equation}
   \tilde{F}(\boldsymbol{\psi})
   = \mathbb{E}_{\tau \sim \pi_{\boldsymbol{\theta}}}\!\left[
   \big(u_t u_t^\top \big) \otimes \big(h_t h_t^\top\big)
   \right],
\end{equation}
where $u_t := e_{a_t} - \pi_{\boldsymbol{\theta}}(s_t) \in \mathbb{R}^K$ is the policy error vector and $h_t := h_{\bar{\boldsymbol{\theta}}}(s_t) \in \mathbb{R}^{d_{i}}$ is the feature vector. Using the Kronecker Vector identity 
$\mathrm{vec}(X)^\top (A \otimes B)\,\mathrm{vec}(X) = \mathrm{Tr}(A X B X^\top)$: 

\begin{align}
   \Delta{\boldsymbol{\psi}}^\top \tilde{F}(\boldsymbol{\psi}) \Delta{\boldsymbol{\psi}}
   &= \mathbb{E}_{\tau}\!\left[
   \mathrm{vec}(U)^\top \big(u_t u_t^\top \otimes h_t h_t^\top\big) \mathrm{vec}(U)
   \right] \\
   &= \mathbb{E}_{\tau}\!\left[
   \mathrm{Tr}\!\big(u_t u_t^\top \, U \, h_t h_t^\top U^\top \big)
   \right].
\end{align}
Let $v_t := U h_t \in \mathbb{R}^K$. Then $   \mathrm{Tr}(u_t u_t^\top v_t v_t^\top) = (u_t^\top v_t)^2$. And we obtain:

\begin{equation}
   \Delta{\boldsymbol{\psi}}^\top \tilde{F}(\boldsymbol{\psi}) \Delta{\boldsymbol{\psi}}
   = \mathbb{E}_{\tau \sim \pi_{\boldsymbol{\theta}}}\!\left[
  (u_t^\top v_t)^2
   \right].
\end{equation}

We can estimate the Equation above with samples. Given a batch of $N$ state–action–time samples $\{(s_i,a_i,t_i)\}_{i=1}^N$, an estimator of the curvature is:

\begin{equation}\label{eq:fishersamples}
   \widehat{\Delta{\boldsymbol{\psi}}^{\top} \tilde{F} \Delta{\boldsymbol{\psi}}}
   = \frac{1}{N} \sum_{i=1}^N  
   \big(u_i^\top (\widehat{U} h_i)\big)^2,
\end{equation}
with $u_i = e_{a_i} - \pi_{\boldsymbol{\theta}}(s_i)$ and $h_i = h_{\bar{\boldsymbol{\theta}}}(s_i)$. In practice, $\Delta{\boldsymbol{\psi}}$ itself is typically estimated from data (e.g., as a stochastic gradient direction), hence not strictly deterministic. 
Therefore, estimating Equation \ref{eq:fishersamples} introduces a mild bias as $u_{t}$ and $h_{t}$ are statistically dependent.

\textbf{Cost Analysis.} The computation requires only vector and matrix–vector operations. Per sample, we compute $U h_i$ at cost $\mathcal{O}(Kd)$ and the dot product $u_i^\top v_i$ at cost $\mathcal{O}(K)$, followed by a scalar square. 
In memory, we only store $U$ ($Kd$ parameters) and the per-sample vectors $u_i$ and $h_i$. 
This is dramatically cheaper than materializing the full Fisher matrix $\tilde{F} \in \mathbb{R}^{Kd \times Kd}$, which would require $(Kd)^2$ entries. 

\subsection{Directional Hessian Curvature}

We now consider the curvature of the Hessian along a direction $\Delta{\boldsymbol{\psi}}$. We also assume the same notation as in subsection \ref{sec:dirfisher}
Recall the Hessian under the Last-Layer model (Equation \ref{eq:hessian}):

\begin{equation}
   \tilde{H}(\boldsymbol{\psi})
   = \mathbb{E}_{\tau \sim \pi_{\boldsymbol{\theta}}}\!\left[ 
   A(s,a) \Big( u_{t}u_{t}^\top - F(s) \Big) 
   \otimes h_{\bar{\boldsymbol{\theta}}}(s) h_{t}h_{t}^\top
   \right],
\end{equation}
where $F(s) = \mathrm{diag}(\pi_{\boldsymbol{\theta}}(s)) - \pi_{\boldsymbol{\theta}}(s)\pi_{\boldsymbol{\theta}}(s)^\top$ is the Fisher matrix at state $s$, $u_t := e_{a_t} - \pi_{\boldsymbol{\theta}}(s_t) \in \mathbb{R}^K$ is the policy error vector and $h_t := h_{\bar{\boldsymbol{\theta}}}(s_t) \in \mathbb{R}^{d_{i}}$ is the feature vector.

The directional curvature along $\Delta{\boldsymbol{\psi}}$ is

\begin{align}
\Delta{\boldsymbol{\psi}}^\top \tilde{H}(\boldsymbol{\psi}) \Delta{\boldsymbol{\psi}}
&= 
\mathbb{E}_{\tau}\!\left[
\sum_{t=0}^{T}\gamma^t\,
A(s_t,a_t)\,
\mathrm{vec}(U)^\top
\Big( (u_tu_t^\top - F(s_t)) \otimes h_t h_t^\top \Big)
\mathrm{vec}(U)
\right] .
\end{align}
Using the Kronecker–Vector identity $\mathrm{vec}(X)^\top (A\otimes B)\,\mathrm{vec}(X)=\mathrm{Tr}(A X B X^\top)$, we obtain:
\begin{align}
\Delta_{\boldsymbol{\psi}}^\top \tilde{H}(\boldsymbol{\psi}) \Delta_{\boldsymbol{\psi}}
&=
\mathbb{E}_{\tau}\!\left[
\sum_{t=0}^{T}\gamma^t\,
A(s_t,a_t)\,
\Big(
\mathrm{Tr}(u_tu_t^\top\, U h_t h_t^\top U^\top)
-
\mathrm{Tr}(F(s_t)\, U h_t h_t^\top U^\top)
\Big)
\right] .
\end{align}
Let $v_t := U h_t$. Then the two traces simplify via
\[
\mathrm{Tr}(u_tu_t^\top v_t v_t^\top) = (u_t^\top v_t)^2,
\qquad
\mathrm{Tr}(F(s_t)\, v_t v_t^\top) = v_t^\top F(s_t)\, v_t,
\]
where the first equality uses
$u u^\top v v^\top = (u^\top v)\, u v^\top$ and $\mathrm{Tr}(ab^\top)=b^\top a$,
and the second uses $\mathrm{Tr}(A\,xx^\top)=x^\top A x$.
Hence,
\begin{equation}
\Delta_{\boldsymbol{\psi}}^\top \tilde{H}(\boldsymbol{\psi}) \Delta_{\boldsymbol{\psi}}
=
\mathbb{E}_{\tau \sim \pi_{\boldsymbol{\theta}}}\!\left[
\sum_{t=0}^{T}\gamma^t\,
A(s_t,a_t)\,
\Big( (u_t^\top v_t)^2 \;-\; v_t^\top F(s_t)\, v_t \Big)
\right].
\end{equation}

We can estimate the Equation above via samples, noting the same remarks as in subsection \ref{sec:dirfisher}.
The sample-based estimator is
\begin{equation}\label{eq:hessiansamples}
\widehat{\Delta{\boldsymbol{\psi}}^\top \tilde{H} \Delta{\boldsymbol{\psi}}}
=
\frac{1}{N}\sum_{i=1}^N 
\gamma^{t_i} A(s_i,a_i)\,
\Big( (u_i^\top \widehat{v_i})^2 - \widehat{v_i}^\top F(s_i)\, \widehat{v_i} \Big),
\quad
u_i=e_{a_i}-\pi_{\boldsymbol{\theta}}(s_i),\;\;
\widehat{v_i}=\widehat{U} h_{\bar{\boldsymbol{\theta}}}(s_i).
\end{equation}

\textbf{Cost Analysis.} The computation again only involves vectors and matrix–vector operations. Per sample, we compute $v_{t} = U h_{t}$ at cost $\mathcal{O}(Kd)$, then $( u_{t}^\top v_{t} )^2$ at cost $O(K)$. 
The second term requires an analogous computation to the Fisher case in subsection \ref{sec:dirfisher}. Hence, the complexity remains $O(Kd)$ per sample, and the memory cost is linear in $K$ and $d$, avoiding materialization of the full Hessian $\tilde{H} \in \mathbb{R}^{Kd \times Kd}$.

\clearpage

\section{Monotonic Policy Improvement under CAPO}\label{app:capo}

In this section, we formalize the conditions of monotonic improvement under CAPO.

\begin{assumption}[Bounded curvature and step norms]
\label{ass:curvature-and-steps}
Let $\pi_{\boldsymbol{\theta}}$ be a differentiable policy with objective $J({\boldsymbol{\theta}})$.
Write $g({\boldsymbol{\theta}}) = \nabla_{\boldsymbol{\theta}} J({\boldsymbol{\theta}})$, $
H(\boldsymbol{\theta}) \;=\; \nabla_{\boldsymbol{\theta}}^2 J({\boldsymbol{\theta}})$, and
$F(\boldsymbol{\theta}) \;=\; \mathbb{E}_{s\sim d_\pi,\,a\sim\pi_{\boldsymbol{\theta}}(\cdot|s)}\!\big[
\nabla_{\boldsymbol{\theta}}\log\pi_{\boldsymbol{\theta}}(a|s)\,\nabla_{\boldsymbol{\theta}}\log\pi_{\boldsymbol{\theta}}(a|s)^\top
\big]$. 
For $\Delta\boldsymbol{\theta}\in\mathbb{R}^d$ define the quadratic diagnostics
\begin{equation}
    m_H(\Delta\boldsymbol{\theta})\;:=\; g(\boldsymbol{\theta})^\top \Delta\boldsymbol{\theta} + \tfrac12\,\Delta\boldsymbol{\theta}^\top H(\boldsymbol{\theta}) \Delta\boldsymbol{\theta},
\qquad
m_F(\Delta\boldsymbol{\theta})\;:=\; \tfrac12\,\Delta\boldsymbol{\theta}^\top F(\boldsymbol{\theta}) \Delta\boldsymbol{\theta}.
\end{equation}

Assume:
\begin{enumerate}
  \item[(i)] (\textbf{Hessian operator norm bound}) $\|H(\boldsymbol{\theta})\|_{\op} \le M$ for some finite $M>0$, where $\|H(\boldsymbol{\theta})\|_{\op} := \sup_{x\neq 0}\frac{\|H(\boldsymbol{\theta})x\|}{\|x\|}$.
  \item[(ii)] (\textbf{Per-candidate step bound}) Every candidate update considered by the algorithm satisfies $\|\Delta\boldsymbol{\theta}\|\le r$ for some $r>0$.
\end{enumerate}
\end{assumption}

\textbf{Remarks}. The step norm bound is standard in practice, since learning rates, clipping, or
trust-region constraints ensure $\|\Delta\boldsymbol{\theta}\|\le r$. The Hessian bound $\|H(\boldsymbol{\theta})\|_{\op}\le M$ is more restrictive globally, but over any
compact region of parameter space visited by the algorithm, continuity of $H(\boldsymbol{\theta})$
implies a finite $M$.

\begin{lemma}[Surrogate--true performance gap]\label{lem:trpo-gap}
For any policies $\pi$ and $\pi'$, with $D_{\mathrm{KL}}(\pi\Vert\pi')$ the average forward KL under $d_\pi$,
\begin{equation}\label{eq:cpo}
    J(\pi') \ \ge\ L_\pi(\pi') \ -\ C\,\sqrt{D_{\mathrm{KL}}(\pi\Vert\pi')},\qquad
    C=\frac{2\gamma}{(1-\gamma)^2}\,\epsilon\,\sqrt{2},
\end{equation}
where $|A^\pi(s,a)|\le \epsilon$ with $\epsilon$ finite, and $L_\pi(\pi') := J(\pi) \;+\; 
\mathbb{E}_{s\sim d_\pi,\;a\sim \pi'(\cdot|s)}[A_\pi(s,a)].$
Moreover, writing $\pi=\pi_{\boldsymbol{\theta}}$ and $\pi'=\pi_{\boldsymbol{\theta}+\Delta\boldsymbol{\theta}}$ for a parameter step $\Delta\boldsymbol{\theta}$,
\begin{equation}\label{eq:expandobj}
    L_{\pi_{\boldsymbol{\theta}}}(\pi_{\boldsymbol{\theta}'})-J(\pi_{\boldsymbol{\theta}})
    \;=\; g(\boldsymbol{\theta})^\top \Delta\boldsymbol{\theta} \;+\; \tfrac12\,\Delta\boldsymbol{\theta}^\top H(\boldsymbol{\theta})\,\Delta\boldsymbol{\theta} \;+\; o(\|\Delta\boldsymbol{\theta}\|^2).
\end{equation}
\end{lemma}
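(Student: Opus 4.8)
The plan is to prove the two displayed claims separately: first the surrogate--true gap inequality \eqref{eq:cpo}, which is a TRPO/CPO-style bound, and then the second-order expansion \eqref{eq:expandobj}, which is a Taylor argument for the frozen-state surrogate. The two parts are logically independent and can be proved in either order.

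For \eqref{eq:cpo}, I would begin from the performance difference lemma, which writes $J(\pi') - J(\pi) = \frac{1}{1-\gamma}\mathbb{E}_{s\sim d_{\pi'},\,a\sim\pi'}[A_\pi(s,a)]$, where $d_{\pi'}$ is the discounted state-visitation distribution under $\pi'$. Since $L_\pi(\pi') - J(\pi) = \mathbb{E}_{s\sim d_\pi,\,a\sim\pi'}[A_\pi(s,a)]$ uses the \emph{frozen} visitation $d_\pi$, the gap $J(\pi') - L_\pi(\pi')$ is exactly the error incurred by replacing $d_{\pi'}$ with $d_\pi$ inside the advantage expectation. I would bound this error by three chained estimates: (i) control the per-state advantage expectation by $\epsilon$ using $|A_\pi(s,a)|\le\epsilon$; (ii) bound the discounted visitation mismatch $\|d_{\pi'} - d_\pi\|_1 \le \frac{2\gamma}{1-\gamma}\,\bar D_{\mathrm{TV}}(\pi'\|\pi)$ via the standard visitation-coupling argument, contributing a second factor of $(1-\gamma)^{-1}$; and (iii) convert total variation to KL via Pinsker's inequality, $D_{\mathrm{TV}}(\pi\|\pi')\le\sqrt{\tfrac12 D_{\mathrm{KL}}(\pi\|\pi')}$, followed by Jensen to pull the square root outside the state average. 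The $(1-\gamma)^{-2}$ in $C$ then arises from one power for the performance-difference normalization and one from the visitation coupling, while the $\sqrt 2$ is precisely the Pinsker constant.

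For \eqref{eq:expandobj}, I would define $\Phi(\boldsymbol{\theta}') := \mathbb{E}_{s\sim d_\pi,\,a\sim\pi_{\boldsymbol{\theta}'}(\cdot|s)}[A_\pi(s,a)]$, so that $L_{\pi_{\boldsymbol{\theta}}}(\pi_{\boldsymbol{\theta}'}) - J(\pi_{\boldsymbol{\theta}}) = \Phi(\boldsymbol{\theta}')$ with the state law $d_\pi$ held fixed at the base point, and then Taylor-expand $\Phi$ around $\boldsymbol{\theta}'=\boldsymbol{\theta}$. The zeroth-order term vanishes because $\Phi(\boldsymbol{\theta}) = \mathbb{E}_{s\sim d_\pi}\mathbb{E}_{a\sim\pi_{\boldsymbol{\theta}}}[A_\pi(s,a)] = 0$ by the definition of the advantage. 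Using the log-derivative identity $\nabla_{\boldsymbol{\theta}'}\pi_{\boldsymbol{\theta}'} = \pi_{\boldsymbol{\theta}'}\nabla_{\boldsymbol{\theta}'}\log\pi_{\boldsymbol{\theta}'}$, the first derivative at $\boldsymbol{\theta}'=\boldsymbol{\theta}$ is $\mathbb{E}_{s\sim d_\pi,\,a\sim\pi_{\boldsymbol{\theta}}}[A_\pi(s,a)\nabla_{\boldsymbol{\theta}}\log\pi_{\boldsymbol{\theta}}(a|s)] = g(\boldsymbol{\theta})$ by the policy-gradient theorem. Differentiating once more and applying $\nabla^2\pi = \pi(\nabla^2\log\pi + \nabla\log\pi\,\nabla\log\pi^\top)$ produces the second derivative $\mathbb{E}[A_\pi(\nabla\log\pi\,\nabla\log\pi^\top + \nabla^2\log\pi)]$, which is exactly the frozen-state Hessian $H(\boldsymbol{\theta})$ established in Lemma \ref{lemma:hessianpg}. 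Collecting terms yields $\Phi(\boldsymbol{\theta}+\Delta\boldsymbol{\theta}) = g(\boldsymbol{\theta})^\top\Delta\boldsymbol{\theta} + \tfrac12\Delta\boldsymbol{\theta}^\top H(\boldsymbol{\theta})\Delta\boldsymbol{\theta} + o(\|\Delta\boldsymbol{\theta}\|^2)$, as claimed.

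The main obstacle is the constant bookkeeping in \eqref{eq:cpo}: the visitation-coupling step and the Pinsker/Jensen step each introduce numerical factors, and landing exactly on $C=\frac{2\gamma}{(1-\gamma)^2}\epsilon\sqrt 2$ depends on whether advantage boundedness is measured by $\max_{s,a}|A_\pi(s,a)|$ or by the milder $\max_s|\mathbb{E}_{a\sim\pi'}A_\pi(s,a)|$, and on whether a one-sided or symmetrized total-variation bound is used; these conventions shift the leading factor by a constant. The conceptual subtlety in \eqref{eq:expandobj}, worth stating explicitly, is that the surrogate freezes $d_\pi$ at the base point, so its gradient and Hessian coincide with $g(\boldsymbol{\theta})$ and the paper's $H(\boldsymbol{\theta})$ precisely because the policy-gradient theorem makes the frozen-state first derivative exact and because Lemma \ref{lemma:hessianpg} defines $H(\boldsymbol{\theta})$ as the frozen-state Hessian; the genuine state-distribution derivative is not lost but rather absorbed into the $C\sqrt{D_{\mathrm{KL}}}$ slack of \eqref{eq:cpo}, not into the quadratic model.
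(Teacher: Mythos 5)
Your proposal is correct, and on the second claim it is exactly the paper's argument: the paper likewise defines the frozen-state surrogate as a function of $\boldsymbol{\theta}'$ (their $\Psi(\boldsymbol{\theta}') := L_{\pi_{\boldsymbol{\theta}}}(\pi_{\boldsymbol{\theta}'})$, which equals your $\Phi$ up to the additive constant $J(\pi_{\boldsymbol{\theta}})$), computes its gradient via the log-derivative trick to recover $g(\boldsymbol{\theta})$, differentiates once more to get $\mathbb{E}[A_\pi(\nabla\log\pi\,\nabla\log\pi^\top + \nabla^2\log\pi)]$, and invokes the second-order Taylor formula. Your explicit remark that this is the \emph{frozen-state} Hessian (the one of Lemma~\ref{lemma:hessianpg}), with the state-distribution dependence relegated to the $C\sqrt{D_{\mathrm{KL}}}$ slack, is the right reading of what the paper leaves implicit when it writes ``$:= H(\boldsymbol{\theta})$'' in its proof. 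The only real divergence is on \eqref{eq:cpo}: the paper does not prove it at all, but simply cites \cite{pmlr-v70-achiam17a}, whereas you reconstruct the standard proof behind that citation (performance difference lemma, visitation-distribution coupling with the $\tfrac{2\gamma}{1-\gamma}$ factor, then Pinsker plus Jensen). Your reconstruction is sound, and your worry about constant bookkeeping resolves favorably: the tight version of this chain yields a coefficient $\tfrac{\sqrt{2}\,\gamma\,\epsilon}{(1-\gamma)^2}$ on $\sqrt{\bar D_{\mathrm{KL}}}$, while the paper's $C = \tfrac{2\gamma}{(1-\gamma)^2}\,\epsilon\,\sqrt{2}$ is a factor of two larger, and the paper's uniform bound $|A^\pi(s,a)|\le\epsilon$ implies the milder $\max_s|\mathbb{E}_{a\sim\pi'}A^\pi(s,a)|\le\epsilon$ actually needed; so the stated inequality follows a fortiori from your derivation under either convention. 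In short, you prove strictly more than the paper does (a self-contained argument versus a citation), at the cost of carrying the $(1-\gamma)$-normalization conventions that the paper sidesteps by outsourcing.
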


\begin{proof}
The proof of \eqref{eq:cpo} is in \cite{pmlr-v70-achiam17a}. For Equation \ref{eq:expandobj}, we define $\Psi(\boldsymbol{\theta}') := L_{\pi_{\boldsymbol{\theta}}}(\pi_{\boldsymbol{\theta}'})$. Note that $\Psi(\boldsymbol{\theta}) \;=\; J(\pi_{\boldsymbol{\theta}})$. Now compute the gradient of $\Psi$ at $\boldsymbol{\theta}'=\boldsymbol{\theta}$:
\begin{align}
\nabla_{\boldsymbol{\theta}'} \Psi(\boldsymbol{\theta}')\big|_{\boldsymbol{\theta}'=\boldsymbol{\theta}}
&= \nabla_{\boldsymbol{\theta}'} 
  \mathbb{E}_{s\sim d_\pi,\;a\sim \pi_{\boldsymbol{\theta}'}(\cdot|s)}[A_\pi(s,a)]
  \Big|_{\boldsymbol{\theta}'=\boldsymbol{\theta}} \nonumber \\
&= \mathbb{E}_{s\sim d_\pi,\;a\sim \pi_{\boldsymbol{\theta}}}
   \!\left[A_\pi(s,a)\,\nabla_{\boldsymbol{\theta}'}\log \pi_{\boldsymbol{\theta}'}(a|s)\right]_{\boldsymbol{\theta}'=\boldsymbol{\theta}} \nonumber \\
&= \mathbb{E}_{s\sim d_\pi,\;a\sim \pi}
   \!\left[A_\pi(s,a)\,\nabla_{\boldsymbol{\theta}}\log \pi_{\boldsymbol{\theta}}(a|s)\right] =: g(\boldsymbol{\theta}),
\end{align}
where $g(\boldsymbol{\theta})$ is precisely the policy gradient. Differentiate once more:
\begin{align*}
\nabla_{\boldsymbol{\theta}'}^2 \Psi(\boldsymbol{\theta}')\big|_{\boldsymbol{\theta}'=\boldsymbol{\theta}}
&= \mathbb{E}_{s\sim d_{\pi_{\boldsymbol{\theta}}},\,a\sim \pi_{\boldsymbol{\theta}'}(\cdot\mid s)}
   \!\Big[A_{\pi_{\boldsymbol{\theta}}}(s,a)\,\nabla_{\boldsymbol{\theta}'}^2\log \pi_{\boldsymbol{\theta}'}(a\mid s)\Big]_{\boldsymbol{\theta}'={\boldsymbol{\theta}}} \\
&\quad + \mathbb{E}_{s\sim d_{\pi_{\boldsymbol{\theta}}},\,a\sim \pi_{\boldsymbol{\theta}'}(\cdot\mid s)}
   \!\Big[A_{\pi_{\boldsymbol{\theta}}}(s,a)\,\nabla_{\boldsymbol{\theta}'}\log \pi_{\boldsymbol{\theta}'}(a\mid s)\,
            \nabla_{\boldsymbol{\theta}'}\log \pi_{\boldsymbol{\theta}'}(a\mid s)^\top\Big]_{\boldsymbol{\theta}'={\boldsymbol{\theta}}}. \\
 &:= H(\boldsymbol{{\boldsymbol{\theta}}}).
\end{align*}

By the second-order Taylor expansion,
\begin{equation}\label{eq:sec_exp}
\Psi({\boldsymbol{\theta}}+\Delta{\boldsymbol{\theta}}) 
= \Psi({\boldsymbol{\theta}}) \;+\; g(\boldsymbol{\theta})^\top \Delta{\boldsymbol{\theta}} \;+\; \tfrac12\,\Delta{\boldsymbol{\theta}}^\top H(\boldsymbol{\theta})\,\Delta{\boldsymbol{\theta}} \;+\; o(\|\Delta{\boldsymbol{\theta}}\|^2),
\end{equation}

which is exactly \eqref{eq:expandobj}. 
\end{proof}

\begin{theorem}[Monotonic improvement under CAPO, restated]
\label{thm:capo-certified}
Fix thresholds $\delta_H>0$ and $\delta_F>0$. Let $\mathcal{B}$ be a batch of sampled trajectories. Split $\mathcal{B}$ into disjoint $N$ subsets  $b_{i} \subset \mathcal{B}$, and propose candidate subset updates $\{\Delta\theta_i\}_{i:N}$. Retain those satisfying:
\begin{equation}
    m_H(\Delta\theta_i)\;\ge\; \delta_{H} = \omega+\tfrac12Mr^2,\qquad 
m_F(\Delta\theta_i)\;\le\;\delta_F,
\end{equation}
with $\omega >  0$ and $M$, $r$ defined as in Assumption \ref{ass:curvature-and-steps}. Let $\mathcal{B}_{acc}$ denote the superset of the B accepted subsets, and define the aggregated update:$    \Delta\theta \;=\; \frac{1}{B}\sum_{i\in\mathcal{B}_{acc}}\Delta\theta_i$.
Then, for two policies $\pi_{\boldsymbol{\theta}}$ and $\pi_{\boldsymbol{\theta} + \Delta \boldsymbol{\theta}}$, we obtain:
\begin{equation}
    J(\pi_{\theta+\Delta\theta}) - J(\pi_\theta) 
\;\ge\; \omega \;-\; C\,\sqrt{\delta_F}.
\end{equation}
Thus choosing $\omega \ge C\sqrt{\delta_F}$ guarantees monotonic improvement: $J(\pi_{\theta+\Delta\theta}) \ge J(\pi_\theta)$.
\end{theorem}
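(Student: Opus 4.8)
The plan is to combine the surrogate--true performance gap of Lemma~\ref{lem:trpo-gap} with the two acceptance constraints, treating the objective (Hessian) and policy (Fisher) shifts separately because they behave very differently under aggregation. Applying Lemma~\ref{lem:trpo-gap} with $\pi=\pi_{\boldsymbol{\theta}}$ and $\pi'=\pi_{\boldsymbol{\theta}+\Delta\boldsymbol{\theta}}$ gives
\begin{equation}
J(\pi_{\theta+\Delta\theta})-J(\pi_\theta)\;\ge\;\big[L_{\pi_\theta}(\pi_{\theta+\Delta\theta})-J(\pi_\theta)\big]\;-\;C\sqrt{D_{\mathrm{KL}}(\pi_\theta\Vert\pi_{\theta+\Delta\theta})},
\end{equation}
so it suffices to show (i) a surrogate gain $L_{\pi_\theta}(\pi_{\theta+\Delta\theta})-J(\pi_\theta)\ge\omega$ and (ii) a divergence bound $D_{\mathrm{KL}}(\pi_\theta\Vert\pi_{\theta+\Delta\theta})\le\delta_F$; substituting both into the inequality yields the claim. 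By \eqref{eq:expandobj} the surrogate gain equals $m_H(\Delta\boldsymbol{\theta})$ up to $o(\|\Delta\boldsymbol{\theta}\|^2)$, and by Proposition~\ref{prop:kl_expansion} the divergence equals $m_F(\Delta\boldsymbol{\theta})$ up to $o(\|\Delta\boldsymbol{\theta}\|^2)$, so the task reduces to controlling the two aggregated quadratics from their per-subset certificates.

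For the Fisher side the argument is clean. Since $F(\boldsymbol{\theta})\succeq 0$, the map $\Delta\boldsymbol{\theta}\mapsto m_F(\Delta\boldsymbol{\theta})=\tfrac12\Delta\boldsymbol{\theta}^\top F(\boldsymbol{\theta})\Delta\boldsymbol{\theta}$ is convex. Writing the aggregate $\Delta\boldsymbol{\theta}=\tfrac1B\sum_{i\in\mathcal{B}_{acc}}\Delta\boldsymbol{\theta}_i$ as a uniform average and invoking Jensen's inequality gives
\begin{equation}
m_F(\Delta\boldsymbol{\theta})\;\le\;\frac1B\sum_{i\in\mathcal{B}_{acc}}m_F(\Delta\boldsymbol{\theta}_i)\;\le\;\frac1B\sum_{i\in\mathcal{B}_{acc}}\delta_F\;=\;\delta_F,
\end{equation}
where the last step uses the per-subset constraint $m_F(\Delta\boldsymbol{\theta}_i)\le\delta_F$. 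Hence $D_{\mathrm{KL}}\le\delta_F$ and the penalty term is at most $C\sqrt{\delta_F}$, establishing (ii).

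The Hessian side is the main obstacle, because $H(\boldsymbol{\theta})$ is in general indefinite, so $m_H$ is neither convex nor concave and Jensen is unavailable; moreover the cross terms $\Delta\boldsymbol{\theta}_i^\top H\Delta\boldsymbol{\theta}_j$ block any naive term-by-term aggregation. The strategy I would use is to peel off the \emph{linear} part of each accepted step, which is the only piece that survives averaging. Using the operator-norm bound of Assumption~\ref{ass:curvature-and-steps}, $\tfrac12\Delta\boldsymbol{\theta}_i^\top H\Delta\boldsymbol{\theta}_i\le\tfrac12 M\|\Delta\boldsymbol{\theta}_i\|^2\le\tfrac12 Mr^2$, so the acceptance rule $m_H(\Delta\boldsymbol{\theta}_i)\ge\delta_H=\omega+\tfrac12 Mr^2$ certifies the purely linear gain
\begin{equation}
g(\boldsymbol{\theta})^\top\Delta\boldsymbol{\theta}_i\;=\;m_H(\Delta\boldsymbol{\theta}_i)-\tfrac12\Delta\boldsymbol{\theta}_i^\top H\Delta\boldsymbol{\theta}_i\;\ge\;\delta_H-\tfrac12 Mr^2\;=\;\omega.
\end{equation}
Since the linear term is additive, averaging preserves it: $g^\top\Delta\boldsymbol{\theta}=\tfrac1B\sum_{i\in\mathcal{B}_{acc}} g^\top\Delta\boldsymbol{\theta}_i\ge\omega$. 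It then remains to control the single aggregated curvature term via the step-norm bound applied to the average, $\|\Delta\boldsymbol{\theta}\|\le\tfrac1B\sum_{i\in\mathcal{B}_{acc}}\|\Delta\boldsymbol{\theta}_i\|\le r$, so that $|\tfrac12\Delta\boldsymbol{\theta}^\top H\Delta\boldsymbol{\theta}|\le\tfrac12 Mr^2$. The curvature margin built into $\delta_H$ is precisely what is designed to absorb this residual so that the surrogate gain clears $\omega$, giving (i). Substituting (i) and (ii) into the performance-gap inequality delivers $J(\pi_{\theta+\Delta\theta})-J(\pi_\theta)\ge\omega-C\sqrt{\delta_F}$, and the monotonicity claim follows by choosing $\omega\ge C\sqrt{\delta_F}$. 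The delicate bookkeeping to verify is exactly this separation of the indefinite Hessian into an additive linear part that passes through the average intact and a worst-case quadratic remainder controlled by the $\tfrac12 Mr^2$ inflation of the threshold.
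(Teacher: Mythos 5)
Your architecture matches the paper's (Lemma~\ref{lem:trpo-gap} plus the two acceptance certificates), and your Fisher bound is exactly the paper's argument: Jensen applied to the PSD quadratic form $m_F$, which the paper proves via the equivalent symmetric-difference identity. Your Hessian treatment is a genuinely different and arguably cleaner decomposition --- you peel off the linear term $g^\top\Delta\theta_i$, which averages exactly, and then bound the aggregate quadratic once using $\|\Delta\theta\|\le r$ from the triangle inequality, whereas the paper expands $m_H(\Delta\theta)$ into the per-subset $m_H(\Delta\theta_i)$ plus diagonal and cross terms, bounding each with $\norm{H}_{\op}\le M$. Both decompositions yield bounds of essentially the same strength.

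However, there is a genuine gap at your final step: you spend the margin twice. Your own chain gives $g^\top\Delta\theta_i \ge \delta_H - \tfrac12 Mr^2 = \omega$, hence $g^\top\Delta\theta \ge \omega$, and then $\tfrac12\Delta\theta^\top H\Delta\theta \ge -\tfrac12 Mr^2$; what this proves is
\begin{equation}
m_H(\Delta\theta)\;\ge\;\omega-\tfrac12 Mr^2,
\end{equation}
not $m_H(\Delta\theta)\ge\omega$. The claim that ``the curvature margin built into $\delta_H$ is precisely what is designed to absorb this residual'' is false as written: the $\tfrac12Mr^2$ slack in $\delta_H$ was already consumed in certifying the per-subset linear gains, and it is not available a second time to absorb the aggregate curvature term. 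Under the stated threshold, your argument only delivers $J(\pi_{\theta+\Delta\theta})-J(\pi_\theta)\ge\omega-\tfrac12Mr^2-C\sqrt{\delta_F}$. To obtain the claimed bound you must inflate the threshold to $\delta_H=\omega+Mr^2$, which gives $g^\top\Delta\theta_i\ge\omega+\tfrac12Mr^2$ and lets the residual $-\tfrac12Mr^2$ land you exactly at $\omega$. Notably, the paper's own proof has the mirror image of this discrepancy: although the theorem states $\delta_H=\omega+\tfrac12Mr^2$, the proof silently strengthens the acceptance condition to $m_H(\Delta\theta_i)\ge\omega+Mr^2$ (with the stated threshold, the paper's decomposition likewise yields only $\omega-\tfrac12Mr^2\bigl(1-\tfrac{2}{B}\bigr)$, which falls short of $\omega$ whenever $B>2$). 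So your route is sound, but a correct write-up must either adopt the inflated threshold $\omega+Mr^2$, as the paper's proof implicitly does, or weaken the conclusion accordingly; as written, the assertion does not follow from your steps.
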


\begin{proof} We first establish bounds in the global Fisher and Hessian directional curvatures.

\textbf{Fisher global bound.}
Since $F\succeq 0$, the quadratic form $\phi(u):=u^\top F u$ is convex. 
Thus:
\begin{equation}
\label{eq:fisher-convex-identity}
\Delta\theta^\top F \Delta\theta =
\Big(\frac{1}{B}\sum_{i\in\mathcal{B}_{acc}}\Delta\theta_i \Big)^\top 
F \Big(\frac{1}{B}\sum_{i\in\mathcal{B}_{acc}}\Delta\theta_i\Big)
\;\le\; \frac{1}{B}\sum_{i\in\mathcal{B}_{acc}}\Delta\theta_i^\top F \Delta\theta_i.
\end{equation}
The inequality above follows from: 
\begin{align}
\frac{1}{B}\sum_{i\in\mathcal{B}_{acc}}\Delta\theta_i^\top F \Delta\theta_i 
\;-\;
\Big(\frac{1}{B}\sum_{i\in\mathcal{B}_{acc}}\Delta\theta_i\Big)^\top 
F \Big(\frac{1}{B}\sum_{i\in\mathcal{B}_{acc}}\Delta\theta_i\Big) \\
= \frac{1}{2B^2}\sum_{i,j\in\mathcal{B}_{acc}}
(\Delta\theta_i-\Delta\theta_j)^\top F (\Delta\theta_i-\Delta\theta_j) \ge 0,
\end{align}
because $F\succeq 0$ implies each summand is nonnegative. Hence:
\begin{equation}
    \Delta\theta^\top F \Delta\theta
\;\le\; \frac{1}{B}\sum_{i\in\mathcal{B}_{acc}}\Delta\theta_i^\top F \Delta\theta_i
\;\le\; \frac{1}{B}\sum_{i\in\mathcal{B}_{acc}} 2\, m_F(\Delta\theta_i)
\;\le\; 2\,\delta_F.
\end{equation}
\textbf{Hessian global bound.} Expanding $m_H(\Delta\boldsymbol{\theta})$:
\begin{equation}
\label{eq:mH-expand}
\begin{aligned}
m_H(\Delta\boldsymbol{\theta})
&= g(\boldsymbol{\theta})^\top \Delta\boldsymbol{\theta} + \tfrac12\,\Delta\boldsymbol{\theta}^\top H \Delta\boldsymbol{\theta} \\
&= g(\boldsymbol{\theta})^\top\!\Big(\tfrac1B \sum_{i\in\mathcal{B}_{acc}} \Delta\boldsymbol{\theta}_i\Big)
   + \frac12\Big(\tfrac1B \sum_{i\in\mathcal{B}_{acc}} \Delta\boldsymbol{\theta}_i\Big)^\top
          H \Big(\tfrac1B \sum_{j\in\mathcal{B}_{acc}} \Delta\boldsymbol{\theta}_j\Big) \\
&= \frac1B \sum_{i\in\mathcal{B}_{acc}} g(\boldsymbol{\theta})^\top \Delta\boldsymbol{\theta}_i
   + \frac{1}{2B^2}\sum_{i,j\in\mathcal{B}_{acc}} \Delta\boldsymbol{\theta}_i^\top H \Delta\boldsymbol{\theta}_j .
\end{aligned}
\end{equation}
We can decompose the quadratic form:
\begin{equation}
\label{eq:split}
\sum_{i,j\in\mathcal{B}_{acc}} \Delta\boldsymbol{\theta}_i^\top H \Delta\boldsymbol{\theta}_j
= \sum_{i\in\mathcal{B}_{acc}} \Delta\boldsymbol{\theta}_i^\top H \Delta\boldsymbol{\theta}_i
+ \sum_{\substack{i,j\in\mathcal{B}_{acc}\\i\neq j}} \Delta\boldsymbol{\theta}_i^\top H \Delta\boldsymbol{\theta}_j .
\end{equation}
Substituting \eqref{eq:split} into \eqref{eq:mH-expand} and grouping yields
\begin{equation}
\label{eq:mH-decomp}
m_H(\Delta\boldsymbol{\theta})
= \frac1B \sum_{i\in\mathcal{B}_{acc}} m_H(\Delta\boldsymbol{\theta}_i)
  - \frac{B-1}{2B^2}\sum_{i\in\mathcal{B}_{acc}} \Delta\boldsymbol{\theta}_i^\top H \Delta\boldsymbol{\theta}_i
  + \frac{1}{2B^2}\sum_{\substack{i,j\in\mathcal{B}_{acc}\\i\neq j}} \Delta\boldsymbol{\theta}_i^\top H \Delta\boldsymbol{\theta}_j .
\end{equation}
By the operator norm bound $\|H\|_{\op}\le M$ and Cauchy--Schwarz,
\[
|\Delta\boldsymbol{\theta}_i^\top H \Delta\boldsymbol{\theta}_j|
\;\le\; M\,\|\Delta\boldsymbol{\theta}_i\|\,\|\Delta\boldsymbol{\theta}_j\| .
\]
Hence, using $\|\Delta\boldsymbol{\theta}_i\|\le r$ for all $i$,
\begin{equation}
\sum_{i\in\mathcal{B}_{acc}} \Delta\boldsymbol{\theta}_i^\top H \Delta\boldsymbol{\theta}_i \;\le\; MB r^2,
\qquad
\sum_{\substack{i,j\in\mathcal{B}_{acc}\\i\neq j}} \Delta\boldsymbol{\theta}_i^\top H \Delta\boldsymbol{\theta}_j
\;\ge\; -M B(B-1) r^2 .
\end{equation}
Substituting into \eqref{eq:mH-decomp},
\begin{equation}
\label{eq:mH-bound}
m_H(\Delta\boldsymbol{\theta})
\;\ge\; \frac1B \sum_{i\in\mathcal{B}_{acc}} m_H(\Delta\boldsymbol{\theta}_i)
- M r^2\Big(1-\tfrac1B\Big).
\end{equation}
If each accepted subset satisfies
$m_H(\Delta\boldsymbol{\theta}_i)\ge\omega + M r^2$, 
then averaging gives $\tfrac1B \sum_{i\in\mathcal{B}_{acc}} m_H(\Delta\boldsymbol{\theta}_i)\ge \omega + M r^2$.
Plugging into \eqref{eq:mH-bound} yields
\begin{equation}
    m_H(\Delta\boldsymbol{\theta}) \;\ge\; \omega + M r^2 - M r^2\Big(1-\tfrac1B\Big)
= \omega + \frac{M r^2}{B}
\;\ge\; \omega .
\end{equation}

From Equations \ref{eq:cpo} and \ref{eq:expandobj} of Lemma \ref{lem:trpo-gap}, we have that:
\begin{equation}
     J(\pi_{\boldsymbol{\theta} + \Delta\boldsymbol{\theta} })-J(\pi_{\boldsymbol{\theta}})
    \;\ge\; \underbrace{g(\boldsymbol{\theta})^\top \Delta\boldsymbol{\theta} \;+\; \tfrac12\,\Delta\boldsymbol{\theta}^\top H(\boldsymbol{\theta})\,\Delta\boldsymbol{\theta}}_{m_{H}(\Delta\boldsymbol{\theta})} \;+\; o(\|\Delta\boldsymbol{\theta}\|^2) - C\underbrace{\sqrt{D_{\mathrm{KL}}(\pi_{\boldsymbol{\theta}}\Vert\pi_{\boldsymbol{\theta} + \Delta\boldsymbol{\theta}}})}_{m_{F}(\Delta\boldsymbol{\theta}) + o(\|\Delta\boldsymbol{\theta}\|^2)}
\end{equation}
Then, using $m_{F}(\Delta\boldsymbol{\theta}) < \delta_{F}$, $m_{H}(\Delta\boldsymbol{\theta}) > \omega$, and assuming the cubic terms negligible,
\begin{equation}
    J(\pi_{\boldsymbol{\theta} + \Delta\boldsymbol{\theta} })-J(\pi_{\boldsymbol{\theta}})
    \;\ge\; \omega - C\sqrt{\delta_{F}}.
\end{equation}
Thus choosing $\omega \ge C\sqrt{\delta_F}$ guarantees monotonic improvement: $J(\pi_{\boldsymbol{\theta} + \Delta\boldsymbol{\theta} }) \ge J(\pi_{\boldsymbol{\theta}})$.
\end{proof}

\clearpage

\section{Pseudocode of CAPO}

In this Appendix, we present CAPO's algorithm.

\begin{algorithm}[H]
\caption{Curvature-Aware Policy Optimization (CAPO)}
\label{alg:capo}
\DontPrintSemicolon
\SetKwInOut{Input}{Input}\SetKwInOut{Output}{Output}
\Input{%
  Policy $\pi_\theta$; batch $\mathcal{B}$ of sampled trajectories;\\
  thresholds $(\delta_F,\ \delta_H,\ \delta_H^{high})$;\\
  optimizer for the last-layer model (e.g., SGD or Adam).\\
}
\Output{Updated policy parameters $\theta$}
\BlankLine
\While{not done}{
  \tcp{Collect data with the current policy}
  Sample a batch $\mathcal{B} = \{\tau\}_{i}^{N}$ of trajectories, $\tau \sim \pi_\theta$.\;
\textbf{Partition} $\mathcal{B}$ into disjoint subsets $\{b_i\}_{i=1}^N$.\;
\For{$i=1,\dots,N$ \textbf{in parallel}}{
  \tcp{Build last-layer meta-model stats on subset $b_i$}
  Estimate model-based gradient $\tilde g(\psi)$ using Equation \ref{eq:gradient};\\
  Propose $\Delta\psi_i$ with the optimizer model
  (e.g., $\Delta\psi_i=\alpha\,\tilde g(\psi)$ for SGD, or Adam’s rule) \\
  Compute directional curvatures $\tfrac12\, \Delta\psi^\top \tilde H(\psi)\, \Delta\psi$, $\Delta\psi^\top \tilde F(\psi)\,\Delta\psi$ as in Appendix \ref{app:compdir}; \\
  Compute objective and policy shifts under the last-layer model:\\
  \hspace{2ex} $m_H(\Delta\psi)\gets \tilde g(\psi)^\top \Delta\psi
     + \tfrac12\, \Delta\psi^\top \tilde H(\psi)\, \Delta\psi$,
  $m_F(\Delta\psi)\gets \tfrac12\,\Delta\psi^\top \tilde F(\psi)\,\Delta\psi$.\;
  \tcp{Local trust-region acceptance test}
  \uIf{$\delta_H \le m_H(\Delta\psi_i)\le \delta_H^{high}$ \textbf{and} $m_F(\Delta\psi_i)\le \delta_F$}{
    Mark subset $b_i$ as \textsc{Accept}; add to $\mathcal{B}_{acc}$.\;
  }\Else{
    \textsc{Reject} $b_i$.\;
  }
}
\tcp{Compute the actual policy update on accepted data}
  \If{$\mathcal{B}_{acc} \neq \varnothing$}{
    Estimate the objective on accepted samples (e.g., GRPO/PPO surrogate):\\
    \hspace{2ex}$\widehat{J}(\theta)\;=\;\texttt{pg-objective}(\pi_\theta;\ \bigcup_{b_i\in \mathcal{B}_{acc}} b_i)$.\;
    \tcp{Policy Gradient and parameter update}
    $\theta \leftarrow \theta + \alpha \,\widehat{\nabla}_\theta J$\;  
  }
  }
\Return{$\theta$}
\end{algorithm}

\clearpage

\section{Reproducibility Statement}\label{app:implementation}

 \textbf{Code Release}. To ensure the reproducibility of our research findings, we release our code at \codeurl. Our implementation is based on PyTorch \citep{paszke2017automatic} and
 HuggingFace \citep{wolf2020huggingfacestransformersstateoftheartnatural}. All baselines are available in the released code. We also plan to publish all the experiments logs in WandB \citep{wandb}.
 
 \textbf{Reproducibility.} We detail our methodology in Sections \ref{sec:compmodel} and \ref{sec:capo} and our experimental setup
 in Section \ref{sec:results}. We provide all hyperparameters used in this work in
 Appendix \ref{sec:hypers}. For all experiments in this paper, we report the results over five seeds with standard errors. For the MATH benchmark, we report in-training performance every step, while for the TEST benchmark set we evaluate checkpoints every 10 learning steps. For better visualization, we applied smoothing with exponential moving average on the curves. All datasets are open-source and available online for academic use.
 
\textbf{Compute Resources}. We execute all RL experiments using 4 NVIDIA H100 GPUs. Each seed in the regime with aggressive updates takes approximately 4 hours, while the standard regime takes approximately one day. Evaluation is done separately in the same hardware, taking approximately 90 minutes per seed.

\textbf{LLM Usage Details.} We use LLMs for paper writing to improve grammar, enhance clarity and writing flow, and assist with code and mathematical iterations. All outputs generated by the LLMs were thoroughly reviewed and verified by the authors to ensure factual accuracy and correctness.

\clearpage
\setlength{\textfloatsep}{10pt}
\setlength{\intextsep}{10pt}%
\section{Computational Cost Analysis}\label{app:compcost}

\textbf{Execution Time}. Table \ref{tab:compcost} reports a breakdown of CAPO’s execution time, including both the model estimations and the masking process. The table shows the average time (in seconds) of each operation, averaged over all learning iterations, measured on our NVIDIA 4×H100 hardware. The total learning iteration time include LLM generations and forward and backward passes. We find that CAPO contributes less than 3\% of the total step time in a learning iteration, resulting in minimal training overhead. Most of the cost arises from computing the Adam gradient and updating its moments, since this also requires computing batch gradients on sparse representations. Lastly, the cost of computing the mask is minimal, below 0.01 seconds.

\textbf{Memory cost.} CAPO uses only volatile GPU memory, since all operations are transient and tensors are discarded after the masking generation. The main memory usage comes from maintaining token-level gradient tensors, which have shape $(N, T, K, D)$, corresponding to batch size, completion length, top-$K$ probabilities, and the number of parameters in the last-layer model. In our experiments, with $N = 24$, $T = 1024$, $K = 50$, and $D = 896$, this amounts to a volatile memory footprint of approximately 2 GB, which is minimal given the scale of LLM training. For comparison, this is significantly less expensive than performing KL regularization, which requires storing an additional copy of the LLM in memory for the reference policy.

\begin{table}[h]
\centering
\begin{tabular}{l c c}
\textbf{Step} & \textbf{Avg. Time (s)} & \textbf{\% of Total} \\
\hline
Learning Iteration (Total) & $135.84$ & $100.00\%$ \\
LLM Generations & $55.50$ & $40.85\%$ \\
\hdashline
Total CAPO time & $\textbf{3.99}$ & $\textbf{2.94\%}$ \\
\quad Compute token-level gradients & $0.04$ & $0.03\%$ \\
\quad Compute Adam token gradients & $0.51$ & $0.38\%$ \\
\quad Compute \& log $m_{H}$ & $0.09$ & $0.07\%$ \\
\quad Compute \& log $m_{F}$ & $0.01$ & $0.01\%$ \\
\quad Update Adam Moments & $3.34$ & $2.46\%$ \\
\quad Compute Hessian Mask & $0.00$ & $0.00\%$ \\
\quad Compute Fisher Mask & $0.00$ & $0.00\%$ \\
\hline
\end{tabular}
\caption{\textbf{Breakdown of the execution time of CAPO}. CAPO contributes less than 3\% of the total step time, resulting in minimal overhead relative to standard training.}
\label{tab:compcost}
\end{table}

\clearpage

\section{Additional Experiments}\label{app:ablations}

\textbf{Ablation of the Optimizer Model.} We conducted an ablation study on the impact of the optimizer representation. This choice reflects a trade-off between step accuracy and computational cost: SGD is cheaper, but the LLM policy is optimized with Adam. Figure \ref{fig:ablation_optimizer} shows the results on the MATH dataset. For CAPO, representing the optimizer with either SGD or Adam yields similar performance. However, for Dr.CAPO and ReinCAPO, the SGD variant is insufficient to prevent policy collapse. This suggests that matching the optimizer representation provides a more robust choice across different setups.

\begin{figure}[h]
\begin{center}
\includegraphics[width=1.0\textwidth]{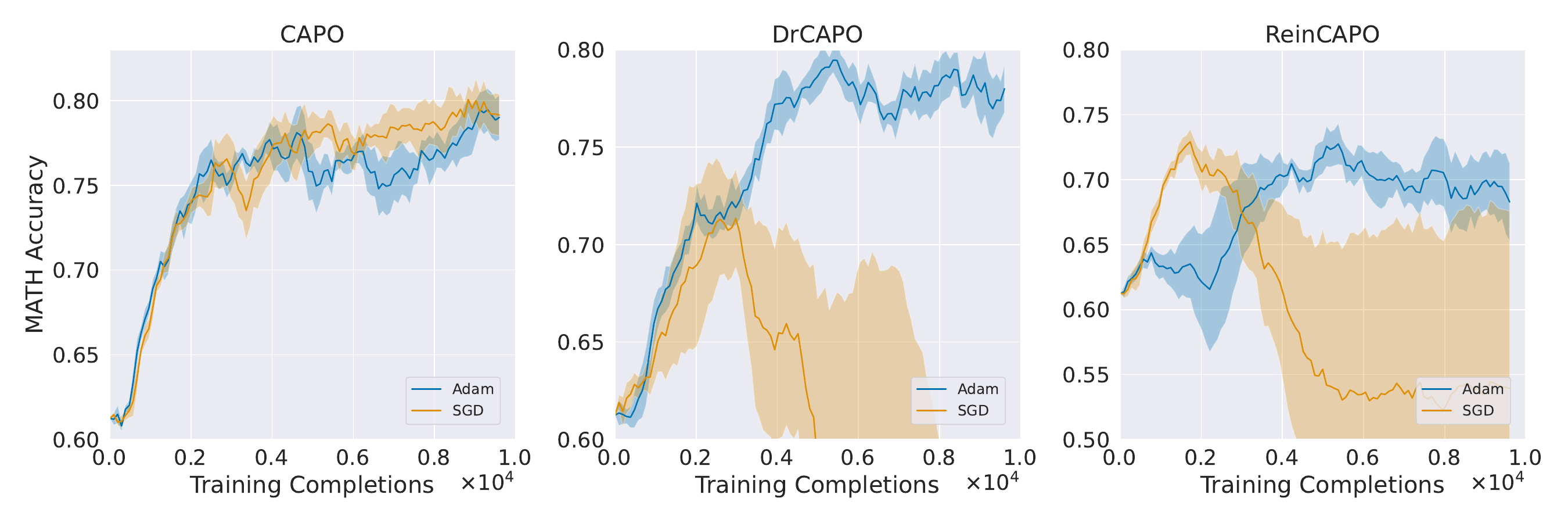}
\caption{\textbf{Ablation study of the optimizer model.}  For CAPO, both representations yield similar performance, whereas for Dr.CAPO and ReinCAPO, only the Adam-based representation prevents policy collapse, indicating that matching the optimizer provides a more robust choice across setups.}
\label{fig:ablation_optimizer}
\end{center}
\end{figure}

\textbf{Is PPO clipping enough to ensure stability?}  PPO clipping \citep{schulman2017proximalpolicyoptimizationalgorithms} is a heuristic designed to prevent large updates by clipping the probability ratio between the current policy and the old policy that collected the data. This raises the question of whether clipping alone is sufficient to avoid policy collapse in our LLM setup. We note that clipping is primarily intended to facilitate off-policy updates, whereas our experiments with on-policy data already reveal instability in current RL methods. Nevertheless, we conducted additional experiments using off-policy data reused for $t$ iterations under different clipping ratios. Figure \ref{fig:clip_study} shows results for two setups: $t=2$ (minimal off-policy shift) and $t=5$ (moderate shift). We find that the standard clipping ratio ($\epsilon=0.2$) does not prevent collapse. More aggressive ratios alleviate instabilities but reduce performance, likely due to the strong bias introduced in the gradients. This trade-off becomes more pronounced as $t$ increases.

\begin{figure}[h]
\begin{center}
\includegraphics[width=0.83\textwidth]{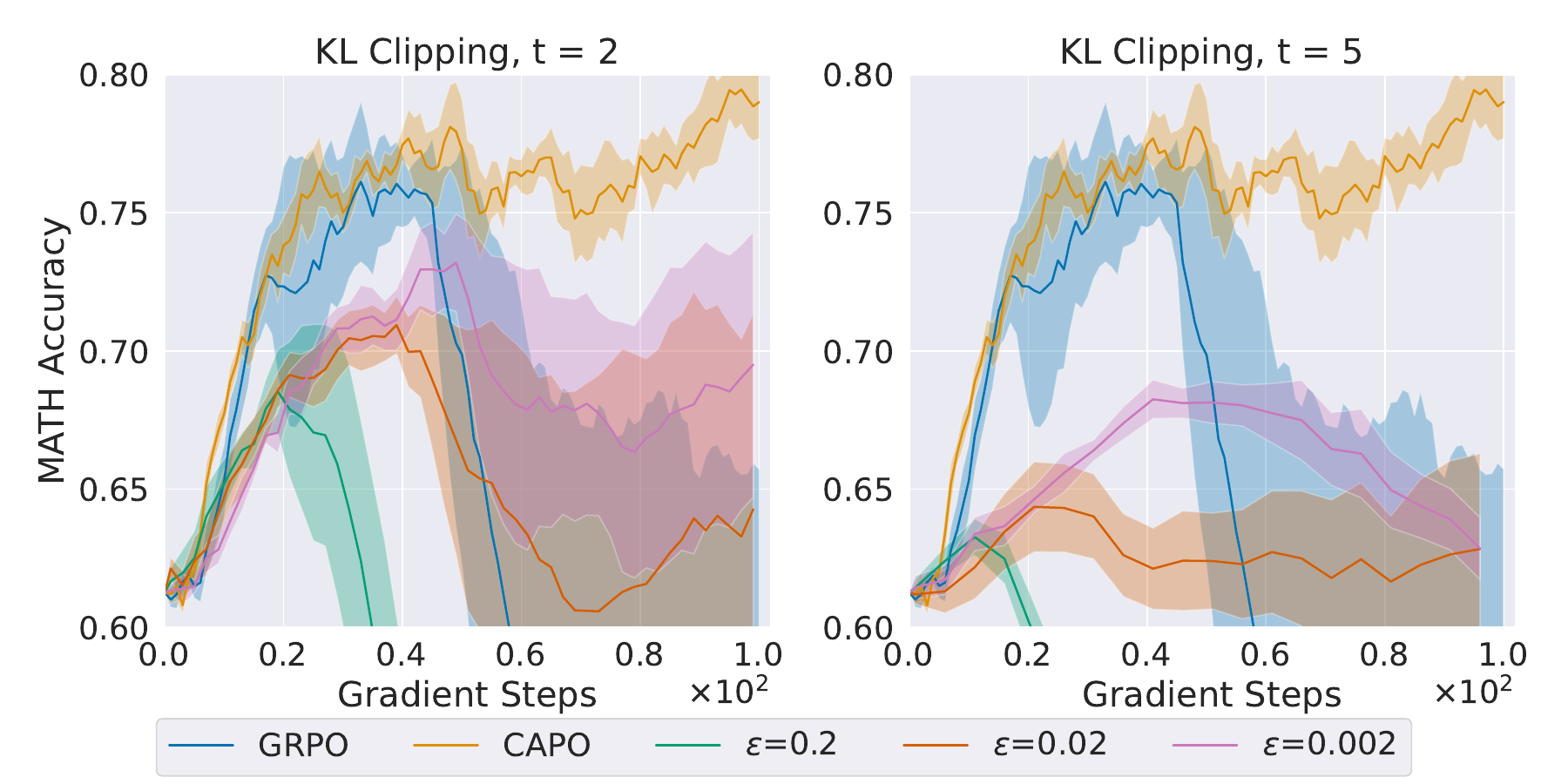}
\caption{\textbf{Effect of ``PPO clipping" on GRPO stability.} Standard clipping ($\epsilon=0.2$) fails to prevent collapse, while more aggressive ratios improve stability but reduce overall performance, with the trade-off becoming more severe as $t$ increases.}
\label{fig:clip_study}
\end{center}
\end{figure}

\textbf{Is KL regularization enough to ensure stability?} Another common strategy to mitigate instabilities is to add a KL regularizer that penalizes deviations from the base policy (see Equation \ref{eq:grpo}). The rationale is that keeping the policy close to the base model may prevent large distributional shifts, such as those associated with policy collapse. In Figure \ref{fig:klreg} (left), we test different levels of regularization. We observe a trend similar to clipping: only stronger regularization ($\beta = 1.0$) helps prevent catastrophic updates, but at the cost of performance.  

A more fundamental limitation of KL regularization becomes evident when examining its gradient:
\begin{equation}
\nabla_\theta \mathcal{D}_{\text{KL}}\!\left(\pi_\theta \,\|\, \pi_{\text{base}}\right)
= \mathbb{E}_{s \sim d^{\pi},\, a \sim \pi_\theta}\!\left[
\nabla_\theta \log \pi_\theta(a \mid s)\,
\Bigg(\log \frac{\pi_\theta(a \mid s)}{\pi_{\text{base}}(a \mid s)} + 1\Bigg)
\right].
\end{equation}

Differentiating through the KL term introduces a multiplicative $\log$ factor, which can produce unbounded gradients. More concretely, as $\pi_{\text{base}}(a \mid s) \to 0$, the gradient magnitude diverges, effectively ``exploding'' the LLM policy gradient. We observe this empirically in Figure \ref{fig:klreg} (right), which shows the maximum gradient norms (before gradient clipping) over training, averaged across seeds. While gradient clipping can reduce the gradient's magnitude, it does not alter its direction, which may still drive the optimization into unstable regions.  

Finally, there are also practical drawbacks to KL regularization. First, it requires storing a full copy of the base model in memory, which has led prior work to abandon the technique \citep{liu2025understandingr1zeroliketrainingcritical}. Second, differentiating KL estimates as loss functions typically yields biased approximations of the true KL gradient \citep{tang2025pitfallskldivergencegradient}.

\begin{figure}[t]
\begin{center}
\includegraphics[width=1.0\textwidth]{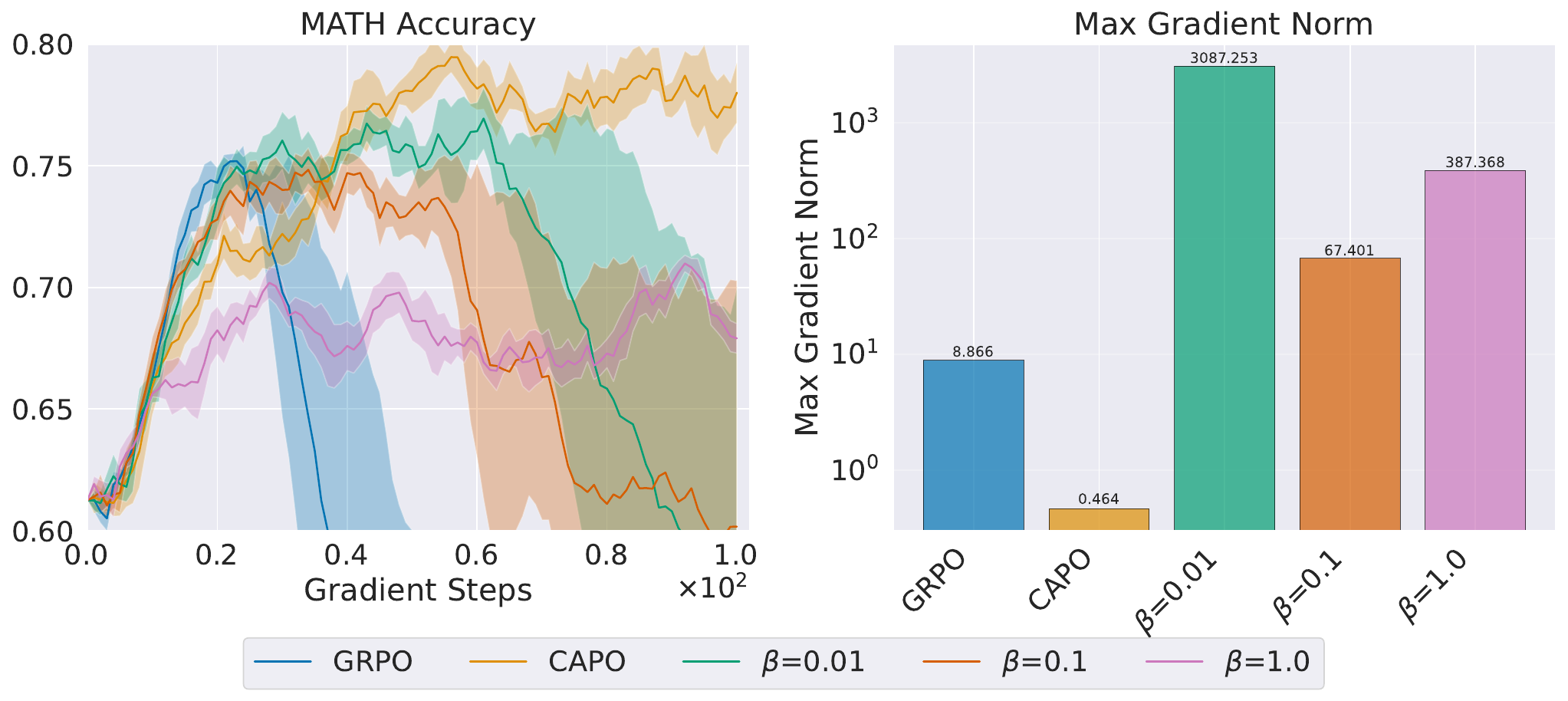}
\caption{\textbf{Effect of KL regularization on GRPO stability.} (Left) Accuracy on the MATH dataset under different levels of KL regularization. Stronger regularization ($\beta=1.0$) reduces instability but degrades performance. 
(Right) Maximum gradient norms (before clipping), averaged across seeds. KL regularization produces unbounded gradients that may drive the optimization into unstable regions.}

\label{fig:klreg}
\end{center}
\end{figure}

\clearpage

\section{Hyperparameters}\label{sec:hypers}

In this section, we present the hyperparameters used in our experiments. Table \ref{tab:hypers_general} lists the hyperparameters common to all training configurations and algorithms. Table \ref{tab:hypers_updates} specifies the learning rate and batch size for the conservative and aggressive setups. Finally, Table \ref{tab:capo_hyper} reports the hyperparameters specific to curvature-aware masking, along with their values for each method. Due to compute budget constraints, we performed manual hyperparameter tuning, primarily searching across different orders of magnitude of both $\delta_{H}$ and $\delta_{F}$. For simplicity, we implemented a single symmetric threshold for the Hessian, i.e., rejecting samples outside the interval $-\delta_H < m_H < \delta_H$.

\begin{table}[h]
\centering
\small
\begin{tabular}{l|l}
 \textbf{Hyperparameter} & \textbf{Value} \\
\hline
\multicolumn{2}{l}{\emph{LLM Generation}} \\
Max Prompt Length & 512 \\
Max Completion Length & 1024 \\
Num Generations per Prompt & 8 \\
Temperature & 0.9 \\
\hline
\multicolumn{2}{l}{\emph{Training}} \\
Gradient Steps & 100 \\
Warmup Ratio & 0.1 \\
Iterations per Batch & 1 \\
Optimizer & Adam \\
LR Scheduler & Cosine \\
KL $\beta$ & 0.0 \\
\end{tabular}
\caption{\textbf{Training Hyperparameters.}}
\label{tab:hypers_general}
\end{table}

\begin{table}[h]
\centering
\small
\begin{tabular}{l|c|c}
 \textbf{Hyperparameter} & \textbf{Standard Setup} & \textbf{Aggressive Setup} \\
\hline
Learning Rate & $3\times10^{-6}$ & $1.5\times10^{-5}$ \\
Total Batch Size & 1152 & 96
\end{tabular}
\caption{\textbf{Hyperparameters for the standard (conservative) and aggressive regimes.}}
\label{tab:hypers_updates}
\end{table}

\begin{table}[h]
\centering
\small
\begin{tabular}{l|c|c|c}
 \textbf{Hyperparameter} & \textbf{CAPO} & \textbf{Dr.CAPO} & \textbf{ReinCAPO} \\
\hline
Hessian $\delta_{H}$ & $10^{-2}$ & $5\times10^{-4}$ &  $10^{-1}$\\
Fisher $\delta_{F}$ & $10^{-4}$ & $10^{-3}$ & $10^{-5}$ \\
\end{tabular}
\caption{\textbf{Curvature-aware masking thresholds for CAPO, Dr.CAPO and ReinCAPO.}}
\label{tab:capo_hyper}
\end{table}

\clearpage

\section{Monotonic Policy Improvement under CAPO in the undiscounted, finite-horizon setting}

Appendix \ref{app:capo} formalizes the conditions under which CAPO guarantees monotonic improvement in the standard discounted, infinite-horizon setting. Although this formulation is general and aligned with prior RL literature, this section extends the analysis to the undiscounted, finite-horizon setting, which better reflects the LLM reasoning setup and is more consistent with the assumptions underlying practical algorithms such as GRPO.

For this analysis, we consider a finite-horizon Markov decision process (MDP) with horizon $T \in \mathbb{N}$, state space $\mathcal{S}$, action space $\mathcal{A}$, transition kernel $P(s' \mid s,a)$, reward function $R:\mathcal{S}\times\mathcal{A}\to\mathbb{R}$, and initial state distribution $\rho_o$. A (stochastic) policy $\pi$ is a conditional distribution $\pi(a\mid s)$ over actions given states. The return of a policy $\pi$ is given by:
\begin{equation}
    J(\pi) := \mathbb{E}_\pi\Bigg[\sum_{t=0}^{T-1} R(s_t,a_t)\Bigg].
\end{equation}

Furthermore, we define the advantage function as $A_\pi(s,a) := Q_\pi(s,a) - V_\pi(s).$ For a second policy $\pi'$, we also define the $\pi'$-averaged advantage of $\pi$ at state $s$: $\bar A_\pi^{\pi'}(s) := \mathbb{E}_{a\sim\pi'(\cdot\mid s)}[A_\pi(s,a)]$.

\begin{lemma}[Performance Difference Lemma, Finite Horizon, $\gamma = 1$]\label{lem:pdiff}
Let $\pi$ and $\pi'$ be two policies. Then
\begin{equation}
    J(\pi') - J(\pi) 
= \sum_{t=0}^{T-1} \mathbb{E}_{s\sim d_{\pi',t}} \big[ \bar A_\pi^{\pi'}(s) \big],
\end{equation}
where $d_{\pi,t}(s) := \Pr_{\pi}(s_t = s)$ denotes the time-$t$ state-marginal under $\pi$.
\end{lemma}

\begin{proof}
We start from the identity $Q_\pi(s,a) 
= r(s,a) + \mathbb{E}_{s'\sim P(\cdot\mid s,a)} [V_\pi(s')]$. Rearranging,
\begin{equation}
    r(s,a) 
= Q_\pi(s,a) - \mathbb{E}_{s'} [V_\pi(s')]
= A_\pi(s,a) + V_\pi(s) - \mathbb{E}_{s'} [V_\pi(s')].
\end{equation}
Consider a trajectory $(s_0,a_0,\dots,s_{T-1},a_{T-1})$ generated by policy $\pi'$. Then
\begin{equation}
    \sum_{t=0}^{T-1} r(s_t,a_t)
= \sum_{t=0}^{T-1} \Big( A_\pi(s_t,a_t) + V_\pi(s_t) - \mathbb{E}[V_\pi(s_{t+1}) \mid s_t,a_t]\Big).
\end{equation}
Taking expectation under $\pi'$ and using the law of total expectation,
\begin{equation}
    J(\pi') 
= \mathbb{E}_{\pi'}\Bigg[\sum_{t=0}^{T-1} A_\pi(s_t,a_t)\Bigg]
+ \mathbb{E}_{\pi'}\Bigg[\sum_{t=0}^{T-1} V_\pi(s_t) - V_\pi(s_{t+1})\Bigg],
\end{equation}
where $V_\pi(s_T) := 0$ by definition. The second sum telescopes:
\begin{equation}
    \sum_{t=0}^{T-1} V_\pi(s_t) - V_\pi(s_{t+1}) 
= V_\pi(s_0) - V_\pi(s_T)
= V_\pi(s_0).
\end{equation}
Thus,
\begin{equation}
    J(\pi') = \mathbb{E}_{\pi'}\Bigg[\sum_{t=0}^{T-1} A_\pi(s_t,a_t)\Bigg] + \underbrace{\mathbb{E}_{s_0\sim \rho_o}[V_\pi(s_0)]}_{J(\pi)}.
\end{equation}
Therefore,
\begin{equation}
    J(\pi') - J(\pi)
= \sum_{t=0}^{T-1} \mathbb{E}_{s_t,a_t\sim \pi'}[A_\pi(s_t,a_t)].
\end{equation}
We can rewrite each term as
\begin{equation}
    \mathbb{E}_{s_t,a_t\sim \pi'}[A_\pi(s_t,a_t)]
= \mathbb{E}_{s\sim d_{\pi',t}}\Big[\mathbb{E}_{a\sim\pi'(\cdot\mid s)}[A_\pi(s,a)]\Big]
= \mathbb{E}_{s\sim d_{\pi',t}} [\bar A_\pi^{\pi'}(s)],
\end{equation}
which proves the claimed identity.
\end{proof}

We now bound the difference between the state marginals $d_{\pi',t}$ and $d_{\pi,t}$ in terms of how different the policies are. For $t \ge 0$, we first define the policy-induced transition kernels:
\begin{equation}
    P_\pi(s'\mid s) := \sum_{a} \pi(a\mid s) P(s'\mid s,a), 
\qquad
P_{\pi'}(s'\mid s) := \sum_a \pi'(a\mid s) P(s'\mid s,a).
\end{equation}
Then $d_{\pi,t+1}^\top = d_{\pi,t}^\top P_\pi$ and $d_{\pi',t+1}^\top = d_{\pi',t}^\top P_{\pi'}$.

\begin{lemma}[State-Distribution Shift Bound, Finite Horizon]\label{lem:ssbound}
Let $\pi,\pi'$ be two policies with the same initial state distribution $d_{\pi,0} = d_{\pi',0} = \rho_o$. Then, for all $t=0,\dots,T-1$,

\begin{equation}
    \|d_{\pi',t} - d_{\pi,t}\|_1
\;\le\;
2\sum_{k=0}^{t-1} \mathbb{E}_{s\sim d_{\pi,k}}
\Big[ D_{\mathrm{TV}}\big(\pi(\cdot\mid s),\pi'(\cdot\mid s)\big) \Big].
\end{equation}
\end{lemma}
\begin{proof}
Define the difference vector $\delta_t := d_{\pi',t} - d_{\pi,t}$. Then:
\begin{align}
\delta_{t+1}
&= d_{\pi',t+1} - d_{\pi,t+1} \nonumber \\ 
&= d_{\pi',t} P_{\pi'} - d_{\pi,t} P_\pi \nonumber \\
&= (d_{\pi',t} - d_{\pi,t})P_{\pi'} + d_{\pi,t}(P_{\pi'} - P_\pi) \nonumber \\
&= \delta_t P_{\pi'} + d_{\pi,t}(P_{\pi'} - P_{\pi}).
\end{align}

Since $P_{\pi'}$ is row-stochastic, $\|\delta_t P_{\pi'}\|_1 \le \|\delta_t\|_1$. Next, we bound the term $d_{\pi,t}(P_{\pi'} - P_\pi)$. Let $w := d_{\pi,t}(P_{\pi'} - P_\pi)$, so $w(s') = \sum_s d_{\pi,t}(s)\big(P_{\pi'}(s'\mid s) - P_\pi(s'\mid s)\big)$. Then:
\begin{align}
\|w\|_1
&= \sum_{s'} |w(s')|
= \sum_{s'} \Big|\sum_s d_{\pi,t}(s)\big(P_{\pi'}(s'\mid s) - P_\pi(s'\mid s)\big)\Big| \nonumber \\
&\le \sum_{s'} \sum_s d_{\pi,t}(s)\big|P_{\pi'}(s'\mid s) - P_\pi(s'\mid s)\big| \nonumber \\
&= \sum_s d_{\pi,t}(s)\sum_{s'}\big|P_{\pi'}(s'\mid s) - P_\pi(s'\mid s)\big|  \nonumber \\
&= \sum_s d_{\pi,t}(s)\,\|P_{\pi'}(\cdot\mid s) - P_\pi(\cdot\mid s)\|_1.
\end{align}
For each fixed $s$, using $P_{\pi'}(s'\mid s) - P_\pi(s'\mid s) 
= \sum_a (\pi'(a\mid s)-\pi(a\mid s))P(s'\mid s,a)$ and the fact that $\sum_{s'}P(s'\mid s,a)=1$, we obtain:
\begin{align}
\|P_{\pi'}(\cdot\mid s) - P_\pi(\cdot\mid s)\|_1
&= \sum_{s'} \Big|\sum_a(\pi'(a\mid s) - \pi(a\mid s))P(s'\mid s,a)\Big| \nonumber \\
&\le \sum_{s'}\sum_a |\pi'(a\mid s) - \pi(a\mid s)|P(s'\mid s,a) \nonumber \\
&= \sum_a |\pi'(a\mid s) - \pi(a\mid s)| \nonumber  \\
&= 2D_{\mathrm{TV}}(\pi(\cdot\mid s),\pi'(\cdot\mid s)).
\end{align}
Hence $\|w\|_1
\le 2\sum_s d_{\pi,t}(s) D_{\mathrm{TV}}(\pi(\cdot\mid s),\pi'(\cdot\mid s))$. Combining these two bounds and using the triangle inequality,
\begin{align}
\|\delta_{t+1}\|_1
&= \|\delta_t P_{\pi'} + d_{\pi,t}(P_{\pi'} - P_\pi)\|_1 \nonumber \\
&\le \|\delta_t P_{\pi'}\|_1 + \|d_{\pi,t}(P_{\pi'} - P_\pi)\|_1 \nonumber  \\
&\le \|\delta_t\|_1 + 2\alpha_t.
\end{align}
By definition, $d_{\pi',0} = d_{\pi,0}$, so $\delta_0 = 0$ and $\|\delta_0\|_1 = 0$. Unrolling the recursion:
\begin{equation}
    \|\delta_t\|_1 
\le 2\sum_{k=0}^{t-1} \mathbb{E}_{s\sim d_{\pi,k}}
\Big[ D_{\mathrm{TV}}\big(\pi(\cdot\mid s),\pi'(\cdot\mid s)\big) \Big].
\end{equation}
\end{proof}

We now define a surrogate objective based on the reference policy $\pi$ and the state distributions $d_{\pi,t}$.

\begin{lemma}[Surrogate--True Performance Gap, Finite Horizon] \label{lemma:gap-finite}
For any policies $\pi$ and $\pi'$, with $D_{\mathrm{KL}}(\pi\Vert\pi')$ the average forward KL under $d_\pi$,
\begin{equation}
    J(\pi') \ \ge\ L_\pi(\pi') \ -\ C\,\sqrt{D_{\mathrm{KL}}(\pi\Vert\pi')},\qquad
    C 
:= T\sqrt{\frac{(T-1)(2T-1)}{3}}\epsilon,
\end{equation}
where $|A^\pi(s,a)|\le \epsilon$ with $\epsilon$ finite, and $L_\pi(\pi') 
:= J(\pi) + \sum_{t=0}^{T-1} \mathbb{E}_{s\sim d_{\pi,t}} \big[ \bar A_\pi^{\pi'}(s) \big].$
\end{lemma}
\begin{proof}
By Lemma \ref{lem:pdiff}, $J(\pi') - J(\pi)
= \sum_{t=0}^{T-1} \mathbb{E}_{s\sim d_{\pi',t}} [\bar A_\pi^{\pi'}(s)]$. Subtracting the surrogate:
\begin{align}
J(\pi') - L_\pi(\pi')
&= \sum_{t=0}^{T-1} 
\Big( \mathbb{E}_{s\sim d_{\pi',t}} \bar A_\pi^{\pi'}(s)
 - \mathbb{E}_{s\sim d_{\pi,t}} \bar A_\pi^{\pi'}(s)\Big) \nonumber \\
&= \sum_{t=0}^{T-1} \sum_{s} \big(d_{\pi',t}(s) - d_{\pi,t}(s)\big)\bar A_\pi^{\pi'}(s).
\end{align}
Taking absolute values and using $|\bar A_\pi^{\pi'}(s)| \le \epsilon$ and applying Lemma \ref{lem:ssbound}:
\begin{align}
    \big|J(\pi') - L_\pi(\pi')\big|
\le \sum_{t=0}^{T-1} \epsilon \|d_{\pi',t} - d_{\pi,t}\|_1 &\le \epsilon \sum_{t=0}^{T-1} 2\sum_{k=0}^{t-1}\mathbb{E}_{s\sim d_{\pi,k}} \Big[ D_{\mathrm{TV}}\big(\pi(\cdot\mid s),\pi'(\cdot\mid s)\big) \Big] \nonumber \\
&= 2\epsilon \sum_{k=0}^{T-1} \mathbb{E}_{s\sim d_{\pi,k}} \Big[ D_{\mathrm{TV}}\big(\pi(\cdot\mid s),\pi'(\cdot\mid s)\big) \Big] \sum_{t=k+1}^{T-1} 1 \nonumber \\
&= 2\epsilon \sum_{k=0}^{T-1}(T-1-k)\,\mathbb{E}_{s\sim d_{\pi,k}} \Big[ D_{\mathrm{TV}}\big(\pi(\cdot\mid s),\pi'(\cdot\mid s)\big) \Big].
\end{align}
For the KL-based bound, we use Pinsker's inequality and Jensen's inequality. For each $t$:
\begin{align}
    \mathbb{E}_{s\sim d_{\pi,t}} D_{\mathrm{TV}}(\pi(\cdot\mid s),\pi'(\cdot\mid s))
\le \mathbb{E}_{s\sim d_{\pi,t}}
\sqrt{\tfrac12 D_{\mathrm{KL}}(\pi(\cdot\mid s)\,\|\,\pi'(\cdot\mid s))} \nonumber \\
\le \sqrt{\tfrac12 \mathbb{E}_{s\sim d_{\pi,t}}
\Big[D_{\mathrm{KL}}(\pi(\cdot\mid s)\,\|\,\pi'(\cdot\mid s))\Big]}.
\end{align}
For conciseness, we define $D_{k} := D_{\mathrm{KL}}(\pi(\cdot\mid s)\,\|\,\pi'(\cdot\mid s))$. Then:
\begin{equation}
    \big|J(\pi') - L_\pi(\pi')\big|
\le 
2\epsilon \sum_{k=0}^{T-1}(T-1-k)\sqrt{\tfrac12 D_k}
= \sqrt{2}\,\epsilon \sum_{k=0}^{T-1} b_k \sqrt{D_k},
\end{equation}
where we have set $b_k := T-1-k$. By Cauchy--Schwarz,
\begin{equation}
    \sum_{k=0}^{T-1} b_k\sqrt{D_k}
\le \sqrt{\sum_{k=0}^{T-1} b_k^2}\,\sqrt{\sum_{k=0}^{T-1} D_k}.
\end{equation}
We note that
\begin{equation}
    \sum_{k=0}^{T-1} b_k^2 = \sum_{j=0}^{T-1} j^2 = \frac{(T-1)T(2T-1)}{6},
\qquad
\sum_{k=0}^{T-1} D_k = T\,\bar D_{\mathrm{KL}}.
\end{equation}
Therefore
\begin{align}
\big|J(\pi') - L_\pi(\pi')\big|
&\le \sqrt{2}\,\epsilon
\sqrt{\frac{(T-1)T(2T-1)}{6}}\,
\sqrt{T\,\bar D_{\mathrm{KL}}} \nonumber\\
&= T\sqrt{\frac{(T-1)(2T-1)}{3}}\,\epsilon\,\sqrt{\bar D_{\mathrm{KL}}}.
\end{align}
\end{proof}
The proof of Theorem \ref{thm:capo-certified-main} for the finite-horizon setting follows exactly the one in Appendix \ref{app:capo}, but applying Lemma \ref{lemma:gap-finite} instead of Lemma \ref{lem:trpo-gap}.

\textbf{Infinite-Horizon vs.\ Finite-Horizon bounds.}
We highlight that, in both settings, the final guarantee takes the same form
$
J(\pi_{\boldsymbol{\theta} + \Delta\boldsymbol{\theta}}) - J(\pi_{\boldsymbol{\theta}})
\;\ge\; \omega - C \sqrt{\delta_F},
$
where \(C = \frac{2\gamma}{(1-\gamma)^2}\,\epsilon\,\sqrt{2}\) for the infinite-horizon case, and 
\(C = T\sqrt{\frac{(T-1)(2T-1)}{3}}\,\epsilon\) for the finite-horizon case.
In both cases, the constant \(C\) scales as \(\mathcal{O}(H_{\mathrm{eff}}^{2})\), where \(H_{\mathrm{eff}}\) denotes the effective horizon: \(H_{\mathrm{eff}} = T\) in the finite-horizon setting, and \(H_{\mathrm{eff}} = \frac{1}{1-\gamma}\) in the infinite-horizon setting.
Practically, this implies that both bounds are equally tight within their respective regimes.

\clearpage

\section{A Closer Look at Model Estimates $\hat{m}_{F}$ and the KL Policy Shift}

In this section, we analyze the relationship between the model’s estimate of directional Fisher curvature, $\hat{m}_F$, and the actual policy shift induced by an update, measured by $D_{\mathrm{KL}}(\pi_{\boldsymbol{\theta}} || \pi_{\boldsymbol{\theta}+\Delta\boldsymbol{\theta}})$. Our goals are two-fold: (i) to clarify what CAPO requires from the underlying model in order to approximate a trust-region and to assess how well this approximation holds, and (ii) to examine the impact of CAPO’s updates on the true change in policy.

\textbf{Does CAPO require a fully calibrated model?} Although well-calibrated estimates are a \textit{sufficient} condition for CAPO’s data-selection mechanism to function effectively, they are not \textit{necessary}. To illustrate this, consider a simple case where the estimated directional Fisher curvature satisfies $\hat{m}_{F} = \alpha \bar D_{\mathrm{KL}}(\pi_{\boldsymbol{\theta}} || \pi_{\boldsymbol{\theta}+\Delta\boldsymbol{\theta}})$, $\alpha > 0$,  where $\alpha >> 1$ or $\alpha << 1$. Such a model is clearly miscalibrated, yet it preserves a strong correlation with the true policy shift. In CAPO, if we aim to enforce the trust-region condition $\bar D_{\mathrm{KL}}(\pi_{\boldsymbol{\theta}} ,|, \pi_{\boldsymbol{\theta}+\Delta\boldsymbol{\theta}}) < \delta$, we can simply set the Fisher-threshold to $\delta_F = \alpha \delta$, which recovers the desired constraint. More generally, CAPO only requires that the estimates be \textit{monotonically correlated} with the true policy change, so that large prospective shifts (those most likely to trigger instability or collapse) are reliably identified.

A natural way to evaluate the quality of the model’s estimates is to measure their correlation with the true policy changes. Although we do not have direct access to this quantity, we can estimate it via samples. In particular, the KL divergence can be reliably estimated using a standard Monte Carlo estimator, which has manageable variance and leverages token-level information. We therefore compute these estimates and report the resulting Spearman correlations in the Table \ref{tab:corr}, where $\hat{m}_F$ is evaluated under both GRPO and CAPO updates at both token and global level. We find that the model estimates exhibit a moderately strong correlation with the actual policy change, indicating a consistent monotonic relationship. Notably, this correlation remains high under both GRPO and CAPO, suggesting that the estimates are meaningful even when they are not used to intervene in the update.

\begin{table}[h]
\centering
\small
\begin{tabular}{l|c|c}
 \textbf{Estimate} &  \textbf{$\rho$ (GRPO)} & \textbf{$\rho$ (CAPO)} \\
\hline
$\hat{m}_{F}$ (Token) & 0.622 & 0.459 \\
$\hat{m}_{F}$ (Global) & 0.596 & 0.498 \\
\end{tabular}
\caption{\textbf{Spearman correlations $\rho$ between Fisher directional curvature estimates $\hat{m}_{F}$ and the estimated policy change $\bar D_{\mathrm{KL}}(\pi_{\boldsymbol{\theta}} || \pi_{\boldsymbol{\theta}+\Delta\boldsymbol{\theta}})$}. We report correlations for both GRPO and CAPO updates. The results indicate that the estimates $\hat{m}_F$ maintain a consistent monotonic relationship with the true policy shift across algorithms, reliable identifying the scale of the policy shifts}. 
\label{tab:corr}
\end{table}

\begin{wrapfigure}{r}{0.5\textwidth}
\vspace{-25pt}
\begin{center}
\includegraphics[width=0.49\textwidth]{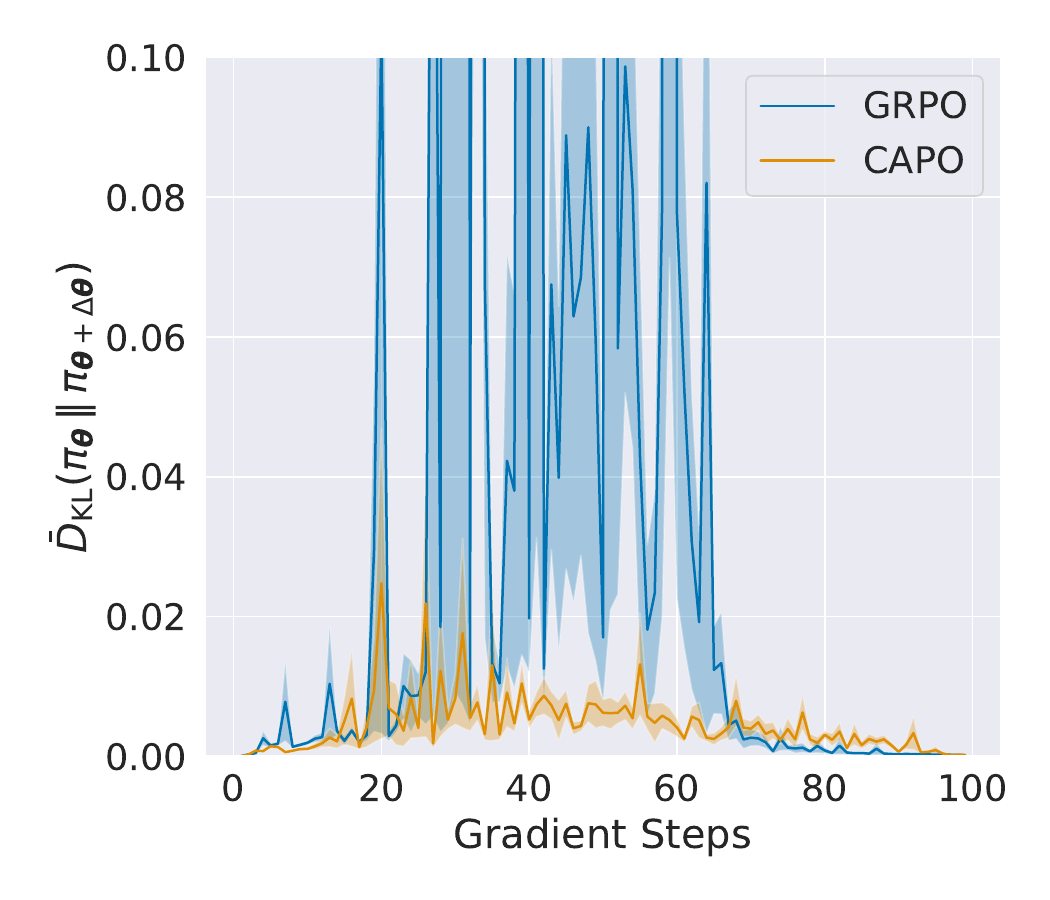}
\captionsetup{skip=-1pt}
\caption{\textbf{Estimated policy KL shifts during training.} GRPO exhibits frequent sharp spikes in policy divergence, indicative of unstable updates, whereas CAPO maintains consistently small shifts, reflecting its ability to enforce trust-region–like behavior throughout training.}
\label{fig:capo_bound}
\end{center}
\end{wrapfigure}

\textbf{Ultimately, does CAPO induce a bound on the true $D_{\mathrm{KL}}(\pi_{\boldsymbol{\theta}} || \pi_{\boldsymbol{\theta}+\Delta\boldsymbol{\theta}})$?} In Figure \ref{fig:capo_bound}, we present the policy shifts over the course of training for both algorithms. GRPO frequently presents peaked shifts, which are often associated with unstable or overly aggressive updates. In contrast,  CAPO generally maintains stable, small shifts, suggesting that it is effective in practically implementing a trust-region behavior throughout training.

\clearpage

\section{Further Questions}

This Appendix presents additional clarification questions aimed at improving the understanding of  the proposed method and experiments. These questions were raised during the peer-review process, and we refer to the OpenReview page for the full discussion.

\textbf{What is the effect of token selection in the sample efficiency evaluation?} In Figure \ref{fig:se_tokens}, we plot the accuracy curves (analogous to Figs. 1 and 2) as a function of the accepted tokens. We observe that these curves closely resemble those obtained when accuracy is plotted against the number of completions. This suggests that the effect of masking on the total number of generated (and accepted) tokens is small, consistent with the rejection rates reported in Figure 5. It also indicates that the learned policies behave similarly in terms of token generation, showing that CAPO improves training sample efficiency without incurring additional inference-time costs.

 \begin{figure}[!htpb]
 \setlength{\abovecaptionskip}{0pt} 
\begin{center}
\includegraphics[width=1.0\textwidth]{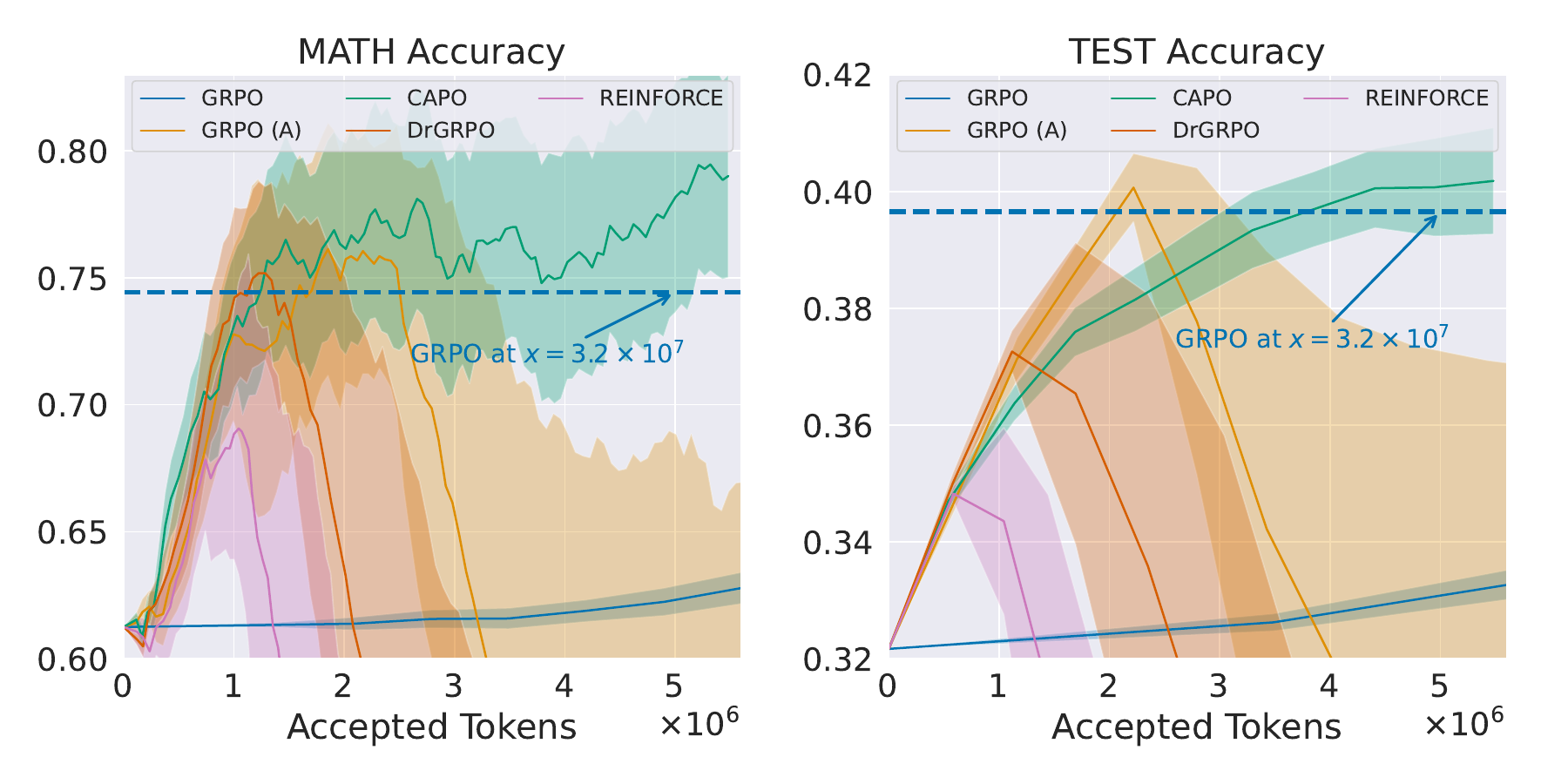}
\caption{\textbf{Sample efficiency curves as a function of the number of accepted tokens}. The trends closely match those obtained when using the number of completions, indicating that masking has minimal impact on token generation and that CAPO improves sample efficiency without added inference cost.}
\label{fig:se_tokens}
\end{center}
\end{figure}

\textbf{What are the similarities between CAPO and TRPO? What are the differences?} In terms of similarities, both CAPO and TRPO share the same motivation: devise a conservative optimization procedure that implements a safe optimization region, typically expressed as a KL ball constraint. This idea predates TRPO, with its roots in natural gradient methods from optimization literature \cite{6790500, 10.5555/2998828.2998935}. What both CAPO and TRPO do is to devise practical instantiations of the natural gradient that is suitable for their respective problem settings.

Methodologically, TRPO incorporates \textit{only} the Fisher matrix in its updates, relying on a first-order approximation of the objective. In contrast, CAPO additionally leverages second-order curvature information of the objective through its Hessian, as shown in Equation \ref{eq:taylor} and further incorporated in the theoretical development in Equation \ref{eq:sec_exp}. The main difference, however, lies in the implementation, which crucially leads to different scalability properties. 

TRPO incorporates the Fisher matrix by employing a Conjugate-Gradient (CG) algorithm to approximate the natural gradient step without fully materializing the Fisher matrix. Then, TRPO employs a line search algorithm to solve the constrained optimization problem. The CG algorithm involves maintaining five vectors of size $d$ (the gradient, current iterate, the residual, the search direction, and the matrix-vector buffer), where $d$ is the number of parameters in the policy. While this memory cost is feasible for small deep networks (as usual in traditional Deep RL research), it is prohibitive for LLM scale, where $d$ is in the billions.

Furthermore, the CG algorithm is iterative, and each iteration costs roughly the same as a backward pass, unless you sacrifice your Fisher matrix estimation by subsampling data. TRPO uses ten iterations. Considering the execution time in our setup (Appendix \ref{app:compcost}), this overhead is also prohibitive. Lastly, the line search algorithm requires $M$ additional forward passes in the whole batch ($M$ is the number of search trials), which is also a substantial cost in our setup (also illustrated in Appendix \ref{app:compcost}). Overall, TRPO's memory and execution costs are prohibitive to LLM scale. CAPO, in contrast, leverages the last layer model and the optimizations described in Section 4.1, resulting in much lower costs, as evaluated in Table \ref{tab:compcost} of Appendix \ref{app:compcost}.

In summary, while TRPO and CAPO share the same motivation and draw from the same seminal work on natural gradients, CAPO offers a formulation that scales to the memory and compute demands of LLM policies.

\end{document}